\documentclass[accepted]{uai2026} % after acceptance, for a revised version; 
\usepackage[british]{babel}

\usepackage{natbib} % has a nice set of citation styles and commands
\usepackage{mathtools} % amsmath with fixes and additions
\usepackage{booktabs} % commands to create good-looking tables
\usepackage{tikz} % nice language for creating drawings and diagrams
\usepackage{microtype}
\usepackage{graphicx}
\graphicspath{{images/}} % images are in uai2026-template/images/
\usepackage{subcaption}
\usepackage{dsfont}
\usepackage{bbm}
\usepackage{hyperref}
\usepackage{fontawesome5}

\usepackage{amsmath}
\usepackage{amssymb}
\usepackage{mathtools}
\usepackage{amsthm}
\usepackage{wrapfig}

\usepackage[capitalize,noabbrev]{cleveref}
\usepackage[algo2e,linesnumbered,ruled,vlined]{algorithm2e}
\theoremstyle{plain}
\newtheorem{theorem}{Theorem}[section]
\newtheorem{proposition}[theorem]{Proposition}
\newtheorem{lemma}[theorem]{Lemma}
\newtheorem{corollary}[theorem]{Corollary}
\theoremstyle{definition}

\newtheorem{assumption}[theorem]{Assumption}
\theoremstyle{remark}
\newtheorem{remark}[theorem]{Remark}

\usepackage[textsize=tiny]{todonotes}

\title{Causal EpiNets: Precision-corrected Bounds on Individual Treatment Effects using Epistemic Neural Networks}

\author[1]{\href{mailto:<gandharv.patil@mail.mcgill.ca>?Subject=Your UAI 2026 paper}{Gandharv Patil}{}\thanks{Work done at RBC Borealis}}
\author[2]{Keyi Tang}
\author[2]{Raquel Aoki}
\author[2]{Leo Guelman}

\affil[1]{%
    McGill University/Mila
}
\affil[2]{%
    RBC Borealis
}

\begin{document}
\maketitle

\begin{abstract}
Individual treatment effects are not point-identified from data. 
The Probability of Necessity and Sufficiency (PNS) circumvents 
this limitation by characterizing individual-level causality through 
intersection bounds derived from combined experimental and observational 
data. In finite samples, however, standard plug-in estimators systematically 
fail: they violate structural probability constraints and suffer from extremum 
bias induced by max–min operators, yielding spuriously narrow intervals. 
We propose a neural framework for finite-sample PNS estimation that resolves 
both pathologies. We introduce an anchored neural architecture that guarantees 
structural constraint satisfaction by construction. To correct extremum bias, 
we employ precision-corrected intersection-bound inference, leveraging Epistemic 
Neural Networks for scalable, high-dimensional uncertainty quantification. 
Empirical evaluations confirm that this approach maintains nominal coverage 
and exact constraint validity in high-dimensional regimes where standard 
estimators systematically undercover.
\end{abstract}

\section{Introduction}
Estimating causal effects at the level of individuals is central in domains
such as healthcare, policy design, and legal decision-making.
While modern machine learning methods have made substantial progress in
estimating conditional average treatment effects (CATEs),
average effects do not directly answer the question that often motivates
individual-level decisions:
\emph{what is the probability that treatment would help this specific
individual?}

Average effects mask heterogeneity in individual responses.
An average effect of zero, for instance, is consistent with both universal
null effects and a mixture of strong positive and negative responders.
For many decisions, the relevant quantity is not an average over similar
individuals, but the probability that treatment would change the outcome
for this individual.

The \emph{Probability of Necessity and Sufficiency} (PNS),
introduced by \citet{tian2000probabilities},
provides a principled characterization of this notion.
In the binary treatment and outcome setting, PNS is the probability that an
individual would experience the outcome under treatment but not under
control.
A key result of \citet{tian2000probabilities} shows that PNS is not point 
identified, but can be sharply bounded by combining experimental and observational data. 
Experimental data identify interventional response probabilities, 
while observational data constrain the joint distribution of treatment and outcome. 
Together, these sources restrict the set of admissible structural causal models. 
A profound consequence of this mathematical structure is that it allows for the 
computation of bounds on the PNS without requiring strong, untestable assumptions 
such as unconfoundedness or positivity—assumptions that otherwise dominate 
the causal inference literature.
These identification results, however, assume exact knowledge of the
relevant probabilities.
In practice, the bounds must be estimated from finite samples, often with
high-dimensional covariates.
A natural approach is to estimate each probability from data and substitute 
these estimates into the bound expressions. Such naive plug-in estimators, however, 
suffer from two fundamental pathologies.
First, they yield probability estimates that are incompatible with any
underlying structural causal model.
Second, PNS bounds are defined through maxima and minima over multiple
estimated terms.
This intersection-bound structure induces systematic extremum bias, causing
lower bounds to be overestimated and upper bounds to be underestimated
\citep{chernozhukov2013intersection}.
Valid inference therefore requires procedures that explicitly account for
the envelope structure of the bounds.

We present the first complete framework for finite-sample estimation and inference of Tian–Pearl bounds on the Probability of Necessity and Sufficiency (PNS) with high-dimensional covariates. Our approach integrates three key components. First, we introduce an anchored, constrained multi-head neural architecture that jointly models observational and interventional quantities while guaranteeing compatibility with probability axioms and a shared structural causal model by construction. Second, we show how PNS bounds fit naturally into the intersection-bounds framework and adapt precision-corrected inference to this setting, yielding confidence intervals that explicitly account for bias induced by max–min operators. Third, we develop a scalable uncertainty quantification procedure based on Epistemic Neural Networks (ENN) \citep{osband2023epistemic}, which provides a practical surrogate for resampling-based inference and enables precision correction in modern over-parameterised models. Together, these components provide the first end-to-end approach to finite-sample PNS-bound estimation with modern machine learning models, and establish a principled baseline for individual-level causal inference beyond average effects.

The remainder of the paper is organized as follows. Section 2 introduces the problem setup and reviews population-level PNS bounds. Section 3 presents the proposed anchored architecture, uncertainty modelling, and precision-corrected inference procedure. Section 4 reports empirical results. Section 5 discusses related work, and Section 6 concludes the paper.

\section{Problem Setup}
\label{sec:pns-setup}

We consider a binary treatment $X \in \{0,1\}$, a binary outcome
$Y \in \{0,1\}$, and pre-treatment covariates $Z \in \mathcal{Z}$.
We work within a structural causal model in which the potential
outcomes $Y_x$ and $Y_{x'}$ are well defined, corresponding to the
outcomes that would be observed under the interventions
$\mathrm{do}(X=1)$ and $\mathrm{do}(X=0)$, respectively.

For each covariate value $z$, we summarize the relevant conditional
quantities as follows. The \emph{interventional success probabilities}
are
\begin{equation}
\begin{aligned}
\mu_1(z) &:= P\!\left(Y = 1 \mid \mathrm{do}(X = 1), Z = z\right), \\
\mu_0(z) &:= P\!\left(Y = 1 \mid \mathrm{do}(X = 0), Z = z\right),
\end{aligned}
\end{equation}
and the \emph{observational joint probabilities} are
\begin{equation}
p_{xy}(z) := P(X = x, Y = y \mid Z = z), \ \  x,y \in \{0,1\},
\end{equation}
so that $p_{10}(z) + p_{11}(z) = P(X = 1 \mid Z = z)$ and
$p_{01}(z) + p_{00}(z) = P(X = 0 \mid Z = z)$.

\subsection{Probability of necessity and sufficiency}

Following Tian and Pearl~\cite{tian2000probabilities}, the
\emph{probability of necessity and sufficiency} (PNS) of $X$ for $Y$
is defined as
\begin{equation}
\mathrm{PNS}
:= P\!\left(Y_x = 1,\; Y_{x'} = 0\right).
\label{eq:pns-def}
\end{equation}
This quantity represents the fraction of units for which the treatment
is both necessary and sufficient for the outcome.

We define the covariate-conditional PNS as
\begin{equation}
\mathrm{PNS}(z)
:= P\!\left(Y_x = 1,\; Y_{x'} = 0 \mid Z = z\right),
\label{eq:pns-z-def}
\end{equation}
and view the marginal PNS as 
$\mathrm{PNS} = \mathbb{E}_Z\!\left[\mathrm{PNS}(Z)\right]$.

\subsubsection{Structural assumptions and constraint system}
\label{subsec:pns-assumptions}

We adopt the structural assumptions used in
\cite{tian2000probabilities}, formulated conditionally on $Z = z$.

\begin{assumption}[{\bf Experimental identification}]
In the experimental regime, treatment assignment is randomized, possibly
conditional on $Z$, so that $X$ is independent of the potential outcomes
given $Z$.
Under this protocol, the interventional probabilities are identified by
\begin{align}
\nonumber \mu_x(z)
&:= P(Y = 1 \mid \mathrm{do}(X = x), Z = z)\\
&= P^{\mathrm{exp}}(Y = 1 \mid X = x, Z = z),
\qquad x \in \{0,1\}.
\end{align}
\end{assumption}

\begin{assumption}[{\bf Stability across regimes}]
Experimental and observational data share the same structural equation
for $Y$ and the same distribution of exogenous variables.
The treatment assignment mechanism for $X$ may differ across regimes:
in the experimental regime, $X$ is assigned at random, while in the
observational regime $X$ may depend on $Z$ and unobserved factors.
Under this shared structural causal model, the observational joint
distribution $(X,Y)\mid Z$ restricts the set of admissible interventional
probabilities $(\mu_0(z), \mu_1(z))$ through the compatibility constraints
\begin{equation}
\begin{aligned}
p_{11}(z) &\le \mu_1(z) \le 1 - p_{10}(z), \\
p_{01}(z) &\le \mu_0(z) \le 1 - p_{00}(z),
\end{aligned}
\label{eq:observational-constraints}
\end{equation}
which must hold for any values of $(\mu_0(z), \mu_1(z))$ consistent with
the observational distribution.
\end{assumption}

\subsubsection{Conditional and marginal PNS bounds}
\label{subsec:pns-bounds}

For a fixed $z$, the conditional PNS is a linear functional of the
latent probabilities. Tian and Pearl derive sharp bounds by solving the
corresponding linear programs.

In our notation, the bounds are
\begin{equation}
\begin{aligned}
\mathrm{PNS}_{\mathrm{L}}(z)
:= \max\Big\{&
0,\;
\mu_1(z) - \mu_0(z), \\
& p_{11}(z) + p_{01}(z) - \mu_0(z), \\
& \mu_1(z) - p_{11}(z) - p_{01}(z)
\Big\},
\end{aligned}
\label{eq:pns-lower-1}
\end{equation}

\begin{equation}
\begin{aligned}
\mathrm{PNS}_{\mathrm{U}}(z)
:= \min\Big\{&
\mu_1(z),\;
1 - \mu_0(z), \\
& p_{11}(z) + p_{00}(z), \\
& \mu_1(z) - \mu_0(z)
  + p_{10}(z) + p_{01}(z)
\Big\}.
\end{aligned}
\label{eq:pns-upper-1}
\end{equation}

These satisfy
\begin{equation}
\mathrm{PNS}_{\mathrm{L}}(z)
\le \mathrm{PNS}(z)
\le \mathrm{PNS}_{\mathrm{U}}(z),
\end{equation}
and the interval is sharp, and
the marginal PNS satisfies
$\mathrm{PNS}
\in \big[\beta_{\mathrm{L}}, 
\beta_{\mathrm{U}}\big]$,
where $\beta_{\mathrm{L}} = \mathbb{E}_{Z}[\mathrm{PNS}_{\mathrm{L}}(z)]$ and $\beta_{\mathrm{U}} = \mathbb{E}_{Z}[\mathrm{PNS}_{\mathrm{U}}(z)]$.

The same framework extends to bounds on the probability of harm
$P(Y_{x'} = 1, Y_x = 0 \mid Z = z)$.
In the binary setting, $\mathrm{PNS}(z)$ coincides with the probability
that the individual treatment effect is positive.
Since this quantity is not point-identified, the Tian--Pearl bounds
provide the tightest possible interval compatible with the observed
data.

\section{Proposed Approach/Contribution}

While exact in the population limit, Tian–Pearl bounds must in practice 
be estimated from finite samples. A naive plug-in approach—fitting separate 
models for experimental and observational distributions—suffers from two 
fundamental structural flaws. First, independently estimated models for $p_{xy}(z)$ 
and $\mu_x(z)$ routinely violate the cross-regime feasibility constraints 
(Eq.~\ref{eq:observational-constraints}). Such estimates fail the compatibility 
requirements of a shared structural causal model, yielding internally inconsistent bounds.

\begin{figure}[ht]
  \centering
  \includegraphics[width=0.8\columnwidth]{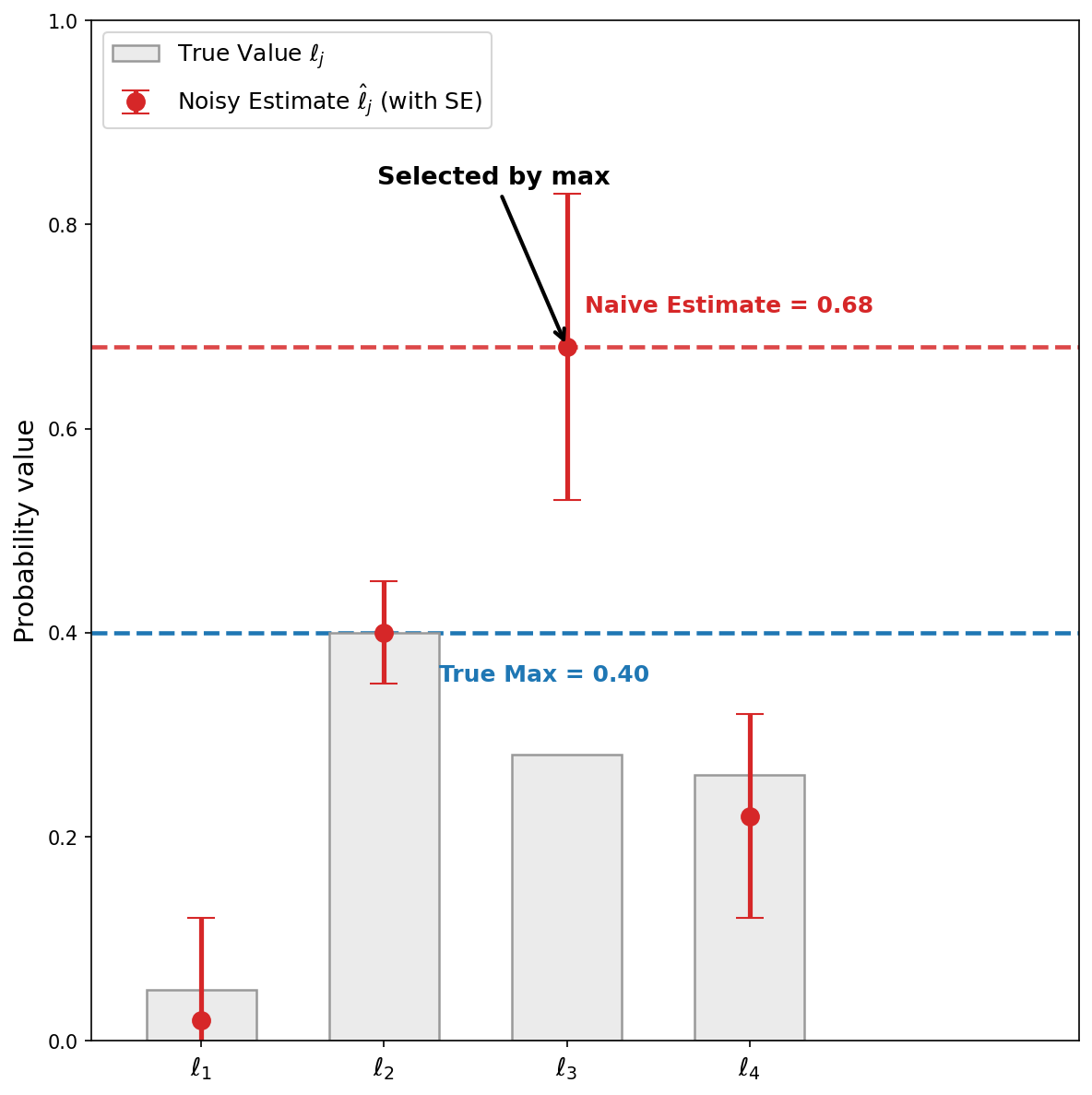}
  \caption{Overestimation bias in the lower bound.}
  \label{fig:bias_illustration}
  \vspace{-1em} % Adjust this value (e.g., -0.1in, -10pt) if you need it tighter
\end{figure}

Second, the max/min structure of PNS bounds induces systematic 
extremum bias even with consistent probability estimates. 
To illustrate (Fig.~\ref{fig:bias_illustration}), let the 
true lower-bound terms be $\ell \in \{0.05, 0.40, 0.28, 0.30\}$, 
yielding $\mathrm{PNS}_L(z) = 0.40$. A noisy estimator yielding 
$\hat{\ell} \in \{0.02, 0.40, 0.68, 0.22\}$ produces a plug-in 
bound of $\hat{\mathrm{PNS}}_L(z) = \max \hat{\ell} = 0.68 > 0.40$. 
Because extremum operators naturally select the most severe estimation 
errors, this "winner's curse" systematically overestimates lower bounds 
and underestimates upper bounds, yielding spuriously tight 
intervals. Valid inference therefore demands a framework that 
guarantees structural causal compatibility, quantifies finite-sample 
uncertainty, and rigorously corrects for extremum bias. 
We achieve this by unifying a constrained multi-head architecture 
with scalable neural bootstrap surrogates \citep{osband2023epistemic} 
and precision-corrected intersection bounds.

\subsection{Ensuring estimates satisfy probability axioms}

The structural constraints (Eq.~\ref{eq:observational-constraints})
impose algebraic relationships between the observational joint probabilities
$p_{xy}(z)$ and the interventional probabilities $\mu_x(z)$.
Rather than estimating these six quantities independently, we design an
architecture that enforces these relationships by construction.
The key idea is to treat the joint probabilities $p_{xy}(z)$ as \emph{anchors}:
primary outputs trained directly on observational data.
The interventional probabilities $\mu_x(z)$ are then parameterized as
constrained functions of these anchors.

Our architecture consists of a shared representation network
$\phi_\theta : \mathcal{Z} \to \mathbb{R}^d$ that maps covariates to a learned
embedding, followed by three output heads.
The first head outputs logits over the four joint outcomes
$(X,Y) \in \{0,1\}^2$, which are passed through a softmax to produce the joint
probabilities $p_{00}(z), p_{01}(z), p_{10}(z), p_{11}(z)$.
The remaining two heads output unconstrained scalar values
$\delta_0(z), \delta_1(z) \in \mathbb{R}$, which are transformed into the
interventional probabilities $\mu_0(z), \mu_1(z)$ via a
constraint-enforcing reparameterization layer.

We enforce the Tian--Pearl constraints by reparameterizing the interventional
probabilities as convex combinations of their feasible bounds:
\begin{align}
\mu_0(z)
  &= p_{01}(z)
     + \bigl(1 - p_{00}(z) - p_{01}(z)\bigr)\,\sigma\bigl(\delta_0(z)\bigr),
  \\
\mu_1(z)
  &= p_{11}(z)
     + \bigl(1 - p_{10}(z) - p_{11}(z)\bigr)\,\sigma\bigl(\delta_1(z)\bigr),
\end{align}
where $\sigma(\cdot)$ is the sigmoid function.
For each fixed $z$, the map $\delta_x(z) \mapsto \mu_x(z)$ is smooth and
strictly increasing, since $\mu_x(z)$ is an affine function of
$\sigma\bigl(\delta_x(z)\bigr)$ and $\sigma$ is smooth and strictly increasing
on $\mathbb{R}$.
Writing the lower and upper feasible bounds for $\mu_x(z)$ as
$\ell_x(z)$ and $u_x(z)$, the derivative
$\partial \mu_x(z) / \partial \delta_x(z)
  = \bigl(u_x(z) - \ell_x(z)\bigr)\,
    \sigma\bigl(\delta_x(z)\bigr)\bigl(1 - \sigma\bigl(\delta_x(z)\bigr)\bigr)$
is strictly positive whenever the interval $[\ell_x(z), u_x(z)]$ is
non-degenerate.
Thus the reparameterization is a smooth, order-preserving bijection from
$\mathbb{R}$ onto the open feasible interval for $\mu_x(z)$.
Gradient-based optimization over $\delta_x(z)$ therefore explores the entire
feasible region for $\mu_x(z)$ while preserving the constraints.
Since $\sigma(\delta) \in (0,1)$, each $\mu_x(z)$ lies within its feasible
interval for any network weights, and the Tian--Pearl consistency constraints
hold automatically.

The joint-probability head is trained on observational data using a
cross-entropy loss over the four-class outcome $(X,Y)$.
The auxiliary heads are trained on experimental data using binary
cross-entropy losses on the derived interventional probabilities
$\mu_0(z)$ and $\mu_1(z)$.
We keep the gradients from these losses separate: the experimental loss
updates only the auxiliary heads, treating the joint probabilities as fixed
when computing the feasible bounds.
This preserves a clean anchoring of $\mu_x(z)$ to $p_{xy}(z)$ while still
allowing the representation $\phi_\theta$ to be shared across tasks.

\subsection{Bias correction and uncertainty quantification}

Our inference procedure builds on the intersection bounds methodology of
\cite{chernozhukov2013intersection}.
Let $\psi^\star$ denote a scalar target parameter that is only partially
identified.
Suppose $\psi^\star$ is known to lie in the intersection
\begin{align}
\psi^\star
  \in \bigcap_{v \in \mathcal{V}}
     \bigl[\,\psi_\ell(v),\,\psi_u(v)\,\bigr],
\end{align}
where $\psi_\ell(v)$ and $\psi_u(v)$ are lower and upper bounding functions
indexed by $v \in \mathcal{V}$.
Equivalently, the identified set for $\psi^\star$ is the interval
$[\psi_\ell^\star, \psi_u^\star]$ with
\begin{align}
\psi_\ell^\star
  = \sup_{v \in \mathcal{V}} \psi_\ell(v),
\qquad
\psi_u^\star
  = \inf_{v \in \mathcal{V}} \psi_u(v).
\end{align}

Given estimators $\hat{\psi}_\ell(v)$ and $\hat{\psi}_u(v)$ with pointwise
standard errors $s_\ell(v)$ and $s_u(v)$, a naive plug-in approach forms
$\sup_{v} \hat{\psi}_\ell(v)$ and $\inf_{v} \hat{\psi}_u(v)$.
These plug-in estimators are systematically biased.
The supremum tends to pick out values of $v$ where estimation noise is positive
($\hat{\psi}_\ell(v) > \psi_\ell(v)$), producing an upward-biased estimate of the
lower bound.
Conversely, the infimum tends to pick out values where noise is negative,
producing a downward-biased estimate of the upper bound.
The resulting bounds are therefore spuriously tight relative to the true
identified set.

The precision-corrected estimators of \cite{chernozhukov2013intersection}
adjust for this \textbf{optimization bias} by penalizing regions of high uncertainty.
The uncertainty at each $v$ is summarized by $s_\ell(v)$ and $s_u(v)$, which
measure the sampling variability of $\hat{\psi}_\ell(v)$ and
$\hat{\psi}_u(v)$.
The precision-corrected lower and upper bound estimators are defined as

\begin{minipage}{0.48\linewidth}
\begin{equation}
\hat{\psi}_{\ell,\alpha}
  = \sup_{v \in \mathcal{V}}
    \bigl[\hat{\psi}_\ell(v) - \kappa_\alpha\,s_\ell(v)\bigr]
\end{equation}
\end{minipage}\hfill
\begin{minipage}{0.48\linewidth}
\begin{equation}
\hat{\psi}_{u,\alpha}
  = \inf_{v \in \mathcal{V}}
    \bigl[\hat{\psi}_u(v) + \kappa_\alpha\,s_u(v)\bigr]
\end{equation}
\end{minipage}
where $\kappa_\alpha$ is a critical value chosen to ensure coverage at level
$1-\alpha$.
Intuitively, the penalty $\kappa_\alpha\,s_\ell(v)$ (resp.\ $\kappa_\alpha\,s_u(v)$)
downweights regions of $\mathcal{V}$ where the estimator is imprecise,
preventing the supremum (infimum) from concentrating on points with
favourable but unreliable estimation error.
The theoretical choice of $\kappa_\alpha$ is based on the standardized error
process.
Focusing on the upper bound (the lower bound is analogous), define
\begin{equation}
T_n(v)
  = \frac{\hat{\psi}_u(v) - \psi_u(v)}{s_u(v)}.
\end{equation}
Under regularity conditions, \citeauthor{chernozhukov2013intersection} show
that $T_n(v)$ can be strongly approximated by a Gaussian process
$T_n^\star(v)$ satisfying
$\sup_{v \in \mathcal{V}}
  \bigl|T_n(v) - T_n^\star(v)\bigr|
  \xrightarrow{p} 0$.
This approximation allows $\kappa_\alpha$ to be chosen as the
$(1-\alpha)$-quantile of $\sup_{v \in \mathcal{V}} T_n^\star(v)$.
In practice, one estimates the covariance structure of $T_n^\star$ from the
data, simulates $B$ draws from the approximating Gaussian process, computes
the simulated suprema, and takes their empirical $(1-\alpha)$-quantile as
$\kappa_\alpha$.

\paragraph{Coverage guarantee.}
Let $\psi_u^\star = \inf_{v \in \mathcal{V}} \psi_u(v)$ denote the true
upper bound.
The precision-corrected estimator $\hat{\psi}_{u,\alpha}$ satisfies
\begin{equation}
P\bigl(\psi_u^\star \leq \hat{\psi}_{u,\alpha}\bigr)
  \geq 1 - \alpha + o(1).
\end{equation}
By construction, $\kappa_\alpha$ is the $(1-\alpha)$ - quantile of
$\sup_{v \in \mathcal{V}} T_n^\star(v)$, and the Gaussian approximation
implies
$P(\sup_{v} T_n(v) > \kappa_\alpha) \approx P(\sup_{v} T_n^\star(v) > \kappa_\alpha)
  = \alpha$.
Hence the interval
$[\hat{\psi}_{\ell,\alpha}, \hat{\psi}_{u,\alpha}]$ achieves asymptotically
valid coverage for the identified set of $\psi^\star$ at level $1-\alpha$.

\paragraph{Application to PNS bounds.}
The Tian--Pearl bounds on $\mathrm{PNS}(z)$ take the form of a maximum over
four lower bound terms and a minimum over four upper bound terms:
\begin{minipage}{0.48\linewidth}
\begin{equation}
L(z)
  = \max_{j \in \{1,\ldots,4\}} \ell_z(j)
\end{equation}
\end{minipage}\hfill
\begin{minipage}{0.48\linewidth}
\begin{equation}
U(z)
  = \min_{k \in \{1,\ldots,4\}} u_z(k),
\end{equation}
\end{minipage}

\noindent
where each $\ell_z(j)$ and $u_z(k)$ is a known function of the observational
and interventional probabilities at covariate value $z$.
This is a finite dimensional instance of the intersection bounds problem.
The precision correction serves two roles in this application.
As an estimator, it removes the systematic bias induced by the max and
min operators selecting components with favourable noise; this debiasing
holds for any estimator with pointwise standard errors.
As an inference procedure, it provides simultaneous coverage of the
bounding functionals which, combined with the algebraic validity of the
Tian--Pearl inequalities at the true distribution, yields coverage of
$\mathrm{PNS}(z)$.
This second guarantee requires the regularity conditions R1--R3
established in Appendix~\ref{sec:pns-theory}, where we show that
M-estimation regularity on the heads of our anchored network is
sufficient (Proposition~\ref{prop:uniform-mest},
Corollary~\ref{cor:pns-parametric},
Theorem~\ref{thm:uniform-pns}).
To our knowledge, this provides the first formal coverage guarantee
for PNS bounds using finite-sample estimation.

To apply the precision corrected framework, we require estimates of the standard errors $s_{\ell_z}(j)$ and $s_{u_z}(m)$, as well as the critical values $\kappa_\alpha$ which depend on the joint distribution of the underlying influence-function process.
The classical approach estimates standard errors using the asymptotic variance of the estimator. This variance is obtained via an influence-function representation that requires Hessian inversion. Critical values are then computed using a multiplier bootstrap over the resulting influence-function process.
For a network with $p$ parameters and $n$ samples, this incurs a cost
of $O(np^2 + p^3)$ per covariate value. In high-dimensional causal estimation where $p \gg 10^5$, this is computationally intractable.
To make the precision correction practical in this regime, we require an alternate approach for estimating $s_{\ell_z}(j)$ and $s_{u_z}(m)$ that scales favourably with model size while capturing epistemic uncertainty arising from finite data. The next section presents such a method.

\vspace{-1em} % Adjust the negative value as needed
\subsubsection{Epistemic Neural Networks for Precision-Corrected Inference}

To avoid the prohibitive computational cost of Hessian-based influence-function
methods, we adopt an alternative framework based on Epistemic Neural Networks
(ENN) \citep{osband2023epistemic}.
Our use of ENNs is motivated by a frequentist perspective.
Rather than interpreting epistemic variability as a Bayesian posterior,
we use the ENN as an amortized mechanism for generating randomized
perturbations of a fitted estimator, in the same spirit as classical
multiplier bootstrap procedures.
This perspective emphasizes the role of ENNs as a scalable approximation
to resampling-based inference, rather than as a model of belief uncertainty.
In the following sections, we describe how the ENN architecture can be
structured to produce coherent perturbations of the empirical estimator,
providing a practical substitute for explicit bootstrap reweighting in
high-dimensional settings.

The ENN framework augments a base network with an epinet and an explicit epistemic index.
This construction is agnostic to the internal architecture of the base network and allows us to sample from our constrained multi-head model without architectural changes.
In our setting, ENNs are used to generate stochastic perturbations of a fitted estimator that approximate finite-sample variability \citep{osband2018randomized,osband2023epistemic}.
These properties make ENNs a natural choice for our precision-corrected PNS bounds.

A standard neural network defines a function
$f_\theta \colon \mathcal{Z} \to \mathbb{R}^k$.
An ENN extends this to a map
$f_\theta \colon \mathcal{Z} \times \mathcal{E} \to \mathbb{R}^k$, where
$\mathcal{E}$ is the space of epistemic indices and $P_\zeta$ is a reference
distribution on $\mathcal{E}$.
For fixed $z$, the distribution of $f_\theta(z,\zeta)$ under
$\zeta \sim P_\zeta$ represents the model's epistemic uncertainty about the
prediction at $z$.
When we consider a sequence of inputs $z_1,\ldots,z_\tau$, the ENN defines a
joint predictive distribution
\begin{equation}
\hat{P}_{1:\tau}(y_{1:\tau})
  = \int_{\mathcal{E}}
      \prod_{t=1}^\tau P\bigl(y_t \mid f_\theta(z_t,\zeta)\bigr)
    \, dP_\zeta(\zeta).
\end{equation}
Dependence of the predictions across $t$ through a shared $\zeta$ encodes
reducible uncertainty that is coherent across inputs.
In our setting the ENN outputs the six probability estimates that define the
Tian--Pearl bounds.
For each $(z,\zeta)$ we obtain
\[
f_\theta(z,\zeta)
=
\bigl(
p_{xy}(z;\theta,\zeta)_{x,y\in\{0,1\}},
\;
\mu_0(z;\theta,\zeta),
\;
\mu_1(z;\theta,\zeta)
\bigr).
\]
and the anchored architecture ensures that these probabilities satisfy the
algebraic constraints for every $\zeta$.
For a fixed $z$ we draw $\zeta_1,\ldots,\zeta_M \sim P_\zeta$.
For each $m$ we compute the four lower bound terms $\ell_z^{(m)}(j)$ and four
upper bound terms $u_z^{(m)}(k)$ from the corresponding probabilities.
We approximate the mean and standard deviation of each bound term under the
epistemic distribution by Monte Carlo averages:
\begin{equation}
\resizebox{\columnwidth}{!}{%
$\begin{aligned}
\bar{\ell}_z(j)
  &= \frac{1}{M}
     \sum_{m=1}^M \ell_z^{(m)}(j),
&
s_{\ell_z}(j)
  &= \sqrt{
        \frac{1}{M-1}
        \sum_{m=1}^M
          \bigl(\ell_z^{(m)}(j) - \bar{\ell}_z(j)\bigr)^2
      },
\\
\bar{u}_z(k)
  &= \frac{1}{M}
     \sum_{m=1}^M u_z^{(m)}(k),
&
s_{u_z}(k)
  &= \sqrt{
        \frac{1}{M-1}
        \sum_{m=1}^M
          \bigl(u_z^{(m)}(k) - \bar{u}_z(k)\bigr)^2
      }.
\end{aligned}$%
}
\end{equation}
The quantities $\bar{\ell}_z(j)$ and $\bar{u}_z(k)$ play the role of
$\hat{\psi}_\ell(v)$ and $\hat{\psi}_u(v)$ in the generic intersection
bounds notation.
The quantities $s_{\ell_z}(j)$ and $s_{u_z}(k)$ are ENN based analogues of
the standard errors $s_\ell(v)$ and $s_u(v)$. Note that compared to the $O(np^2 + p^3)$ Hessian operations required by influence function computation, ENNs require only $O(M)$ forward passes, where $M$ is the number of epistemic index samples at test time.

\paragraph{Critical value computation.}
The precision corrected framework also requires a joint critical value that
respects the dependence structure across the bound terms.
In \cite{chernozhukov2013intersection} this critical value is derived from
a Gaussian approximation to the studentized estimation process.
In our setting we cannot construct such a process directly, because we do
not have an influence function representation.
Instead we use the empirical distribution of ENN samples as a surrogate.

For each epistemic draw $m$ we compute standardized deviations
{\small
\begin{equation}
  \tilde{W}_z^{(m)}(j) = \frac{\ell_z^{(m)}(j) - \bar{\ell}_z(j)}{s_{\ell_z}(j)},
  \quad
  \tilde{V}_z^{(m)}(m) = \frac{u_z^{(m)}(m) - \bar{u}_z(m)}{s_{u_z}(m)}.
\end{equation}
}
We then form the maxima and minima

{
\small
\begin{equation}
  \tilde{W}_z^{L,(m)} = \max_{j \in \{1,\ldots,4\}} \tilde{W}_z^{(m)}(j),
  \quad
  \tilde{W}_z^{U,(m)} = \max_{m \in \{1,\ldots,4\}} \tilde{V}_z^{(m)}(m).
\end{equation}
}

The critical value $\kappa_\alpha^L$ is chosen as the empirical
$(1-\alpha)$ quantile of the sample
$\{\tilde{W}_z^{L,(m)}\}_{m=1}^M$.
The critical value $\kappa_\alpha^U$ is defined analogously from
$\{\tilde{W}_z^{U,(m)}\}_{m=1}^M$.
This construction mirrors the Gaussian process simulation step in
\cite{chernozhukov2013intersection}, but uses the ENN induced empirical
distribution instead of an analytic Gaussian coupling.

\paragraph{Precision-corrected PNS bounds.}
The precision corrected PNS bounds at covariate value $z$ penalise terms
with high epistemic variability.
They are given by

\begin{align}
    \hat{L}_{\alpha, z}
  &= \max_{j \in \{1,\ldots,4\}}
      \bigl[
        \bar{\ell}_z(j)
        - \kappa_\alpha^L\,s_{\ell_z}(j)
      \bigr]\\
\hat{U}_{\alpha, z}
  &= \min_{m \in \{1,\ldots,4\}}
      \bigl[
        \bar{u}_z(m)
        + \kappa_\alpha^U\,s_{u_z}(m)
      \bigr].    
\end{align}

\noindent
The interval $[\hat{L}_\alpha(z), \hat{U}_\alpha(z)]$ provides a
precision corrected estimate of the identified set for $\mathrm{PNS}(z)$
that adjusts for selection bias induced by the max and min operators.
Complete pseudocode for the proposed procedure is provided in 
Appendix~\ref{app:exp-details} - Algorithm \ref{alg:pns_bounds}

\paragraph{Why ENNs?}
ENNs provide a scalable mechanism to generate the jointly coherent, 
function-level perturbations required for multiplier-bootstrap-style 
calibration. In classical frequentist theory, sampling variability is 
typically approximated via the influence function 
$\psi(x) = H^{-1} \nabla_{\theta} \ell(x,\hat{\theta})$. 
Rather than explicitly linearizing the estimator or inverting 
the Hessian $H$, ENNs directly produce stochastic functional 
realizations that perturb the empirical risk minimizer. 
Constrained by a shared architecture, these realizations enforce 
strict functional coherence across covariates and estimands, 
yielding a practical approximation of finite-sample variability. 
This function-space perspective is grounded in Neural Tangent Kernel 
(NTK) theory, which establishes asymptotically linear, jointly 
Gaussian limits for over-parameterized networks \citep{jacot2018neural}, 
and aligns with coherent representations of predictive uncertainty 
\citep{charpentier2020posterior}. While not formally equivalent 
to the multiplier bootstrap, we posit that ENNs offer a computationally 
efficient surrogate for the joint variability needed to calibrate 
the precision-corrected critical value $k_{\alpha}$, a conjecture 
empirically validated in Section 4.

\vspace{-0.5em} % Adjust the negative value as needed

\section{Experimental Evaluation}\label{sec:experiments}

This section evaluates the finite-sample behavior of machine learning estimators for PNS bounds.
The experiments are designed to answer four questions. First, do
standard plug-in estimators produce probability estimates that satisfy the algebraic
constraints of the Tian–Pearl system, and does constraint violation translate into
degraded PNS bound quality? Second, how much of the coverage gain is attributable
to precision correction alone, and how severe is the extremum bias without it? Third,
does the EpiNets uncertainty estimator produce coverage comparable to the
multiplier bootstrap with exact influence functions, and at what computational cost?
Fourth, how does bound quality vary with the amount of available experimental data?

\subsection{Data-Generating Processes and Protocol}
 We evaluate our approach in two settings. In both, the structural causal model (SCM) is fully specified, 
 but the learner observes only a subset of covariates. This induces hidden confounding while allowing 
 exact computation of ground-truth targets.
The first setting uses the synthetic SCM from \citep{li2022probabilities}, featuring a binary treatment $X$, 
a binary outcome $Y$, and a 20-dimensional binary covariate vector $Z$. The learner observes 15 
dimensions of $Z$; the remaining 5 are hidden confounders.
The second setting uses 200 real-valued covariates from the \citet{ACIC2019} dataset, 
augmented with 5 synthetic dimensions. Treatments and potential outcomes are 
generated from an SCM over all 205 variables. The learner observes only the 
original 200 covariates, leaving the 5 synthetic dimensions hidden.
Because the data-generating SCM is known in both settings, the ground-truth $\text{PNS}(z)$ 
and sharp Tian–Pearl bounds are exact. This permits direct evaluation of bound quality and 
coverage. All results are evaluated on disjoint test splits and averaged over $K$ 
independent replicates. Additional details regarding the data generating processes 
can be found in Appendix~\ref{app:dgp-low-dim}, and \ref{app:dgp-high-dim}.

% Samples from Epistemic Neural Networks propagate uncertainty from estimated primitives to the PNS bounds, and precision correction is applied at the level of the bound construction.

\textbf{Metrics.} For each test point $z_i$, the evaluated method outputs an interval $[\ell_i, u_i]$. 
Given the true $\mathrm{PNS}(z_i)$ and oracle identification region $[L(z_i), U(z_i)]$, 
we report: \textit{Constraint violation rate}, the fraction of points where estimated probabilities 
violate the constraints given in Eq.~\ref{eq:observational-constraints}; \textit{Point coverage} 
and \textit{Identified set coverage}, the fractions of points where 
$\mathrm{PNS}(z_i) \in [\ell_i, u_i]$ and $[L(z_i), U(z_i)] \subseteq [\ell_i, u_i]$ respectively, 
both targeting a 95\% nominal level; \textit{Bound width}, the average length $u_i - \ell_i$; 
\textit{Interval score} ~\cite{winkler1972decision}, a proper scoring rule penalising mis-coverage and width. Full definitions are in Appendix~\ref{app:metrics}.
\vspace{-1em} % Adjust the negative value as needed
\subsection{Methods}

We compare four estimators structured as an ablation: each adds one component of the proposed method, allowing the contribution of constraint enforcement and precision correction to be isolated.

\begin{table*}[htbp] % The asterisk (*) makes it span both columns
\caption{Results on low-dimensional (top) and high-dimensional (bottom) datasets.}
\label{tab:results}
\vskip 0.1in
\begin{center}
\begin{small}
\begin{sc}
    \begin{tabular}{lccccc}
    \toprule
    Method & \% Valid (\faThumbsUp\,\faArrowUp) & Pt Cov.\ (\faThumbsUp\,\faArrowUp) & ID cov (\faThumbsUp\,\faArrowUp) & Width (\faThumbsDown\,\faArrowDown) & Int.\ Score (\faThumbsDown\,\faArrowDown) \\
    \midrule
    \multicolumn{6}{c}{\textit{Low-dimensional synthetic}} \\
    \midrule
    S-learner      & 76.29 {\tiny [75.81, 76.78] }  & 74.52 {\tiny [73.90, 75.13] }    & 22.49 {\tiny [22.21, 22.77] } & 0.1730 {\tiny [0.1716, 0.1745] } & 0.7295 {\tiny [0.7132, 0.7458] } \\
    T-learner      & 74.63 {\tiny [74.16, 75.11] }  & 72.39 {\tiny [71.82, 72.95] }    & 27.46 {\tiny [27.22, 27.70] } & 0.1782 {\tiny [0.1772, 0.1793] } & 0.9874 {\tiny [0.9695, 1.0054] } \\
    Anchored       & 100                            & 82.34 {\tiny [82.14, 82.54] }    & 36.36 {\tiny [36.11, 36.61] } & 0.2041 {\tiny [0.2001, 0.2081] } & 0.8102 {\tiny [0.8062, 0.8142] } \\
    MB Full        & 100                            & 98.50 {\tiny [98.10, 98.90] }    & 97.40 {\tiny [97.0, 97.8] }   & 0.3695 {\tiny [0.3665, 0.3725] } & 0.6385 {\tiny [0.6365, 0.6405] } \\
    MB Last Lyr.   & 100                            & 88.10 {\tiny [87.70, 88.30] }    & 73.41 {\tiny [73.38, 73.44] } & 0.2045 {\tiny [0.2015, 0.2075] } & 0.5112 {\tiny [0.5072, 0.5152] } \\
    Anch.\ EpiNet. & 100                            & \textbf{98.19} {\tiny [97.99, 98.59] }    & \textbf{95.59} {\tiny [95.30, 95.80] } & \textbf{0.3303} {\tiny [0.3263, 0.3323] } & \textbf{0.3471} {\tiny [0.3451, 0.3501] } \\
    \midrule
    \multicolumn{6}{c}{\textit{High-dimensional ACIC}} \\
    \midrule
    Anchored       & 100                            & 85.80 {\tiny [85.7, 69.1] }      & 21.50 {\tiny [20.4, 22.6] }   & 0.1630 {\tiny [0.160, 0.165] }   & 0.7815 {\tiny [0.7820, 0.7810] } \\
    MB Last Lyr.   & 100                            & 87.50 {\tiny [85.5, 89.5] }      & 35.40 {\tiny [33.4, 37.4] }   & 0.2010 {\tiny [0.180, 0.220] }   & 0.7200 {\tiny [0.7000, 0.7400] } \\
    Anch.\ EpiNet. & 100                            & \textbf{97.30} {\tiny [96.80, 97.70] } & \textbf{86.00} {\tiny [85.3, 86.7] }   & \textbf{0.3060} {\tiny [0.301, 0.310] }   & \textbf{0.3630} {\tiny [0.3520, 0.3740] } \\
    \bottomrule
    \end{tabular}
\end{sc}
\end{small}
\end{center}
\vskip -0.1in
\end{table*}

\textbf{S-learner and T-learner plug-in.} These are standard baselines that estimate the quantities entering the Tian--Pearl bounds independently, without any structural coordination. The S-learner fits a single model for $P(Y=1 \mid X, Z)$ and obtains $\hat{\mu}_1(z)$ and $\hat{\mu}_0(z)$ by evaluating it at $X=1$ and $X=0$ respectively. The T-learner fits separate models on the treated and control arms to estimate $\mu_1(z)$ and $\mu_0(z)$ directly. In both cases, the joint observational probabilities $p_{xy}(z) = P(X=x, Y=y \mid Z=z)$ are estimated by a separate multinomial model trained on the observational data. All estimates are substituted directly into the Tian--Pearl formulas with no constraint enforcement and no precision correction.

\textbf{Anchored network (no CLR correction).} The anchored architecture introduced in Section 3.1 jointly estimates all six atomic probabilities $(\mu_1, \mu_0, p_{11}, p_{10}, p_{01}, p_{00})$ and enforces the Tian--Pearl consistency constraints by construction via the reparameterisation in Eq.~(10)--(11). Point estimates are plugged into the bound formulas to obtain interval endpoints. No precision correction is applied. This baseline isolates the contribution of constraint enforcement from that of the CLR correction: any improvement over the plug-in methods is due to the anchored architecture alone, and any remaining gap to the full method is attributable to the absence of precision correction.

\textbf{Anchored + ENN + CLR (full method) MB Full.} The complete proposed method as described in Section 3: the anchored architecture with an ENN for epistemic uncertainty quantification, combined with precision-corrected inference via the CLR construction.

\textbf{Last-layer influence function bootstrap (MB Last Lyr).} For the high-dimensional comparison in Section 4.7, we additionally include a frequentist uncertainty estimator that approximates the influence function by differentiating only the final linear layer of the anchored network, reducing the Hessian solve to a $p_{\mathrm{last}}$-dimensional problem. Standard errors derived from a multiplier bootstrap over these approximate influence functions are used in place of ENN-based epistemic uncertainty, with the same CLR precision correction applied. Full network influence function computation is not run in this setting due to the $O(np^2 + p^3)$ cost of Hessian inversion, which is prohibitive at the network sizes required for the ACIC covariate structure.

Additional details regarding the experimental setup can be found
in Appendix~\ref{app:exp-details}.
\vspace{-1em} % Adjust the negative value as needed
\subsection{Results}

\begin{figure}[htbp]
\centering
\includegraphics[width=\columnwidth]{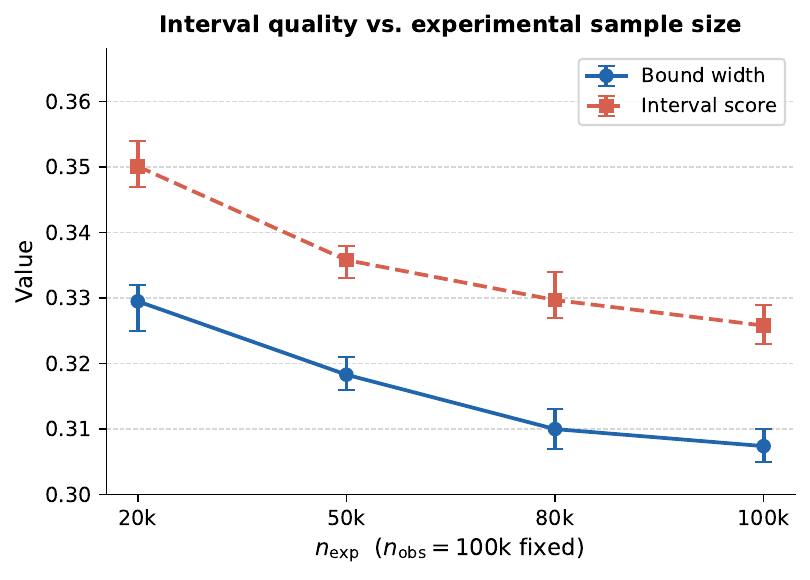}
\caption{Bound width and interval score vs.\ experimental sample size
($n_\mathrm{obs} = 100\mathrm{k}$ fixed). Both decrease monotonically as
$n_\mathrm{exp}$ grows, with the gap between them remaining stable.}
\label{fig:sens_int_nexp}
\end{figure}

\begin{figure}[htbp]
\centering
\includegraphics[width=\columnwidth]{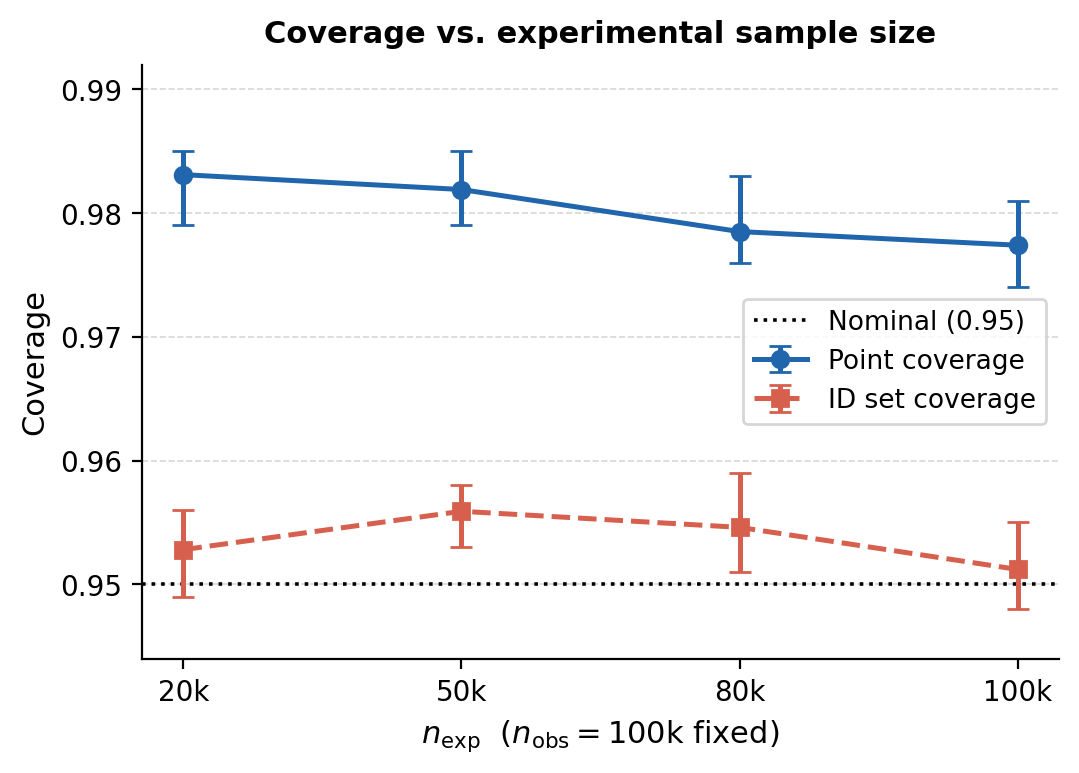}
\caption{Coverage vs.\ joint sample size ($n_\mathrm{obs} = n_\mathrm{exp}$).
ID set coverage remains near nominal across all balanced regimes.}
\label{fig:sens_cov_balanced}
\end{figure}

\vspace{-1em} % Adjust the negative value as needed

Table \ref{tab:results} reports results for the low-dimensional Li et al. benchmark ($n_\mathrm{obs}$ = 100k, $n_\mathrm{exp}$ = 50k) and the high-dimensional ACIC setting ($n_\mathrm{obs}$ = 2M, $n_\mathrm{exp}$ = 1M). Bracketed values indicate 95\% confidence intervals over 1000 independent replicates. Replicates for the low dimensional setting use fresh samples from the exact SCM, and the ACIC replicates are drawn by subsampling a fixed super-population of $3$M observational and $3$M experimental records. In the low dimensional setting, S- and T-learner plug-ins achieve point coverage of $0.745$ and $0.724$ (interval scores $0.730$, $0.987$); their narrow widths reflect infeasible probability configurations rather than sharpness. The anchored network eliminates constraint violations and improves coverage to 0.823, but its interval score (0.810) confirms constraint enforcement alone is insufficient. Causal EpiNet achieves the best performance: valid point coverage at $0.982$ $[0.980, 0.986]$, nominal identified set coverage at $0.956 \ [0.953, 0.958]$, and a score of 0.347. The full-network bootstrap achieves comparable point coverage (0.985) but produces conservative, overly wide intervals (width 0.370 vs. 0.330), resulting in a substantially worse score $(0.639)$. The last-layer bootstrap undercovers at $0.881 \  [0.877, 0.883]$, consistent with uncaptured representation uncertainty. In the high-dimensional ACIC setting, computing the full-network influence function is computationally intractable; it is therefore omitted. Causal EpiNet uniquely maintains valid point coverage $(0.973 \ [0.968, 0.977])$ with a score of $0.363$. Its identified set coverage drops to $0.860$, isolating residual high-dimensional bias in estimating $p_{xy}(z)$ and $\mu_x(z)$ rather than precision miscalibration. The uncorrected estimator undercovers at $0.858 [0.857, 0.891]$ score $(0.782)$. The last-layer bootstrap undercovers at $0.875$ $[0.855, 0.895]$ (score $0.720$); computing influence functions only through the final linear layer produces an incomplete estimate of the asymptotic variance of $\hat{p}_{xy}(z)$ and $\hat{\mu}_x(z)$, which systematically underestimates $s_\ell(j)$ and $s_u(k)$.

We evaluate Causal EpiNet across $(n_{\mathrm{obs}}, n_{\mathrm{exp}})$ regimes to assess behavior under the practically common condition of scarce experimental data. Figures~\ref{fig:sens_cov_balanced} and~\ref{fig:sens_int_nexp} show that as $n_{\mathrm{exp}}$ increases from 20k to 100k with $n_{\mathrm{obs}} = 100\text{k}$ fixed, identified set coverage remains nominal throughout while bound width decreases monotonically. This confirms that ENN uncertainty estimates correctly track experimental data availability without sacrificing calibration. Full results, including balanced scaling up to 1M/1M, are reported in Appendix~\ref{app:sensitivity}. To assess computational efficiency, we record the training and inference wall-clock times for all evaluated methods. Because they require only a single forward pass rather than repeated model retraining or costly influence function evaluations, ENNs incur substantially lower computational overhead than the bootstrap baselines. A comprehensive runtime breakdown is deferred to the Appendix~\ref{app:compute}.

Note that the point coverage intrinsically exceeds identified (ID) set coverage. Because $\mathrm{PNS}(z) \in [L(z), U(z)]$ deterministically, $[L(z_i), U(z_i)] \subseteq [\ell_i, u_i]$ implies $\mathrm{PNS}(z_i) \in [\ell_i, u_i]$, but not conversely; an interval can miss an oracle endpoint yet still cover the true parameter. The CLR correction specifically targets point coverage. Thus, nominal ID set coverage in the low-dimensional setting reflects precise first-stage estimation rather than inferential conservatism. In the ACIC benchmark, residual bias in $\hat{p}_{xy}(z)$ and $\hat{\mu}_x(z)$ degrades ID set coverage to $0.860$, but point coverage remains valid at $0.973$ because the true $\mathrm{PNS}(z)$ lies strictly interior to the estimated bounds. Sub-nominal ID set coverage therefore diagnoses first-stage estimation error, not inferential failure.

\vspace{-0.5em}
\section{Discussion and Conclusion}
\vspace{-0.5em}
We have presented Causal EpiNets, a framework for finite-sample inference on
individual-level causal effects via PNS bounds. By combining a constrained anchored
architecture with ENN-based uncertainty quantification and CLR precision-corrected
inference, the method achieves near-nominal coverage without requiring unconfoundedness
or positivity on the observational regime — assumptions that are routinely violated in
practice and that methods based on CATE estimation cannot dispense with. The only
requirements are access to experimental and observational data from the same
population and the stability assumption of Section~2. The framework extends directly
to the probability of harm $P(Y_{x'} = 1, Y_x = 0 \mid Z = z)$ and, together with
PNS, provides a complete characterisation of individual-level treatment effect
heterogeneity in the binary setting. We view this as a strong baseline for a line of
work that has lacked practical estimation methods despite a rich 20-year-old
identification theory.

\textbf{Limitations.} The current framework is restricted to binary treatments 
and outcomes. While theoretical progress exists for multi-valued and continuous 
treatments~\citep{kawakami24a, li2024probabilities_nonbinary}, developing a 
practical estimation procedure remains an open problem. 
Furthermore, scalability to high-dimensional unstructured covariates, such as 
text or images, requires further investigation. In these settings, representation 
learning becomes the dominant challenge. Although our anchored architecture 
enforces consistency constraints over a learned embedding, the generalization of 
these embeddings and the behavior of ENN epistemic uncertainty over them require 
future study.

\textbf{Future work.} A primary obstacle to progress in this domain is the absence of 
large-scale benchmark datasets combining experimental and observational data. This 
data scarcity restricts empirical evaluation to synthetic and semi-synthetic environments. 
We encourage the development of public benchmarks to facilitate the rigorous evaluation 
of individual-level causal inference methods. Methodologically, natural next steps 
include extending identification and estimation theory beyond the binary setting 
and incorporating modern representation learning for multi-modal and document-level 
covariates into the anchored ENN framework.

\newpage
\bibliography{uai2026-template}

@article{tian2000probabilities,
  author    = {Tian, Jin and Pearl, Judea},
  title     = {Probabilities of Causation: Bounds and Identification},
  journal   = {Annals of Mathematics and Artificial Intelligence},
  volume    = {28},
  number    = {1--4},
  pages     = {287--313},
  year      = {2000},
  doi       = {10.1023/A:1018912507879},
  note      = {\url{https://ftp.cs.ucla.edu/pub/stat_ser/r271-A.pdf}}
}

@article{chernozhukov2013intersection,
  author    = {Chernozhukov, Victor and Lee, Sokbae and Rosen, Adam M.},
  title     = {Intersection Bounds: Estimation and Inference},
  journal   = {Econometrica},
  volume    = {81},
  number    = {2},
  pages     = {667--737},
  year      = {2013},
  doi       = {10.3982/ECTA8718}
}

@article{chernozhukov2013gaussian,
  author    = {Chernozhukov, Victor and Chetverikov, Denis and Kato, Kengo},
  title     = {Gaussian Approximations and Multiplier Bootstrap for Maxima of Sums of High-Dimensional Random Vectors},
  journal   = {Annals of Statistics},
  volume    = {41},
  number    = {6},
  pages     = {2786--2819},
  year      = {2013},
  doi       = {10.1214/13-AOS1161}
}

@book{vandervaart1998asymptotic,
  author    = {{van der Vaart}, A. W.},
  title     = {Asymptotic Statistics},
  series    = {Cambridge Series in Statistical and Probabilistic Mathematics},
  publisher = {Cambridge University Press},
  address   = {Cambridge},
  year      = {1998},
  doi       = {10.1017/CBO9780511802256}
}

@book{vandervaart1996weak,
  author    = {{van der Vaart}, A. W. and Wellner, Jon A.},
  title     = {Weak Convergence and Empirical Processes: With Applications to Statistics},
  series    = {Springer Series in Statistics},
  publisher = {Springer},
  address   = {New York},
  year      = {1996},
  doi       = {10.1007/978-1-4757-2545-2}
}

@book{bickel1993efficient,
  author    = {Bickel, Peter J. and Klaassen, Chris A. J. and Ritov, Ya'acov and Wellner, Jon A.},
  title     = {Efficient and Adaptive Estimation for Semiparametric Models},
  series    = {Johns Hopkins Series in the Mathematical Sciences},
  publisher = {Johns Hopkins University Press},
  address   = {Baltimore},
  year      = {1993}
}

@incollection{newey1994largesample,
  author    = {Newey, Whitney K. and McFadden, Daniel},
  title     = {Large Sample Estimation and Hypothesis Testing},
  booktitle = {Handbook of Econometrics},
  editor    = {Engle, Robert F. and McFadden, Daniel L.},
  volume    = {4},
  chapter   = {36},
  pages     = {2111--2245},
  publisher = {Elsevier},
  year      = {1994},
  doi       = {10.1016/S1573-4412(05)80005-4}
}

@article{white1989JASA,
  author    = {White, Halbert},
  title     = {Some Asymptotic Results for Learning in Single Hidden-Layer Feedforward Network Models},
  journal   = {Journal of the American Statistical Association},
  volume    = {84},
  number    = {408},
  pages     = {1003--1013},
  year      = {1989},
  doi       = {10.1080/01621459.1989.10478865}
}

@article{chenwhite1999IT,
  author    = {Chen, Xiaohong and White, Halbert L.},
  title     = {Improved Rates and Asymptotic Normality for Nonparametric Neural Network Estimators},
  journal   = {IEEE Transactions on Information Theory},
  volume    = {45},
  number    = {2},
  pages     = {682--691},
  year      = {1999},
  doi       = {10.1109/18.749011}
}

@article{shen2023sieve,
  author    = {Shen, Xiaoxi and Jiang, Chao and Sakhanenko, Lyudmila and Lu, Qing},
  title     = {Asymptotic Properties of Neural Network Sieve Estimators},
  journal   = {Journal of Nonparametric Statistics},
  volume    = {35},
  number    = {4},
  pages     = {839--868},
  year      = {2023},
  doi       = {10.1080/10485252.2023.2209218}
}

@article{white1982mlmisspecified,
  author    = {White, Halbert},
  title     = {Maximum Likelihood Estimation of Misspecified Models},
  journal   = {Econometrica},
  volume    = {50},
  number    = {1},
  pages     = {1--25},
  year      = {1982},
  doi       = {10.2307/1912526}
}

@book{mccullagh1989glm,
  author    = {McCullagh, Peter and Nelder, John A.},
  title     = {Generalized Linear Models},
  edition   = {2nd},
  series    = {Monographs on Statistics and Applied Probability},
  publisher = {Chapman and Hall},
  address   = {London},
  year      = {1989},
  doi       = {10.1007/978-1-4899-3242-6}
}

@inproceedings{kohliang2017,
  author    = {Koh, Pang Wei and Liang, Percy},
  title     = {Understanding Black-box Predictions via Influence Functions},
  booktitle = {Proceedings of the 34th International Conference on Machine Learning},
  pages     = {1885--1894},
  year      = {2017}
}

@inproceedings{martens2010,
  author    = {Martens, James},
  title     = {Deep Learning via {H}essian-free Optimization},
  booktitle = {Proceedings of the 27th International Conference on Machine Learning},
  pages     = {735--742},
  year      = {2010}
}

@article{pearlmutter1994,
  author    = {Pearlmutter, Barak A.},
  title     = {Fast Exact Multiplication by the {H}essian},
  journal   = {Neural Computation},
  volume    = {6},
  number    = {1},
  pages     = {147--160},
  year      = {1994},
  doi       = {10.1162/neco.1994.6.1.147}
}

@article{frauen2024neural,
  title     = {A Neural Framework for Generalized Causal Sensitivity Analysis},
  author    = {Frauen, Dennis and Imrie, Fergus and Curth, Alicia and Melnychuk, 
               Valentyn and Feuerriegel, Stefan and van der Schaar, Mihaela},
  journal   = {International Conference on Learning Representations (ICLR)},
  year      = {2024}
}

@article{chen2024doubly,
  title     = {Doubly Robust Causal Effect Estimation under Networked Interference via Targeted Learning},
  author    = {Chen, Weilin and Cai, Ruichu and Yang, Zeqin and Qiao, Jie and Yan, Yuguang and Li, Zijian and Hao, Zhifeng},
  journal   = {International Conference on Machine Learning (ICML)},
  year      = {2024}
}

@article{blanc2020implicit,
  title     = {Implicit regularization for deep neural networks driven by an Ornstein-Uhlenbeck like process},
  author    = {Blanc, Guy and Gupta, Neha and Valiant, Gregory and Valiant, Paul},
  journal   = {Conference on Learning Theory (COLT)},
  year      = {2020}
}

@article{li2018algorithmic,
  title     = {Algorithmic Regularization in Over-parameterized Matrix Sensing and Neural Networks with Quadratic Activations},
  author    = {Li, Yuanzhi and Ma, Tengyu and Zhang, Hongyang},
  journal   = {Conference on Learning Theory (COLT)},
  year      = {2018}
}

@article{Kallus2022Harm,
  title     = {What's the Harm? Sharp Bounds on the Fraction Negatively Affected by Treatment},
  author    = {Nathan Kallus},
  journal   = {arXiv preprint arXiv:2205.10327},
  year      = {2022},
  doi       = {10.48550/arXiv.2205.10327},
  url       = {https://arxiv.org/abs/2205.10327}
}

@article{MuellerPearl2023,
  title     = {Personalized decision making – A conceptual introduction},
  author    = {Mueller, Scott and Pearl, Judea},
  journal   = {Journal of Causal Inference},
  volume    = {11},
  number    = {1},
  pages     = {13},
  year      = {2023},
  publisher = {De Gruyter}
}

@article{li2022probabilities,
  title     = {Probabilities of Causation: Adequate Size of Experimental and Observational Samples},
  author    = {Li, Ang and Mao, Ruirui and Pearl, Judea},
  journal   = {arXiv preprint arXiv:2210.05027},
  year      = {2022},
  doi       = {10.48550/arXiv.2210.05027}
}

@misc{ACIC2019,
  author       = {{ACIC}},
  title        = {ACIC 2019 Data Challenge Datasets},
  year         = {2019},
  howpublished = {\url{https://sites.google.com/view/acic2019datachallenge/data-challenge}}
}

@inproceedings{Osband2023Epistemic,
  title     = {Epistemic Neural Networks},
  author    = {Ian Osband and Zheng Wen and Seyed Mohammad Asghari and Vikranth Dwaracherla and Morteza Ibrahimi and Xiuyuan Lu and Benjamin Van Roy},
  booktitle = {Advances in Neural Information Processing Systems (NeurIPS 2023)},
  year      = {2023},
  url       = {https://arxiv.org/abs/2107.08924}
}

@inproceedings{osband2018randomized,
  title     = {Randomized Prior Functions for Deep Reinforcement Learning},
  author    = {Osband, Ian and Aslanides, John and Cassirer, Albin},
  booktitle = {Advances in Neural Information Processing Systems},
  volume    = {31},
  year      = {2018}
}

@inproceedings{jacot2018neural,
  title        = {Neural Tangent Kernel: Convergence and Generalization in Neural Networks},
  author       = {Jacot, Arthur and Gabriel, Franck and Hongler, Clément},
  booktitle    = {Advances in Neural Information Processing Systems},
  year         = {2018},
  pages        = {8580--8589},
  organization = {Curran Associates, Inc.}
}

@incollection{charpentier2020posterior,
  title     = {Posterior Network: Uncertainty Estimation without OOD Samples via Density-Based Pseudo-Counts},
  author    = {Charpentier, Bertrand and Z{\"u}gner, Daniel and G{\"u}nnemann, Stephan},
  booktitle = {Advances in Neural Information Processing Systems},
  year      = {2020},
  publisher = {Curran Associates, Inc.}
}

@misc{li2022probabilitiescausationadequatesize,
      title={Probabilities of Causation: Adequate Size of Experimental and Observational Samples},
      author={Ang Li and Ruirui Mao and Judea Pearl},
      year={2022},
      eprint={2210.05027},
      archivePrefix={arXiv},
      primaryClass={cs.AI},
      url={https://arxiv.org/abs/2210.05027},
}

@article{winkler1972decision,
  title={A decision-theoretic approach to interval estimation},
  author={Winkler, Robert L},
  journal={Journal of the American Statistical Association},
  volume={67},
  number={337},
  pages={187--191},
  year={1972},
  publisher={Taylor \& Francis}
}

@software{jax2018github,
  author  = {James Bradbury and Roy Frostig and Peter Hawkins and
             Matthew James Johnson and Chris Leary and Dougal Maclaurin and
             George Necula and Adam Paszke and Jake Vander{P}las and
             Skye Wanderman-{M}ilne and Qiao Zhang},
  title   = {{JAX}: composable transformations of {P}ython+{N}um{P}y programs},
  url     = {http://github.com/google/jax},
  version = {0.3.13},
  year    = {2018}
}

@article{kidger2021equinox,
  author  = {Patrick Kidger and Cristian Garcia},
  title   = {{E}quinox: neural networks in {JAX} via callable {P}y{T}rees and filtered transformations},
  journal = {Differentiable Programming workshop at Neural Information Processing Systems 2021},
  year    = {2021}
}

@article{dasgupta2018treatment,
  author    = {Jiannan Lu and Peng Ding and Tirthankar Dasgupta},
  title     = {Treatment effects on ordinal outcomes: Causal estimands and sharp bounds},
  journal   = {Journal of Educational and Behavioral Statistics},
  volume    = {43},
  number    = {5},
  pages     = {540--567},
  year      = {2018}
}

@article{semenova2023debiased,
  author    = {Vira Semenova},
  title     = {Debiased machine learning of set-identified linear models},
  journal   = {Journal of Econometrics},
  volume    = {235},
  pages     = {1725--1746},
  year      = {2023}
}

@inproceedings{alaa2023conformal,
  author    = {Ahmed M. Alaa and Zaid Ahmad and Mark van der Laan},
  title     = {Conformal meta-learners for predictive inference of individual treatment effects},
  booktitle = {Advances in Neural Information Processing Systems},
  volume    = {36},
  year      = {2023}
}

@article{wang2025estimating,
  author    = {Shuai Wang and Ang Li},
  title     = {Estimating probabilities of causation with machine learning models},
  journal   = {arXiv preprint arXiv:2502.08858},
  year      = {2025}
}

@InProceedings{kawakami24a,
  title     = {Probabilities of Causation for Continuous and Vector Variables},
  author    = {Kawakami, Yuta and Kuroki, Manabu and Tian, Jin},
  booktitle = {Proceedings of the Fortieth Conference on Uncertainty in Artificial Intelligence},
  pages     = {1901--1921},
  year      = {2024},
  editor    = {Kiyavash, Negar and Mooij, Joris M.},
  volume    = {244},
  series    = {Proceedings of Machine Learning Research},
  month     = {15--19 Jul},
  publisher = {PMLR},
  url       = {https://proceedings.mlr.press/v244/kawakami24a.html}
}

@InProceedings{li2024probabilities_nonbinary,
  author    = {Li, Ang and Pearl, Judea},
  title     = {Probabilities of Causation with Nonbinary Treatment and Effect},
  booktitle = {Proceedings of the Thirty-Eighth AAAI Conference on Artificial Intelligence},
  series    = {AAAI Technical Tracks},
  volume    = {38},
  number    = {18},
  pages     = {20465--20472},
  year      = {2024},
  month     = {Mar},
  publisher = {Association for the Advancement of Artificial Intelligence (AAAI)},
  doi       = {10.1609/aaai.v38i18.30030},
  url       = {https://ojs.aaai.org/index.php/AAAI/article/view/30030}
}

\newpage
\appendix
\onecolumn

\section{Related Work} \label{app:related-work}

Causal EpiNets sits at the intersection of three research threads: the
partial identification of individual causal quantities, the statistical
theory of inference for envelope-type bounds, and the application of
machine learning to causal uncertainty quantification. We discuss each
in turn, highlighting where prior work falls short of the finite-sample,
high-dimensional estimation problem we address.

\paragraph{Partial identification of individual causal effects.}
The fundamental difficulty motivating this paper is that individual
treatment effects cannot be point-identified from data, even in
randomised experiments. \citet{tian2000probabilities} showed that
PNS---the probability that treatment is both necessary and sufficient
for a positive outcome---admits sharp nonparametric bounds when
experimental and observational data are combined, and that neither data
source alone is sufficient for informative inference. Their derivation
exploits the algebraic constraints linking observational joint
probabilities to interventional success probabilities within a single
structural causal model, the same constraint system our anchored
architecture enforces by construction.
\citet{MuellerPearl2023} extend this perspective to personalised
decision-making, demonstrating that confounding in observational data
actually \emph{narrows} the PNS identified set rather than widening it,
and that combining data sources can collapse bounds to point estimates
for specific subpopulations. Both works assume exact knowledge of the
relevant marginal and joint probabilities; the estimation problem that
arises when these must be learned from finite, high-dimensional data is
left open, and is precisely the problem we address.

Closely related estimands have been studied in adjacent literatures.
\citet{dasgupta2018treatment} derive sharp bounds on the probabilities
that treatment is strictly beneficial or strictly harmful for ordinal
outcomes, using Fréchet-Hoeffding arguments under fixed marginal
distributions of potential outcomes. Their bounds apply directly to
randomised experiments and observational studies under unconfoundedness,
and the closed-form expressions they derive are compatible with flexible
nonparametric potential-outcome models---a feature we share. However,
their focus is on identification rather than estimation, and they do not
address the finite-sample inference problem for bounds computed from
estimated nuisance parameters. \citet{Kallus2022Harm} target the
fraction of units negatively affected by treatment, a population-level
summary of individual harm that is also partially identified, and
develop doubly-robust inference that remains valid even under nuisance
misspecification. Kallus's estimand and ours are closely connected---both
bound a quantity that captures the joint distribution of potential
outcomes---but his inference strategy operates on a scalar summary
across the population, whereas our target is the full covariate-conditional
PNS function, which introduces the high-dimensional estimation challenges
central to our contribution.

\paragraph{Inference for intersection bounds.}
Even when the underlying nuisance functions are estimated consistently,
constructing valid confidence intervals for PNS bounds is non-trivial.
PNS takes the form of a maximum over lower-bound terms and a minimum
over upper-bound terms; substituting estimated quantities into these
operators introduces systematic selection bias that does not vanish as
sample size grows \citep{chernozhukov2013intersection}. This is the
\emph{envelope-induced bias} problem, which is intrinsic to partial
identification and distinct from ordinary estimation bias.
\citet{chernozhukov2013intersection} develop a general solution:
they propose precision-corrected estimators that penalise each bound
component by its estimated standard error and calibrate a joint critical
value via strong approximation of the studentised error process by a
sequence of Gaussian processes. Their coverage guarantees require that
the bounding function estimators admit asymptotically linear
representations with well-behaved influence functions---a condition
satisfied by kernel and series estimators but not, in general, by neural
networks trained by stochastic gradient methods.

\citet{semenova2023debiased} extend this program to high-dimensional
settings, constructing Neyman-orthogonal estimating equations for the
support function of a partially identified set and proving root-$n$
consistency when nuisance parameters are estimated with regularised
machine learning. Her framework handles identified set boundaries that
can be expressed via moment equations, and the multiplier bootstrap she
proposes is closely related to our simulation-based critical value
procedure. The key structural difference is that PNS bounds take the
form of an intersection of inequality constraints rather than a support
function, and the bounding functions depend on multiple probability
estimates that are jointly constrained by a structural causal model.
Adapting Semenova's orthogonality arguments to this setting would
require characterising the influence function of the PNS bound with
respect to all six component probabilities simultaneously, accounting
for the algebraic constraints between them. Our approach sidesteps this
challenge by using Epistemic Neural Networks to represent the joint
epistemic distribution over the component probabilities and applying
the precision-corrected construction of \citet{chernozhukov2013intersection}
in an approximate sense. We do not claim their asymptotic guarantees;
rather, our contribution is to show empirically that this construction
yields near-nominal finite-sample coverage when combined with a
constraint-aware architecture, in regimes where direct application of
the CLR framework is infeasible.

\paragraph{Machine learning for PNS and ITE uncertainty.}
Several recent papers apply machine learning to PNS estimation or to
closely related uncertainty quantification problems, but none jointly
addresses constraint enforcement, envelope bias, and calibrated coverage.
\citet{wang2025estimating} propose training MLPs to predict PNS bound
values from subpopulation features, treating bound estimation as a
supervised regression problem where SCM-derived bounds serve as
regression targets. Their approach is fundamentally different from
ours: it assumes access to a known SCM and sufficient within-subpopulation
data to compute exact bounds for the training set, then generalises to
data-scarce subpopulations by interpolation. It does not enforce the
Tian-Pearl algebraic constraints during estimation, does not account
for envelope-induced bias, and produces point predictions rather than
calibrated confidence intervals. The method provides a useful
feasibility demonstration but does not constitute a valid inference
procedure.

For the more standard ITE setting---where ignorability and positivity
are assumed---\citet{alaa2023conformal} develop conformal meta-learners
that apply conformal prediction on top of pseudo-outcome CATE estimators
to obtain distribution-free predictive intervals for ITEs. They
establish that doubly-robust and IPW meta-learners produce conformity
scores that stochastically dominate oracle scores, ensuring marginal
coverage without distributional assumptions on the outcome model. This
is an elegant solution to uncertainty quantification under point
identification. Our setting is complementary: PNS bounds are the
appropriate target precisely when ignorability and positivity cannot
be assumed, and conformal methods do not apply to partially identified
quantities. The two approaches together suggest a division of labour:
conformal meta-learners for settings where identification assumptions
are plausible, and precision-corrected PNS bounds for settings where
they are not.

\section{Asymptotic validity of neural PNS bounds}
\label{sec:pns-theory}

In this appendix we state the regularity conditions on the neural estimators of
the atomic probabilities, prove a generic coverage theorem for PNS intervals
under these conditions, and then show how standard finite-dimensional
M-estimation assumptions on the neural network heads imply the required
regularity. We also discuss how approximate numerical solutions used in
practice, such as conjugate gradient solves of Hessian systems, affect the
theoretical guarantees. 

Section~\ref{subsec:pointwise} establishes \emph{pointwise} coverage guarantees
for a fixed covariate value $z$. Section~\ref{subsec:uniform} extends these
results to \emph{uniform} coverage over all $z \in \mathcal{Z}$ simultaneously,
which is essential for valid inference on the entire conditional treatment
effect surface.

\subsection{Setup and notation}

Fix a covariate value $z \in \mathcal{Z}$, let $\eta(z) := \bigl(\mu_1(z), \mu_0(z), p_{11}(z), p_{10}(z), p_{01}(z), p_{00}(z) \bigr)^\top$ be the atomic probability vector and let $\hat\eta(z) := \bigl(\hat\mu_1(z), \hat\mu_0(z), \hat p_{11}(z), \hat p_{10}(z), \hat p_{01}(z), \hat p_{00}(z) \bigr)^\top$ be its estimator. Note that both $\eta(z)$ and $\hat{\eta}(z)  \in \mathbb{R}^6$. Next, let $A \in \mathbb{R}^{7\times 6}$ denote the matrix encoding the
\citep[Eq.~(24)]{tian2000probabilities} bounding functionals, such that $g(z) = A\eta(z)$ and $\hat{g}(z) = A\hat{\eta}(z)$, where both $g(z)$ and $\hat{g}(z) \in \mathbb{R}^7$.
We index the three lower-bound terms as $g_1,g_2,g_3$ and the four upper-bound
terms as $g_4,\dots,g_7$. The true PNS satisfies the Tian--Pearl inequalities
\[
  \max\{0,g_1(z),g_2(z),g_3(z)\}
  \;\le\; \mathrm{PNS}(z)
  \;\le\;
  \min\{g_4(z),g_5(z),g_6(z),g_7(z)\}.
\]

Let $W_i = (S_i, Z_i, X_i, Y_i)$ where $S_i \in \{\text{obs}, \text{exp}\}$ is a regime indicator.
We assume $W_1,\dots,W_n$ are i.i.d.\ draws from a mixture distribution: with probability $\pi_{\mathrm{obs}}$ we draw from the observational regime, with probability $\pi_{\mathrm{exp}} = 1 - \pi_{\mathrm{obs}}$ from the experimental regime.
As $n \to \infty$, we assume $n_{\mathrm{obs}}/n \to \pi_{\mathrm{obs}}$ and $n_{\mathrm{exp}}/n \to \pi_{\mathrm{exp}}$ for some fixed $\pi_{\mathrm{obs}} \in (0,1)$.
All moments are finite under both regimes.

%%%%%%%%%%%%%%%%%%%%%%%%%%%%%%%%%%%%%%%%%%%%%%%%%%%%%%%%%%%%%%%%%%%%%%%%%%%%%
\subsection{Pointwise coverage guarantees}
\label{subsec:pointwise}
%%%%%%%%%%%%%%%%%%%%%%%%%%%%%%%%%%%%%%%%%%%%%%%%%%%%%%%%%%%%%%%%%%%%%%%%%%%%%

We first establish coverage guarantees for a fixed covariate value $z \in \mathcal{Z}$.

\subsubsection{Regularity conditions R1--R3}
\label{subsubsec:regularity-R}

The coverage result for the PNS interval relies on three regularity conditions
on the behaviour of $\hat\eta(z)$.

\begin{description}
  \item[R1 (Asymptotic linearity).]
  For each fixed $z$, there exists a $6$-dimensional influence function
  $\phi(W;z)$ with $\mathbb{E}[\phi(W;z)] = 0$ and
  $\mathbb{E}\|\phi(W;z)\|^2 < \infty$ such that
  \begin{equation}
    \label{eq:R1-main}
    \hat\eta(z) - \eta(z)
    = \frac{1}{n}\sum_{i=1}^n \phi(W_i;z) + o_p(n^{-1/2}) .
  \end{equation}

  \item[R2 (Consistent plug-in influence functions).]
  There exist plug-in influence estimates $\hat\phi(W_i;z)$ such that, for each
  fixed $z$,
  \begin{equation}
    \label{eq:R2-main}
    \frac{1}{n}\sum_{i=1}^n
      \bigl\|\hat\phi(W_i;z) - \phi(W_i;z)\bigr\|^2
    \xrightarrow{p} 0 .
  \end{equation}

  \item[R3 (Multiplier bootstrap validity).]
  Let $\varphi(W_i;z) := A\,\phi(W_i;z)$ and
  $\hat\varphi(W_i;z) := A\,\hat\phi(W_i;z)$, and define
  \[
    \hat s_r^2(z)
    := \frac{1}{n}\sum_{i=1}^n \hat\varphi_r(W_i;z)^2,
    \qquad r=1,\dots,7.
  \]
  Set
  \[
    T_{n,r}(z)
      := \frac{\hat g_r(z) - g_r(z)}{\hat s_r(z)},
    \qquad
    M_n(z) := \max_{r\le 7} |T_{n,r}(z)| .
  \]
  Let $M_n^*(z)$ be the Gaussian multiplier bootstrap analogue of $M_n(z)$
  based on $\hat\varphi(W_i;z)$: if $\xi_1,\dots,\xi_n \sim N(0,1)$ are
  i.i.d.\ multipliers, define
  \[
    T_r^*(z)
    := \frac{1}{\sqrt{n}\,\hat s_r(z)}
       \sum_{i=1}^n \xi_i\,\hat\varphi_r(W_i;z),
    \qquad
    M_n^*(z) := \max_{r\le 7} |T_r^*(z)|.
  \]
  Condition R3 requires that, conditional on the data, the distribution of $M_n^*(z)$
  converges in probability to the same non-degenerate limit as that of $M_n(z)$,
  so the bootstrap quantile $\hat \kappa_{1-\alpha}(z)$ consistently estimates the
  $(1-\alpha)$-quantile of $M_n(z)$.  Sufficient conditions for R3 include
  bounded $(2+\delta)$th moments and non-degeneracy of the covariance matrix;
  see, e.g., \citep{vandervaart1996weak,chernozhukov2013gaussian}.
\end{description}

Conditions R1 and R2 are the usual regularity and plug-in requirements for a
finite-dimensional functional in a (semi)parametric model, expressed in
influence-function form; see, for example, \citep{bickel1993efficient}
and \citep{vandervaart1998asymptotic}. Condition R3 is a standard
bootstrap central limit condition for a finite-dimensional, asymptotically
linear estimator \citep{vandervaart1998asymptotic,chernozhukov2013gaussian}.

\subsubsection{Generic coverage theorem under R1--R3}
\label{subsubsec:generic-theorem}

We first give a generic result: if R1--R3 hold for the atomic estimators, then
the precision-corrected PNS interval has asymptotic coverage. Define the PNS interval at $z$ by
\begin{align}
  \underline{\mathrm{PNS}}(z)
  &:= \max\Bigl\{
    0,\
    \max_{r \in \{1,2,3\}}
      \bigl[\hat g_r(z) - \hat \kappa_{1-\alpha}(z)\,\hat s_r(z)\bigr]
  \Bigr\}, \label{eq:pns-lower-corrected}\\
  \overline{\mathrm{PNS}}(z)
  &:= \min_{r \in \{4,5,6,7\}}
      \bigl[\hat g_r(z) + \hat \kappa_{1-\alpha}(z)\,\hat s_r(z)\bigr], \label{eq:pns-upper-corrected}
\end{align}
where $\hat \kappa_{1-\alpha}(z)$ is the empirical $(1-\alpha)$-quantile of
$M_n^*(z)$ conditional on the data.

\begin{theorem}[Asymptotically valid PNS intervals under R1--R3]
\label{thm:pns-R1R3}
Assume the Tian--Pearl inequalities relating $\mathrm{PNS}(z)$ and $g_r(z)$
hold at the true distribution, and that the estimators of the atomic
probabilities satisfy regularity conditions R1--R3. Then, for each fixed
$z \in \mathcal{Z}$,
\[
  \Pr\Bigl(
    \underline{\mathrm{PNS}}(z)
    \le \mathrm{PNS}(z)
    \le \overline{\mathrm{PNS}}(z)
  \Bigr)
  \;\to\; 1 - \alpha
  \quad \text{as } n_{\mathrm{obs}}, n_{\mathrm{exp}} \to \infty.
\]
\end{theorem}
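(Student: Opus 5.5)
The plan is to reduce PNS coverage to a simultaneous event on the seven bounding functionals $g_r(z)=(A\eta(z))_r$, show that event has probability tending to $1-\alpha$, and then compare it with the coverage event in the statement. One caveat up front: the comparison yields equality with $1-\alpha$ only under an additional least-favourable condition, which I isolate as the main obstacle below.

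Fix $z$ throughout. I read the studentisation in R3 with the usual $\sqrt n$ scaling, i.e. $T_{n,r}(z)=\sqrt n\,(\hat g_r(z)-g_r(z))/\hat s_r(z)$, so that $\hat s_r/\sqrt n$ plays the role of a standard error.

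\textbf{Step 1 (limit of the studentised maximum).} Multiplying R1 by $A$ gives
\[
\sqrt n\,(\hat g(z)-g(z))=n^{-1/2}\sum_i\varphi(W_i;z)+o_p(1).
\]
The multivariate CLT (finite second moments, i.i.d.\ mixture sampling with $n_{\mathrm{obs}}/n\to\pi_{\mathrm{obs}}$) then gives $\sqrt n\,(\hat g-g)\Rightarrow N(0,\Omega)$ with $\Omega=A\,\mathbb E[\phi\phi^\top]A^\top$. Next, R2 together with the law of large numbers yields $\hat s_r^2\to_p s_r^2:=\Omega_{rr}$. Non-degeneracy, which R3 assumes, gives $s_r>0$. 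By Slutsky and the continuous mapping theorem, $M_n(z)\Rightarrow M:=\max_r|G_r|/s_r$ with $G\sim N(0,\Omega)$.

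\textbf{Step 2 (the critical value).} Since $\Omega$ is non-degenerate, $M$ has a continuous distribution function that is strictly increasing near its $(1-\alpha)$-quantile $\kappa$. R3 gives $\hat\kappa_{1-\alpha}(z)\to_p\kappa$. A standard quantile-perturbation argument then shows that the event
\[
E_n:=\{M_n(z)\le\hat\kappa_{1-\alpha}(z)\}
\]
satisfies $\Pr(E_n)\to 1-\alpha$.

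\textbf{Step 3 (inclusion in the coverage event).} On $E_n$, for every $r$,
\[
\hat g_r-\hat\kappa\,\hat s_r/\sqrt n\le g_r\le \hat g_r+\hat\kappa\,\hat s_r/\sqrt n .
\]
Combining these with the Tian--Pearl inequalities ($g_r\le\mathrm{PNS}(z)$ for $r\le3$, $0\le\mathrm{PNS}(z)$, and $\mathrm{PNS}(z)\le g_r$ for $r\ge4$) gives $\underline{\mathrm{PNS}}(z)\le\mathrm{PNS}(z)\le\overline{\mathrm{PNS}}(z)$. Hence $E_n$ is contained in the coverage event, and $\liminf\Pr(\text{coverage})\ge1-\alpha$.

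\textbf{Step 4 (reverse direction; main obstacle).} To obtain convergence to exactly $1-\alpha$, the coverage event must coincide with $E_n$ up to $o_p(1)$ probability.

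I would split the indices into binding and slack ones. Indices with $g_r(z)\neq\mathrm{PNS}(z)$ are slack by a fixed margin, and since $\hat\kappa\,\hat s_r/\sqrt n=O_p(n^{-1/2})$, those constraints hold with probability tending to one. Coverage therefore reduces asymptotically to the event that, over the active set $J$ (indices $r$ with $g_r(z)=\mathrm{PNS}(z)$), $\max_{r\in J}$ of the signed studentised errors is at most $\hat\kappa$.

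That reduced event has limit probability $\Pr(\max_{r\in J}\pm G_r/s_r\le\kappa)$, which is at least $1-\alpha$. It equals $1-\alpha$ only in the least-favourable configuration, where the active set and the signs exhaust the two-sided maximum defining $M$.

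So the step I expect to fail in general is exactly this identification. I would first try to prove it under an explicit least-favourable, point-identified boundary assumption, namely $\mathrm{PNS}(z)$ equal to all active $g_r$ from both sides. Absent that assumption, I would state the result as the inequality $\liminf\ge1-\alpha$, with equality $1-\alpha$ holding for simultaneous coverage of $(g_r)_{r\le7}$ via Step 2.
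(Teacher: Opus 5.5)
Your Steps 1--3 are the paper's proof sketch. You push R1 through $A$, obtain a joint CLT for the studentised vector, and use R3 for consistency of $\hat\kappa_{1-\alpha}(z)$. You then observe that on $E_n=\{|T_{n,r}(z)|\le\hat\kappa_{1-\alpha}(z)\ \forall r\}$ the Tian--Pearl inequalities place $\mathrm{PNS}(z)$ inside $[\underline{\mathrm{PNS}}(z),\overline{\mathrm{PNS}}(z)]$. The paper then closes with ``by R3, the probability of this event converges to $1-\alpha$, which yields the result''. That treats the inclusion $E_n\subseteq\{\underline{\mathrm{PNS}}(z)\le\mathrm{PNS}(z)\le\overline{\mathrm{PNS}}(z)\}$ as an equality. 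Your Step 4 isolates exactly this missing step, and you are right that it cannot be supplied in general. If $\mathrm{PNS}(z)$ lies strictly between $\max\{0,g_1,g_2,g_3\}$ and $\min\{g_4,\dots,g_7\}$, every index is slack and the coverage probability tends to $1$. So what both you and the paper actually establish is $\liminf\Pr(\text{coverage})\ge 1-\alpha$. The exact limit $1-\alpha$ holds only for the simultaneous event $E_n$. This inequality is also the form of the guarantee in the main text's coverage paragraph ($\ge 1-\alpha+o(1)$) and in CLR.

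Drop the proposed rescue, though: a point-identified, all-active assumption does not deliver equality either. The critical value is calibrated to the \emph{two-sided} maximum over all seven indices. Coverage, however, only constrains upward deviations of active lower terms and downward deviations of active upper terms. The limiting coverage is therefore $\Pr(\max_{r\in J_L}G_r/s_r\le\kappa,\ \max_{r\in J_U}(-G_r/s_r)\le\kappa)$. Its excess over $1-\alpha$ is the mass of the unused tails: the lower tail of each $r\in J_L$, the upper tail of each $r\in J_U$, and both tails of every slack index. That mass is positive unless $\Omega$ is degenerate in a specific, sign-matched way that makes every unused-tail event a.s.\ contained in an active violation. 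Neither R1--R3 nor point identification provides this, so you should simply state the inequality.

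Two smaller points. First, $\Omega=A\,\mathbb{E}[\phi\phi^\top]A^\top$ is $7\times 7$ with rank at most $6$, since $\mu_1-\mu_0=(p_{11}+p_{01}-\mu_0)+(\mu_1-p_{11}-p_{01})$ makes one $\varphi_r$ the sum of two others. So ``non-degenerate'' cannot mean full rank. What you actually use is $s_r>0$ for every $r$, and that suffices for $M$ to have a continuous law, strictly increasing on $(0,\infty)$.

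Second, your $\sqrt n$ normalisation is a repair rather than a reading. With the paper's literal $\hat s_r^2=n^{-1}\sum_i\hat\varphi_r(W_i;z)^2=O_p(1)$, one gets $T_{n,r}(z)\to_p 0$ while $M_n^*(z)$ stays non-degenerate. R3 therefore cannot hold as written, and the paper's corrections $\hat\kappa\,\hat s_r$ do not shrink. State explicitly that you redefine them as $\hat\kappa\,\hat s_r/\sqrt n$.
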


\begin{proof}[Proof sketch]
Under R1, $\hat\eta(z) - \eta(z)$ admits the asymptotic linear expansion
\eqref{eq:R1-main}. By linearity,
\[
  \hat g(z) - g(z)
  = A\bigl(\hat\eta(z) - \eta(z)\bigr)
  = \frac{1}{n}\sum_{i=1}^n \varphi(W_i;z) + o_p(n^{-1/2}),
\]
with $\varphi(W_i;z) := A\,\phi(W_i;z) \in \mathbb{R}^7$. Combined with R2 and
Lemma~\ref{lem:SE-consistency} (below), this yields a joint central limit
theorem for the studentized vector
\[
  \Bigl(
    T_{n,1}(z),\dots,T_{n,7}(z)
  \Bigr)
  =
  \Bigl(
    \tfrac{\hat g_1(z)-g_1(z)}{\hat s_1(z)},\dots,
    \tfrac{\hat g_7(z)-g_7(z)}{\hat s_7(z)}
  \Bigr),
\]
and hence for the max statistic $M_n(z) = \max_{r\le 7} |T_{n,r}(z)|$.

Assumption R3 asserts that the conditional distribution of $M_n^*(z)$ given
the data converges to the same limit as that of $M_n(z)$, so the bootstrap
critical value $\hat \kappa_{1-\alpha}(z)$ consistently estimates the
$(1-\alpha)$-quantile of $M_n(z)$.
On the event $\{|T_{n,r}(z)| \le \hat \kappa_{1-\alpha}(z)\ \forall r\}$ we have
\begin{equation}
\label{eq:coverage-event}
  \hat g_r(z) - \hat \kappa_{1-\alpha}(z)\,\hat s_r(z)
  \le g_r(z)
  \le \hat g_r(z) + \hat \kappa_{1-\alpha}(z)\,\hat s_r(z)
  \quad \forall r.    
\end{equation}
Combining this with the population inequalities
\begin{equation}
\label{eq:tian-pearl-ineq}
  \max\{0,g_1(z),g_2(z),g_3(z)\}
  \le \mathrm{PNS}(z)
  \le \min\{g_4(z),g_5(z),g_6(z),g_7(z)\}  
\end{equation}
shows that, on this event,
$\mathrm{PNS}(z)$ lies within
$[\underline{\mathrm{PNS}}(z),\overline{\mathrm{PNS}}(z)]$. By R3, the
probability of this event converges to $1-\alpha$, which yields the result.
\end{proof}

For completeness, we record the consistency of the plug-in standard errors
used above.

\begin{lemma}[Consistency of plug-in standard errors]
\label{lem:SE-consistency}
Under R1 and R2, for each fixed $z$ and each $r=1,\dots,7$,
\begin{align*}
  \frac{1}{n}\sum_{i=1}^n
    \bigl\|\hat\varphi(W_i;z) - \varphi(W_i;z)\bigr\|^2
  \xrightarrow{p} 0, \quad \quad
  \hat s_r^2(z) &\to_p s_r^2(z)
  := \frac{1}{n}\mathbb{E}\bigl[\varphi_r(W_i;z)^2\bigr].
\end{align*}
\end{lemma}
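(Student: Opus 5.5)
The plan is to reduce both claims to R1–R2 through the linear map $A$, and then use the weak law of large numbers together with a Cauchy--Schwarz argument for the cross terms.

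\emph{Step 1 (the influence-function claim).} Since $\hat\varphi(W_i;z)-\varphi(W_i;z) = A\bigl(\hat\phi(W_i;z)-\phi(W_i;z)\bigr)$, the operator norm gives
\[
  \frac{1}{n}\sum_{i=1}^n \bigl\|\hat\varphi(W_i;z)-\varphi(W_i;z)\bigr\|^2
  \le \|A\|_{\mathrm{op}}^2\,\frac{1}{n}\sum_{i=1}^n \bigl\|\hat\phi(W_i;z)-\phi(W_i;z)\bigr\|^2 .
\]
The right-hand side tends to zero in probability by R2, because $A$ is a fixed $7\times 6$ matrix. The same bound holds coordinatewise, so for each $r$ we get $D_{n,r} := \frac1n\sum_i(\hat\varphi_r-\varphi_r)^2(W_i;z)\xrightarrow{p}0$.

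\emph{Step 2 (the variance claim).} Set $\tilde s_r^2 := \frac1n\sum_i \varphi_r(W_i;z)^2$. By R1, $\mathbb{E}\|\phi\|^2<\infty$, so $\mathbb{E}[\varphi_r^2]\le\|A\|_{\mathrm{op}}^2\mathbb{E}\|\phi\|^2<\infty$. The $W_i$ are i.i.d., and $\varphi_r(\cdot;z)$ is a fixed function, so the weak law of large numbers gives $\tilde s_r^2\xrightarrow{p}\mathbb{E}[\varphi_r(W;z)^2]$. Next I would expand $\hat\varphi_r^2=\varphi_r^2+2\varphi_r(\hat\varphi_r-\varphi_r)+(\hat\varphi_r-\varphi_r)^2$ and apply Cauchy--Schwarz to the empirical average:
\[
  \bigl|\hat s_r^2-\tilde s_r^2\bigr| \le 2\,\tilde s_r\,D_{n,r}^{1/2} + D_{n,r} .
\]
Here $\tilde s_r=O_p(1)$ and $D_{n,r}=o_p(1)$ by Step 1, so $\hat s_r^2-\tilde s_r^2\xrightarrow{p}0$. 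Combining this with the law of large numbers gives $\hat s_r^2\xrightarrow{p}\mathbb{E}[\varphi_r(W;z)^2]$.

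\emph{Main obstacle: normalization of the target.} As written, $\hat s_r^2$ is an unscaled average, so it estimates $\mathbb{E}[\varphi_r^2]$, whereas the stated target carries an extra $1/n$. I would resolve this by reading $s_r^2(z)$ as the asymptotic variance $\mathbb{E}[\varphi_r(W;z)^2]$, whose $1/n$ multiple is the variance of $\hat g_r(z)$. I would then state the conclusion in the scale-free form $\hat s_r^2(z)/\mathbb{E}[\varphi_r^2]\xrightarrow{p}1$. This ratio form is all that the proof of Theorem~\ref{thm:pns-R1R3} uses when it studentizes $T_{n,r}(z)$.

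The ratio form requires $\mathbb{E}[\varphi_r^2]>0$, which is the non-degeneracy assumption already built into R3. This is the one place where an assumption beyond R1–R2 enters. If $\mathbb{E}[\varphi_r^2]=0$ the studentized statistic is undefined, and the lemma should be restricted to the non-degenerate coordinates $r$.
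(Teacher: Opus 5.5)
Your proposal is correct and follows the paper's own argument. Step 1 is exactly ``R2 premultiplied by the fixed matrix $A$.'' Step 2 is the paper's ``law of large numbers plus $L^2$-convergence,'' made rigorous: you apply the LLN to the i.i.d.\ terms $\varphi_r(W_i;z)^2$ and transfer to $\hat\varphi_r$ via Cauchy--Schwarz, whereas the paper's sketch loosely applies the LLN to $\hat\varphi_r(W_i;z)^2$, which are not i.i.d. Your diagnosis of the normalization is also right: with $\hat s_r^2(z)$ defined as in R3, the limit is $\mathbb{E}[\varphi_r(W;z)^2]$, not $\frac{1}{n}\mathbb{E}[\varphi_r(W;z)^2]$, so the stated target must be read as you propose or in ratio form.
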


\begin{proof}[Proof sketch]
The first convergence is R2 after premultiplying by the fixed matrix $A$.
The second follows by the law of large numbers applied to
$\hat\varphi_r(W_i;z)^2$ together with the $L^2$-convergence in
\eqref{eq:R2-main}.
\end{proof}

\subsubsection{Sufficient parametric assumptions via M-estimation}
\label{subsubsec:parametric-sufficient}

We now show that R1 and R2 hold under standard finite-dimensional
M-estimation assumptions on the neural network heads. We treat the full weight
vector $\vartheta$ as finite-dimensional and the atomic probabilities as a
smooth map $h(\vartheta;z)$.
Because modern neural networks (e.g., ReLU) yield a globally nonconvex and
nonsmooth optimization problem, we do \emph{not} claim that the training objective
is globally convex or globally twice differentiable. Instead, we impose a
\emph{local regularity} condition at the limiting solution: we assume the training
algorithm converges to a parameter value that is locally identifiable and admits
a quadratic expansion of the population risk. This is the standard regime under
which neural-network extremum estimators admit asymptotic linearity and
asymptotic normality in the classical literature (e.g., \citep{white1989JASA};
\citep{chenwhite1999IT}; see also \citep{shen2023sieve} for sieve-based
asymptotic theory).

\begin{proposition}[Sufficient conditions for R1--R2]\label{prop:m-estimation}
Suppose that, conditional on the learned representation $\phi_\theta(z)$, the joint and auxiliary heads 
form a smooth parametric model $\{h(\vartheta; z) : \vartheta \in \Theta \subset \mathbb{R}^p\}$ 
for the atomic probabilities $(\mu_x(z), p_{xy}(z))$, and that the following conditions hold:
\begin{enumerate} %[label=(\roman*)]
\item \textbf{(Correct specification at $\vartheta_0$)} 
There exists $\vartheta_0 \in \Theta$ such that $h(\vartheta_0; z) = \eta(z)$ for all $z$ in the support.

\item \textbf{(Local M-estimation regularity at $\vartheta_0$)} 
The population risk $L(\vartheta) = \mathbb{E}[\ell(W, \vartheta)]$ is twice continuously differentiable 
in a neighbourhood of $\vartheta_0$, has $\vartheta_0$ as a local minimizer, and its Hessian 
$H(\vartheta_0) = \nabla^2_\vartheta L(\vartheta_0)$ is nonsingular. 
The empirical risk minimizer $\hat{\vartheta}$ converges in probability to $\vartheta_0$, 
and the score process $\nabla_\vartheta \ell(W_i, \vartheta_0)$ satisfies a central limit theorem. 
These are the usual conditions for a regular parametric M-estimator 
\citep{newey1994largesample,vandervaart1998asymptotic}.

\item \textbf{(Smooth link to atoms)} 
The map $\vartheta \mapsto h(\vartheta; z)$ is continuously differentiable at $\vartheta_0$ for each fixed $z$.
\end{enumerate}
Then, for each fixed $z$, the estimator $\hat{\eta}(z) = h(\hat{\vartheta}; z)$ satisfies R1, 
and the plug-in influence-function estimates constructed using the empirical Hessian and score satisfy R2.
\end{proposition}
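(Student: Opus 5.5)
The plan is the classical M-estimation expansion followed by the delta method, then a plug-in argument for R2. The first step is to show that $\hat\vartheta$ is asymptotically linear. Condition (ii) gives $\hat\vartheta \xrightarrow{p} \vartheta_0$, so with probability tending to one $\hat\vartheta$ is an interior local minimiser of the empirical risk $L_n(\vartheta) = n^{-1}\sum_i \ell(W_i,\vartheta)$. Hence $\nabla L_n(\hat\vartheta)=0$, or at least $o_p(n^{-1/2})$ if training is approximate. A mean-value expansion of $\nabla L_n$ around $\vartheta_0$, together with nonsingularity of $H(\vartheta_0)$, then yields
\[
\hat\vartheta - \vartheta_0 = -H(\vartheta_0)^{-1}\frac{1}{n}\sum_{i=1}^n \nabla_\vartheta \ell(W_i,\vartheta_0) + o_p(n^{-1/2}).
\]
Two facts make this work. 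First, $\mathbb{E}\nabla\ell(W,\vartheta_0)=\nabla L(\vartheta_0)=0$ because $\vartheta_0$ is a local minimiser and derivatives may be exchanged with the expectation. Second, the CLT for the score gives the $O_p(n^{-1/2})$ rate. To control the mean-value Hessian, I would use a uniform law of large numbers for $\nabla^2\ell$ on a neighbourhood of $\vartheta_0$, plus continuity of $H$. This is the standard argument of \citep[Thm.~5.23]{vandervaart1998asymptotic} or \citep{newey1994largesample}.

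The second step transfers this expansion to the atoms. Since $\vartheta\mapsto h(\vartheta;z)$ is $C^1$ at $\vartheta_0$ by (iii), and $h(\vartheta_0;z)=\eta(z)$ by correct specification (i), the delta method gives
\[
\hat\eta(z)-\eta(z) = J(z)\bigl(\hat\vartheta-\vartheta_0\bigr)+o_p(n^{-1/2}), \qquad J(z)=\nabla_\vartheta h(\vartheta_0;z).
\]
Substituting the expansion from the first step gives R1 with influence function $\phi(W;z) = -J(z)H(\vartheta_0)^{-1}\nabla_\vartheta\ell(W,\vartheta_0)$. This function has mean zero, and it has a finite second moment because the score does; the score CLT presupposes a finite score variance, and I would state this explicitly.

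The third step, R2, uses the plug-in $\hat\phi(W_i;z) = -\hat J(z)\hat H^{-1}\nabla\ell(W_i,\hat\vartheta)$, where $\hat J(z) = \nabla h(\hat\vartheta;z)$ and $\hat H=\nabla^2 L_n(\hat\vartheta)$. I would bound the error by adding and subtracting terms:
\[
\|\hat\phi-\phi\| \le \|\hat J\hat H^{-1}-JH^{-1}\|\,\|\nabla\ell(W_i,\hat\vartheta)\| + \|JH^{-1}\|\,\|\nabla\ell(W_i,\hat\vartheta)-\nabla\ell(W_i,\vartheta_0)\|.
\]
The matrix factor tends to zero in probability, by continuity of $\nabla h$ at $\vartheta_0$, consistency of $\hat H$ (the same uniform LLN as in step one), and continuity of matrix inversion at a nonsingular point. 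The averaged squared score stays $O_p(1)$.

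The main obstacle is the second term in that bound. Controlling $n^{-1}\sum_i\|\nabla\ell(W_i,\hat\vartheta)-\nabla\ell(W_i,\vartheta_0)\|^2$ requires more than the stated hypotheses. I would add a local Lipschitz envelope: $\|\nabla\ell(w,\vartheta)-\nabla\ell(w,\vartheta_0)\|\le m(w)\|\vartheta-\vartheta_0\|$ near $\vartheta_0$, with $\mathbb{E}m(W)^2<\infty$. This is implied by the twice-continuous differentiability in (ii) together with a dominating function for $\nabla^2\ell$, and I would present it as part of the ``usual conditions'' that (ii) invokes. With this envelope the second term is $O_p(1)\cdot o_p(1)$, so R2 follows. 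Finally, the gradient-separation scheme, which holds $p_{xy}$ fixed in the experimental loss, changes only which estimating equations define $\hat\vartheta$. I would therefore phrase step one for a general Z-estimator $\Psi_n(\hat\vartheta)=0$ with nonsingular Jacobian, which leaves the argument unchanged.
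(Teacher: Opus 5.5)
Your proposal is correct and follows the same route as the paper's sketch: linearize the M-estimator $\hat\vartheta$ with influence function $-H(\vartheta_0)^{-1}\nabla_\vartheta\ell(W,\vartheta_0)$, apply the delta method through $\nabla_\vartheta h(\vartheta_0;z)$ to get R1, and use consistency of the sandwich-type plug-in for R2. Where the paper only appeals to ``smoothness of $h$ and $\ell$'', you state explicitly the square-integrable Lipschitz envelope for $\nabla_\vartheta\ell$ that R2 needs; this is an assumption on $\ell(w,\cdot)$ itself and is not implied by (ii), which concerns only the population risk $L$. You also correctly keep the minus sign in $\hat\phi$, which the paper's plug-in display drops.
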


\begin{proof}[Proof sketch]
Under (ii), $\hat\vartheta$ is a regular M-estimator. By the general theory of
M-estimation \citep{newey1994largesample,vandervaart1998asymptotic}, we have
\[
  \hat\vartheta - \vartheta_0
  = -H(\vartheta_0)^{-1}
    \Bigl\{\frac{1}{n}\sum_{i=1}^n \nabla_\vartheta \ell(W_i,\vartheta_0)\Bigr\}
    + o_p(n^{-1/2}),
\]
so $\hat\vartheta$ is asymptotically linear with influence function
$\mathrm{IF}_\vartheta(W_i)
 := -H(\vartheta_0)^{-1}\nabla_\vartheta \ell(W_i,\vartheta_0)$.

Assumption (iii) and the multivariate delta method \citep{vandervaart1998asymptotic}
then imply that, for each fixed $z$,
\[
  \hat\eta(z) - \eta(z)
  = \nabla_\vartheta h(\vartheta_0;z)\,
    \bigl(\hat\vartheta - \vartheta_0\bigr)
    + o_p(n^{-1/2})
  = \frac{1}{n}\sum_{i=1}^n \phi(W_i;z) + o_p(n^{-1/2}),
\]
with
$\phi(W_i;z)
 := \nabla_\vartheta h(\vartheta_0;z)\,\mathrm{IF}_\vartheta(W_i)$,
which is exactly R1.
The natural plug-in influence estimator
\begin{equation}
\label{eq:plugin-influence}
  \hat\phi(W_i;z):= \nabla_\vartheta h(\hat\vartheta;z)
     \hat H(\hat\vartheta)^{-1}\nabla_\vartheta \ell(W_i,\hat\vartheta),
\end{equation}
where $\hat H(\hat\vartheta)$ is the empirical Hessian, is the usual
sandwich-type construction for M-estimators
\citep{white1982mlmisspecified,mccullagh1989glm,vandervaart1998asymptotic}.
Consistency of $\hat\vartheta$ and $\hat H$, together with the smoothness of
$h$ and $\ell$, implies that $\hat\phi(W_i;z)$ converges to $\phi(W_i;z)$ in
$L^2$, which is R2. A fully detailed argument follows the standard derivations
of plug-in covariance estimators and sandwich variance formulas for GLMs
\citep{mccullagh1989glm,vandervaart1998asymptotic}.
\end{proof}

\subsubsection{Corollary: coverage under parametric NN-head assumptions}
\label{subsubsec:corollary}

Combining Theorem~\ref{thm:pns-R1R3} with Proposition~\ref{prop:m-estimation}
yields the following coverage result for the concrete neural network
instantiation.

\begin{corollary}[PNS coverage under parametric NN-head assumptions]
\label{cor:pns-parametric}
Suppose the joint and auxiliary heads of the neural network satisfy the
finite-dimensional M-estimation conditions (i)--(iii) in
Proposition~\ref{prop:m-estimation}. In addition, assume the usual conditions
for bootstrap consistency of finite-dimensional, asymptotically linear
estimators (cf.\ \citep{vandervaart1998asymptotic}; \citep{vandervaart1996weak};
\citep{chernozhukov2013gaussian}). Then, for each fixed $z$,
\begin{equation}
\label{eq:pointwise-coverage}
  \Pr\Bigl(
    \underline{\mathrm{PNS}}(z)
    \le \mathrm{PNS}(z)
    \le \overline{\mathrm{PNS}}(z)
  \Bigr)
  \;\to\; 1 - \alpha
  \quad \text{as } n_{\mathrm{obs}}, n_{\mathrm{exp}} \to \infty.
\end{equation}
\end{corollary}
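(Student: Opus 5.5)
The plan is to obtain the corollary as a direct instance of Theorem~\ref{thm:pns-R1R3}. The only work is checking that the hypotheses of the corollary deliver R1, R2 and R3 for the estimator $\hat\eta(z)=h(\hat\vartheta;z)$ produced by the anchored network heads. Once these three conditions hold, the conclusion of Theorem~\ref{thm:pns-R1R3}, namely that the coverage probability converges to $1-\alpha$ as $n_{\mathrm{obs}},n_{\mathrm{exp}}\to\infty$, transfers verbatim to the interval in \eqref{eq:pns-lower-corrected}--\eqref{eq:pns-upper-corrected}. That is exactly \eqref{eq:pointwise-coverage}. The Tian--Pearl inequalities at the true distribution, the other hypothesis of Theorem~\ref{thm:pns-R1R3}, hold by the identification result recalled in Section~\ref{subsec:pns-bounds}.

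\textbf{Step 1 (R1 and R2).} Conditions (i)--(iii) are exactly the hypotheses of Proposition~\ref{prop:m-estimation}. I therefore invoke it to obtain the asymptotically linear representation \eqref{eq:R1-main} with influence function
\[
\phi(W;z)=-\nabla_\vartheta h(\vartheta_0;z)\,H(\vartheta_0)^{-1}\nabla_\vartheta\ell(W,\vartheta_0).
\]
The same proposition gives $L^2$-consistency \eqref{eq:R2-main} of the sandwich plug-in $\hat\phi$ in \eqref{eq:plugin-influence}. I will note that $\mathbb{E}[\phi]=0$ follows from the first-order condition $\nabla L(\vartheta_0)=0$ at the local minimizer. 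I will also note that $\mathbb{E}\|\phi\|^2<\infty$ follows from the CLT for the score assumed in (ii), which requires a finite score covariance. Lemma~\ref{lem:SE-consistency} then yields consistency of the studentizers $\hat s_r(z)$.

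\textbf{Step 2 (R3), the main obstacle.} The corollary assumes ``the usual conditions for bootstrap consistency''. I will make these concrete as: bounded $(2+\delta)$-th moments of the score, and nonsingularity of each diagonal entry of $\Sigma(z)=\operatorname{Var}(A\phi(W;z))$, so that $s_r(z)>0$ for all $r\le 7$. The argument then runs in four parts.
\begin{enumerate}
\item Since the dimension is fixed at $7$, the Lindeberg--Feller CLT gives convergence of $\sqrt n\,(\hat g(z)-g(z))$ to $N(0,\Sigma(z))$.
\item Together with Lemma~\ref{lem:SE-consistency} and Slutsky, the studentized vector $T_n(z)$ converges to $N(0,D^{-1/2}\Sigma D^{-1/2})$, where $D=\operatorname{diag}\Sigma$.
\item The continuous mapping theorem then gives $M_n(z)\Rightarrow M(z)=\max_r|G_r|$ for a Gaussian vector $G$ with unit variances.
\item Conditionally on the data, the multiplier vector $T^*(z)$ is exactly Gaussian with covariance $\hat D^{-1/2}\hat\Sigma\hat D^{-1/2}$. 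By R2 this covariance converges in probability to the same limit, so $M_n^*(z)$ converges conditionally to the same law as $M(z)$.
\end{enumerate}
The delicate point is quantile consistency. Here I use the fact that $M(z)$ has a continuous, strictly increasing distribution function on $(0,\infty)$, which holds because each $|G_r|$ has a density and the unit variances make the maximum non-degenerate. This continuity lets the standard lemma (van der Vaart, Lemma 21.2) give $\hat\kappa_{1-\alpha}(z)\to_p\kappa_{1-\alpha}(z)$ and $\Pr(M_n(z)\le\hat\kappa_{1-\alpha}(z))\to1-\alpha$. If the diagonal of $\Sigma(z)$ had a zero entry this would fail, so I will state it explicitly as part of the bootstrap conditions.

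\textbf{Step 3 (conclusion).} With R1--R3 established, Theorem~\ref{thm:pns-R1R3} applies for each fixed $z$. The coverage event contains $\{M_n(z)\le\hat\kappa_{1-\alpha}(z)\}$, whose probability tends to $1-\alpha$ by Step 2. The limit claimed in \eqref{eq:pointwise-coverage} is then the one asserted in Theorem~\ref{thm:pns-R1R3}. I will cite it directly rather than re-derive it, since the corollary is stated as an immediate combination of that theorem with Proposition~\ref{prop:m-estimation}.
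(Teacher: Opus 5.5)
You follow the paper's route exactly: Proposition~\ref{prop:m-estimation} gives R1--R2, the added bootstrap conditions give R3, and Theorem~\ref{thm:pns-R1R3} closes the argument. Your Step~2 is a sound expansion of the R3 claim, which the paper only asserts by citation. In particular, asking only for positive diagonal entries of $\Sigma(z)$ is the right condition. Full nonsingularity is impossible here, since $A\in\mathbb{R}^{7\times 6}$ forces the covariance of $A\phi(W;z)$ to have rank at most $6$.

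The gap is in Step~3. The inclusion you write down, $\{M_n(z)\le\hat\kappa_{1-\alpha}(z)\}\subseteq\{\underline{\mathrm{PNS}}(z)\le\mathrm{PNS}(z)\le\overline{\mathrm{PNS}}(z)\}$, yields only $\liminf_n\Pr(\text{coverage})\ge 1-\alpha$. Deferring to Theorem~\ref{thm:pns-R1R3} does not turn this into a limit, because that theorem's proof rests on the same inclusion and then asserts equality without argument. Equality is false in general:
\begin{itemize}
\item If $\max\{0,g_1(z),g_2(z),g_3(z)\}<\mathrm{PNS}(z)<\min_{r\ge 4}g_r(z)$, then consistency of $\hat g(z)$ and $\hat\kappa_{1-\alpha}(z)\,\hat s_r(z)\ge 0$ force the coverage probability to $1$.
\item Suppose instead $\mathrm{PNS}(z)=g_1(z)>\max\{0,g_2(z),g_3(z)\}$ with all upper terms slack. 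In your $\sqrt{n}$-studentized scale, the coverage event is asymptotically $\{T_{n,1}(z)\le\hat\kappa_{1-\alpha}(z)\}$. Its limit is $\Pr(G_1\le\kappa)\ge 1-\alpha/2>1-\alpha$, because $\kappa$, the $(1-\alpha)$-quantile of $\max_r|G_r|$, dominates the $(1-\alpha)$-quantile of $|G_1|$.
\end{itemize}
The coverage event is one-sided in the binding components, while $M_n(z)$ is two-sided over all seven, so the construction is conservative by design.

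What your argument (and the paper's) actually establishes is $\Pr\bigl(\underline{\mathrm{PNS}}(z)\le\mathrm{PNS}(z)\le\overline{\mathrm{PNS}}(z)\bigr)\ge 1-\alpha+o(1)$. This is the CLR-type guarantee stated in the main text's ``Coverage guarantee'' paragraph, and you should conclude in that form rather than cite an exact limit.

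A smaller point in Step~2: you tacitly studentize by $\hat s_r(z)/\sqrt{n}$. With the paper's literal definitions, $\hat s_r^2(z)=n^{-1}\sum_i\hat\varphi_r(W_i;z)^2$ converges to a positive constant. Hence $T_{n,r}(z)=(\hat g_r(z)-g_r(z))/\hat s_r(z)\xrightarrow{p}0$ while $M_n^*(z)$ stays nondegenerate. Also, the half-widths $\hat\kappa_{1-\alpha}(z)\,\hat s_r(z)$ then do not shrink, which on its own drives coverage to $1$. Make the $\sqrt{n}$ normalization explicit, both in $T_n(z)$ and in the interval endpoints.
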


\begin{proof}
By Proposition~\ref{prop:m-estimation}, conditions (i)--(iii) imply that the
atomic estimators $\hat\eta(z)$ satisfy R1 and that the plug-in influence
functions satisfy R2. The additional bootstrap regularity assumptions, together
with the asymptotic linearity from R1--R2, give the multiplier bootstrap
validity R3 for the 7-dimensional studentized bounding vector
(see, for example, \citep{vandervaart1998asymptotic}; \citep{vandervaart1996weak};
\citep{chernozhukov2013gaussian}). Thus R1--R3 all hold, and
Theorem~\ref{thm:pns-R1R3} yields the claimed asymptotic coverage.
\end{proof}

\begin{remark}[On the plausibility of conditions for neural networks]\label{remark:nn-conditions}
Conditions (i)--(iii) are standard M-estimation requirements that underpin asymptotic inference 
for parametric and semiparametric models \citep{newey1994largesample,vandervaart1998asymptotic}. 
For general neural networks trained via stochastic gradient descent, these conditions cannot be 
verified \emph{a priori}. The loss landscape is nonconvex with potentially many local minima, 
the Hessian may be singular or ill-conditioned at certain points, and there is no guarantee 
that the optimization converges to a unique global minimizer.

However, several considerations suggest these conditions are plausible in practice:
\begin{enumerate}
\item \textbf{Empirical evidence from coverage experiments.} 
If conditions (i)--(iii) were severely violated, we would expect to observe substantial 
under-coverage in finite samples. Our experiments in Section~4 demonstrate near-nominal 
coverage across synthetic and semi-synthetic settings, providing direct evidence that 
the trained networks behave as if these conditions hold approximately.

\item \textbf{Local regularity suffices.} 
Proposition~\ref{prop:m-estimation} requires regularity only in a neighbourhood of the 
converged solution $\vartheta_0$, not globally. Even in nonconvex landscapes, 
the basin of attraction around a good local minimum may satisfy local convexity 
and non-degeneracy conditions.

\item \textbf{Implicit regularization.} 
Standard practices such as early stopping, weight decay, and batch normalization 
provide implicit regularization that can improve the conditioning of the Hessian 
and promote convergence to well-behaved solutions 
\citep[see, e.g.,][for theoretical perspectives]{blanc2020implicit, li2018algorithmic}.

\item \textbf{Consistency with the literature.} 
This approach of stating sufficient theoretical conditions and validating them empirically 
is standard in the neural network-based causal inference literature. 
For example, \citet{frauen2024neural} prove sensitivity bounds under regularity conditions 
and validate coverage empirically; \citet{chen2024doubly} establish double robustness 
under theoretical conditions implemented via neural networks. 
We follow the same scientific methodology.
\end{enumerate}
We therefore interpret Proposition~\ref{prop:m-estimation} as providing sufficient conditions 
for valid coverage, with our experimental results serving as evidence that these conditions 
are approximately satisfied by the networks we train. 
This represents the first finite-sample inference framework for probabilities of causation, 
and we view the combination of theoretical conditions with empirical validation as the 
appropriate scientific standard for neural network-based estimators.
\end{remark}

\subsubsection{Approximate computation of influence functions}
\label{subsubsec:approx-inf}

In practice, computing the exact influence functions in R1 and R2 requires
solving a linear system involving the Hessian of the population risk $H$.
For modern neural networks this system is prohibitively large to invert
directly: forming and inverting the Hessian would require $O(np^2 + p^3)$
operations when there are $n$ samples and $p$ parameters.  A standard
solution is to approximate inverse--Hessian--vector products using
conjugate-gradient (CG) solvers together with Hessian--vector products
(HVPs), which can be computed at a cost comparable to a gradient evaluation.
\citep{pearlmutter1994} derives an exact and numerically stable algorithm for
computing $H v$ without forming $H$; the product requires only $O(p)$ time and
space when the full gradient is the sum of per-example gradients.  As noted
by \citep{kohliang2017, pearlmutter1994, martens2010}, direct inversion of the Hessian is
infeasible for large networks, and modern implementations therefore compute
the influence function $H^{-1}v$ by combining HVPs with CG or stochastic
estimation \citep{martens2010,kohliang2017}.  This idea has been adopted in
many influence-function toolkits and is now standard in deep learning.

The following lemma shows that approximate solutions of the Hessian system do
not disturb the asymptotic validity of the multiplier bootstrap, provided
the approximation error vanishes in an $L^2$ sense.

\begin{lemma}[Stability to approximate influence functions]
\label{lem:approx}
Suppose conditions R1 and R2 hold and the multiplier bootstrap in R3 is
valid when using the exact influence functions $\phi(W_i;z)$.  Let
$\tilde\phi(W_i;z)$ denote an approximate influence estimate satisfying
\begin{equation}
\label{eq:approx-L2}
    \frac{1}{n}\sum_{i=1}^n\|\tilde\phi(W_i;z)-\phi(W_i;z)\|^2 \xrightarrow{p} 0.
\end{equation}
Define $\tilde\varphi(W_i;z) = A\,\tilde\phi(W_i;z)$ and the corresponding
standard errors $\tilde s_r^2(z) = (1/n)\sum_{i=1}^n \tilde\varphi_r(W_i;z)^2$.
Let $\tilde M_n^*(z)$ be the Gaussian multiplier bootstrap statistic formed
from $\tilde\varphi(W_i;z)$ in the same way as $M_n^*(z)$. Then
$\tilde M_n^*(z)$ converges in distribution to the same limit as $M_n(z)$,
and the precision-corrected PNS interval obtained from $\tilde M_n^*(z)$ has
the same asymptotic coverage $1-\alpha$ as in
Theorem~\ref{thm:pns-R1R3}.
\end{lemma}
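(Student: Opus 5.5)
The idea is to reduce everything to the case already handled by R3 and Theorem~\ref{thm:pns-R1R3}. I will show that replacing $\hat\phi$ by $\tilde\phi$ changes the bootstrap statistic and the standard errors only by $o_p(1)$. The first step is to upgrade the approximation condition \eqref{eq:approx-L2} to an $L^2$ comparison with the plug-in influence estimates. By the triangle inequality in the empirical $L^2$ norm, combined with R2,
\[
\Bigl(\tfrac1n\textstyle\sum_i\|\tilde\phi(W_i;z)-\hat\phi(W_i;z)\|^2\Bigr)^{1/2}\le \Bigl(\tfrac1n\textstyle\sum_i\|\tilde\phi-\phi\|^2\Bigr)^{1/2}+\Bigl(\tfrac1n\textstyle\sum_i\|\hat\phi-\phi\|^2\Bigr)^{1/2}\xrightarrow{p}0 .
\]
Premultiplying by the fixed matrix $A$ gives the same statement for $\tilde\varphi-\varphi$ and $\tilde\varphi-\hat\varphi$, exactly as in Lemma~\ref{lem:SE-consistency}.

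The second step is consistency of the standard errors. Since $\tilde s_r(z)$ and $\hat s_r(z)$ are empirical $L^2$ norms of the $r$-th coordinates, the reverse triangle inequality gives $|\tilde s_r(z)-\hat s_r(z)|\le (\frac1n\sum_i(\tilde\varphi_r-\hat\varphi_r)^2)^{1/2}\to_p 0$. Lemma~\ref{lem:SE-consistency} gives $\hat s_r(z)\to_p s_r(z)$. I will use the non-degeneracy in R3, i.e.\ a positive limiting variance for each coordinate, so that $s_r(z)$ is bounded away from zero. Hence $\tilde s_r/\hat s_r\to_p 1$ for every $r\le 7$. Consequently the studentized statistic $T_{n,r}$ built with $\tilde s_r$ differs from the one built with $\hat s_r$ by a factor $1+o_p(1)$. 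Because $T_{n,r}=O_p(1)$ by the joint CLT from R1, the max statistic $M_n(z)$ therefore has the same limit law under either studentization.

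The third step, which I expect to be the main obstacle, is the bootstrap statistic itself, because the convergence must hold conditionally on the data. Conditional on the data, $\tilde T_r^*-T_r^*$ is a centred Gaussian. Its conditional variance is controlled by
\[
\frac{1}{n}\sum_i\Bigl(\frac{\tilde\varphi_r(W_i;z)}{\tilde s_r}-\frac{\hat\varphi_r(W_i;z)}{\hat s_r}\Bigr)^2 ,
\]
and this tends to zero in probability by the first two steps, after adding and subtracting $\hat\varphi_r/\tilde s_r$. Since there are only seven coordinates, a union bound shows $\max_r|\tilde T_r^*-T_r^*|\to 0$ in conditional probability, in probability. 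Therefore $|\tilde M_n^*-M_n^*|$ does too, because $\max_r|\cdot|$ is 1-Lipschitz.

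Finally, R3 says the conditional law of $M_n^*$ converges to the non-degenerate limit of $M_n$. A Slutsky-type argument for conditional laws, for example via bounded Lipschitz metrics, then transfers this convergence to $\tilde M_n^*$. The limit distribution is a maximum of absolute Gaussian coordinates with a non-degenerate covariance, so it is continuous with a strictly increasing CDF at its $(1-\alpha)$-quantile. This lets me conclude $\tilde\kappa_{1-\alpha}(z)\to_p\kappa_{1-\alpha}(z)$. I then rerun the proof of Theorem~\ref{thm:pns-R1R3} with $(\tilde s_r,\tilde\kappa)$ in place of $(\hat s_r,\hat\kappa)$. The coverage event $\{|T_{n,r}|\le\tilde\kappa\ \forall r\}$ has probability tending to $1-\alpha$. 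Combined with the Tian--Pearl inequalities, this yields coverage $1-\alpha$.
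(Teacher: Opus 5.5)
Your proposal follows the same route as the paper's sketch: show that the bootstrap studentized vector built from $\tilde\varphi,\tilde s_r$ differs from the one built from $\hat\varphi,\hat s_r$ by $o_p(1)$ conditionally on the data, then use Slutsky, continuity of the max, and convergence of quantiles. It also fills in steps the sketch leaves implicit, namely linking $\tilde\phi$ to $\hat\phi$ through R2, $\tilde s_r/\hat s_r\to_p 1$ via positivity of $s_r$, the conditional Gaussian variance bound, and continuity of the limiting CDF at the quantile. One caveat, shared with the paper: the event $\{|T_{n,r}|\le\tilde\kappa\ \forall r\}$ only implies coverage rather than being equivalent to it, so what your final step actually shows is $\liminf\Pr(\text{coverage})\ge 1-\alpha$, not convergence to exactly $1-\alpha$.
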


\begin{proof}[Proof sketch]
Write $\delta_i(z)=\tilde\phi(W_i;z)-\phi(W_i;z)$ and observe that the
difference between the studentized sums based on $\tilde\phi$ and $\phi$ is
\begin{equation}
\label{eq:slutsky-diff}
    \frac{1}{\sqrt{n}\,\tilde s_r(z)}\sum_{i=1}^n \xi_i\,\tilde\varphi_r(W_i;z)
  - \frac{1}{\sqrt{n}\,\hat s_r(z)}\sum_{i=1}^n \xi_i\,\hat\varphi_r(W_i;z)
  = o_p(1)  
\end{equation}
in $L^2$ by the assumed $L^2$ consistency of $\tilde\phi$ and the moment
bound in R1.  Slutsky's theorem therefore implies that the joint law of the
studentized vector based on $\tilde\phi$ converges to the same Gaussian
limit as the one based on $\phi$, conditional on the data.  Since the max
function is continuous, the distributions of $\tilde M_n^*(z)$ and
$M_n^*(z)$ converge to the same limiting law, and the bootstrap quantiles
coincide asymptotically.  See \citep[Sec.~23]{vandervaart1998asymptotic} and
\citep[Lem.~3.1]{chernozhukov2013gaussian} for related arguments.
\end{proof}

In practice, approximate influence functions are computed by solving the
linear system $\hat H u = v$ with $v = \nabla_\vartheta g(z)$ using a
preconditioned CG solver.  Each iteration of CG requires an HVP, which can
be computed using the technique of \citep{pearlmutter1994}.  Modern
implementations treat $\hat H$ as either the empirical Hessian or a
Gauss--Newton/Fisher approximation, and regularize it by adding a small
multiple of the identity.  The residual tolerance in the CG solver is set to
decrease slowly with $n$ so that the $L^2$ consistency of $\tilde\phi$ holds;
for details see \citep{martens2010} and \citep{kohliang2017}.  For example,
\citep{kohliang2017} show how to compute influence functions in deep
networks by combining HVPs with CG, and note that implicit HVPs and CG can
scale to modern architectures at modest cost.  

%%%%%%%%%%%%%%%%%%%%%%%%%%%%%%%%%%%%%%%%%%%%%%%%%%%%%%%%%%%%%%%%%%%%%%%%%%%%%
\subsection{Uniform coverage guarantees over $\mathcal{Z}$}
\label{subsec:uniform}
%%%%%%%%%%%%%%%%%%%%%%%%%%%%%%%%%%%%%%%%%%%%%%%%%%%%%%%%%%%%%%%%%%%%%%%%%%%%%

The pointwise coverage result in Theorem~\ref{thm:pns-R1R3} guarantees that
for any \emph{fixed} $z$, the confidence interval
$[\underline{\mathrm{PNS}}(z), \overline{\mathrm{PNS}}(z)]$ covers the true
$\mathrm{PNS}(z)$ with probability approaching $1-\alpha$. However, for many
applications---such as identifying subgroups where $\mathrm{PNS}(z) > 0.5$,
or conducting inference on the entire heterogeneous treatment effect
surface---we require \emph{uniform} coverage: a single confidence band that
simultaneously covers $\mathrm{PNS}(z)$ for all $z \in \mathcal{Z}$ with
probability $1-\alpha$.

This section extends our results to provide such uniform guarantees. The key
modifications are: (i) strengthening the regularity conditions to hold
uniformly over $\mathcal{Z}$; (ii) controlling the complexity of the
$z$-indexed function class; and (iii) using a single global critical value
based on the supremum of the max statistic over $z$.

\subsubsection{Uniform regularity conditions}
\label{subsubsec:uniform-regularity}

We strengthen conditions R1--R3 to their uniform counterparts.

\begin{description}
  \item[U-R1 (Uniform asymptotic linearity).]
  The asymptotic linear expansion holds uniformly over $\mathcal{Z}$:
  \begin{equation}
    \label{eq:UR1}
    \sup_{z \in \mathcal{Z}} \left\|
      \hat\eta(z) - \eta(z) - \frac{1}{n}\sum_{i=1}^n \phi(W_i;z)
    \right\| = o_p(n^{-1/2}).
  \end{equation}

  \item[U-R2 (Uniform plug-in consistency).]
  The plug-in influence functions converge uniformly:
  \begin{equation}
    \label{eq:UR2}
    \sup_{z \in \mathcal{Z}} \frac{1}{n}\sum_{i=1}^n
      \bigl\|\hat\phi(W_i;z) - \phi(W_i;z)\bigr\|^2
    \xrightarrow{p} 0.
  \end{equation}

  \item[U-R3 (Uniform multiplier bootstrap validity).]
  Define the uniform max statistic
  \begin{equation}
    \label{eq:uniform-max}
    M_n := \sup_{z \in \mathcal{Z}} \max_{r \le 7} |T_{n,r}(z)|,
  \end{equation}
  and its Gaussian multiplier bootstrap analogue
  \begin{equation}
    \label{eq:uniform-bootstrap}
    M_n^* := \sup_{z \in \mathcal{Z}} \max_{r \le 7}
      \left| \frac{1}{\sqrt{n}\,\hat s_r(z)}
        \sum_{i=1}^n \xi_i\,\hat\varphi_r(W_i;z) \right|.
  \end{equation}
  Condition U-R3 requires that, conditional on the data, the distribution of
  $M_n^*$ converges in probability to the same limit as that of $M_n$, so
  that the bootstrap quantile $\hat\kappa_{1-\alpha}$ consistently estimates
  the $(1-\alpha)$-quantile of $M_n$.
\end{description}

\begin{remark}[Comparison with pointwise conditions]
The uniform conditions U-R1--U-R3 are strictly stronger than their pointwise
counterparts R1--R3. The key difference is that the remainder terms and
convergence must hold uniformly over all $z \in \mathcal{Z}$, not just at
each fixed $z$ individually.
\end{remark}

\subsubsection{Complexity conditions on $\mathcal{Z}$}
\label{subsubsec:complexity}

The uniform bootstrap validity in U-R3 requires controlling the complexity of
the $z$-indexed function class. We impose the following conditions.

\begin{assumption}[Regularity of $\mathcal{Z}$ and influence functions]
\label{ass:complexity}
\hfill
\begin{enumerate}
  \item[(a)] (\emph{Compactness}) $\mathcal{Z} \subset \mathbb{R}^d$ is compact.
  
  \item[(b)] (\emph{Lipschitz continuity}) The influence function
  $\phi(W;z)$ is Lipschitz in $z$: there exists $L(W)$ with
  $\mathbb{E}[L(W)^2] < \infty$ such that
  \[
    \|\phi(W;z_1) - \phi(W;z_2)\| \le L(W)\|z_1 - z_2\|
    \quad \text{for all } z_1, z_2 \in \mathcal{Z}.
  \]
  
  \item[(c)] (\emph{Moment bounds}) 
  $\sup_{z \in \mathcal{Z}} \mathbb{E}[\|\phi(W;z)\|^{2+\delta}] < \infty$
  for some $\delta > 0$.
  
  \item[(d)] (\emph{Non-degeneracy}) The covariance matrix
  $\mathrm{Var}(\varphi(W;z))$ is nonsingular uniformly over $z \in \mathcal{Z}$,
  i.e., $\inf_{z \in \mathcal{Z}} \lambda_{\min}(\mathrm{Var}(\varphi(W;z))) > 0$.
\end{enumerate}
\end{assumption}

\begin{remark}[Function class complexity]
Under Assumption~\ref{ass:complexity}(a)--(b), the function class
$\mathcal{F} = \{\phi(\cdot;z) : z \in \mathcal{Z}\}$ is a Lipschitz image
of the compact set $\mathcal{Z}$, and hence has bounded covering numbers.
Specifically, for any $\varepsilon > 0$,
\[
  \log N(\varepsilon, \mathcal{F}, L^2(P)) \lesssim d \log(1/\varepsilon),
\]
where $N(\varepsilon, \mathcal{F}, L^2(P))$ denotes the $\varepsilon$-covering
number.  This entropy bound is sufficient for the Gaussian strong
approximation results of \citep{chernozhukov2013gaussian} and the
intersection bounds methodology of \citep{chernozhukov2013intersection}.
\end{remark}

\subsubsection{Sufficient conditions via uniform M-estimation}
\label{subsubsec:uniform-Mest}

We now provide sufficient conditions under which the neural network estimators
satisfy U-R1 and U-R2.

\begin{proposition}[Uniform M-estimation implies U-R1--U-R2]
\label{prop:uniform-mest}
Suppose the conditions of Proposition~\ref{prop:m-estimation} hold, and
additionally:
\begin{enumerate}
  \item[(iv)] (\emph{Uniform smoothness}) The map
  $(\vartheta, z) \mapsto h(\vartheta;z)$ is continuously differentiable with
  \[
    \sup_{z \in \mathcal{Z}} \|\nabla_\vartheta h(\vartheta;z)\| \le C
    \quad \text{for all } \vartheta \text{ in a neighbourhood of } \vartheta_0.
  \]
  
  \item[(v)] (\emph{Lipschitz in $z$}) The gradient
  $\nabla_\vartheta h(\vartheta_0;z)$ is Lipschitz in $z$:
  \[
    \|\nabla_\vartheta h(\vartheta_0;z_1) - \nabla_\vartheta h(\vartheta_0;z_2)\|
    \le C\|z_1 - z_2\|
    \quad \text{for all } z_1, z_2 \in \mathcal{Z}.
  \]
\end{enumerate}
Then the estimator $\hat\eta(z) = h(\hat\vartheta;z)$ satisfies U-R1 and U-R2.
\end{proposition}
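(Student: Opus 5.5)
The plan is to reuse the pointwise argument behind Proposition~\ref{prop:m-estimation}. The key observation is that all randomness enters through the single finite-dimensional vector $\hat\vartheta$. The $z$-dependence enters only through the deterministic map $h(\cdot;z)$, so uniformity in $z$ is handled by (iv)--(v).

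\textbf{Step 1 (parameter expansion).} Under (ii), exactly as in the proof of Proposition~\ref{prop:m-estimation}, we have
\[
\hat\vartheta-\vartheta_0 = \frac1n\sum_{i=1}^n \mathrm{IF}_\vartheta(W_i) + o_p(n^{-1/2}), \qquad \mathrm{IF}_\vartheta(W) := -H(\vartheta_0)^{-1}\nabla_\vartheta\ell(W,\vartheta_0).
\]
This also gives $\|\hat\vartheta-\vartheta_0\|=O_p(n^{-1/2})$. This statement does not involve $z$ at all.

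\textbf{Step 2 (uniform delta method, U-R1).} For each $z$, the mean value theorem applied componentwise gives
\[
h(\hat\vartheta;z)-h(\vartheta_0;z)=\nabla_\vartheta h(\bar\vartheta_z;z)(\hat\vartheta-\vartheta_0),
\]
with $\bar\vartheta_z$ on the segment between $\hat\vartheta$ and $\vartheta_0$. Writing $\phi(W;z)=\nabla_\vartheta h(\vartheta_0;z)\mathrm{IF}_\vartheta(W)$, the U-R1 remainder is bounded by
\[
\sup_z\|\nabla_\vartheta h(\bar\vartheta_z;z)-\nabla_\vartheta h(\vartheta_0;z)\|\,\|\hat\vartheta-\vartheta_0\| \;+\; \sup_z\|\nabla_\vartheta h(\vartheta_0;z)\|\cdot o_p(n^{-1/2}).
\]
The second term is $o_p(n^{-1/2})$ by (iv).

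The first term is the main obstacle. It requires $\sup_z\|\nabla_\vartheta h(\vartheta;z)-\nabla_\vartheta h(\vartheta_0;z)\|\to0$ as $\vartheta\to\vartheta_0$, i.e.\ equicontinuity of the Jacobian in $\vartheta$ uniformly over $z$. Condition (iv) asserts joint continuous differentiability of $(\vartheta,z)\mapsto h$. So I will argue that $\nabla_\vartheta h$ is jointly continuous on the compact set $\bar B(\vartheta_0,r)\times\mathcal{Z}$, using Assumption~\ref{ass:complexity}(a). It is therefore uniformly continuous there, which gives the needed modulus $\omega(\cdot)$ with $\omega(\|\hat\vartheta-\vartheta_0\|)=o_p(1)$. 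Multiplying by $O_p(n^{-1/2})$ yields $o_p(n^{-1/2})$. If joint continuity is not granted by (iv), I would instead add it explicitly as an interpretation of (iv).

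\textbf{Step 3 (U-R2).} The plug-in estimator is $\hat\phi(W_i;z)=\hat J(z)\hat B\,\hat g_i$, where $\hat J(z)=\nabla_\vartheta h(\hat\vartheta;z)$, $\hat B=-\hat H(\hat\vartheta)^{-1}$ and $\hat g_i=\nabla_\vartheta\ell(W_i,\hat\vartheta)$. The true influence function is $\phi(W_i;z)=J(z)B g_i$, with $J(z)=\nabla_\vartheta h(\vartheta_0;z)$, $B=-H(\vartheta_0)^{-1}$ and $g_i=\nabla_\vartheta\ell(W_i,\vartheta_0)$. Telescoping gives
\[
\hat\phi-\phi=(\hat J-J)\hat B\hat g_i+J(\hat B-B)\hat g_i+JB(\hat g_i-g_i).
\]
Hence
\[
\sup_z\frac1n\sum_i\|\hat\phi(W_i;z)-\phi(W_i;z)\|^2 \lesssim \sup_z\|\hat J-J\|^2\,\frac1n\sum_i\|\hat g_i\|^2 + C^2\|\hat B-B\|^2\,\frac1n\sum_i\|\hat g_i\|^2 + C^2\|B\|^2\,\frac1n\sum_i\|\hat g_i-g_i\|^2 .
\]
I then treat each factor in turn:
\begin{itemize}
\item $\sup_z\|\hat J-J\|\to_p0$ by the uniform continuity from Step 2.
\item $\hat B\to_p B$ by consistency of the empirical Hessian and nonsingularity of $H(\vartheta_0)$ (the same facts used pointwise in Proposition~\ref{prop:m-estimation}).
\item $\frac1n\sum_i\|\hat g_i-g_i\|^2\to_p0$ and $\frac1n\sum_i\|\hat g_i\|^2=O_p(1)$ are exactly the $z$-free ingredients already used for R2 in that proposition.
\end{itemize}
The bound (iv) supplies the uniform constant $C$. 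Each term is thus $o_p(1)$ uniformly in $z$, which establishes U-R2.

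Condition (v) is not needed for U-R1 or U-R2 themselves. I will remark that it yields the Lipschitz property of Assumption~\ref{ass:complexity}(b), with $L(W)=C\|\mathrm{IF}_\vartheta(W)\|$, for later use in U-R3.
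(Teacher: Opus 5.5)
Your proposal follows the same route as the paper's sketch. U-R1 comes from a uniform mean-value/delta-method expansion of $h(\hat\vartheta;z)$ around $\vartheta_0$. U-R2 comes from consistency of $\hat\vartheta$ and $\hat H$ together with the uniform Jacobian bound in (iv). Condition (v) serves only to deliver Assumption~\ref{ass:complexity}(b), via $L(W)=C\|\mathrm{IF}_\vartheta(W)\|$.

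Your Step~2 is more careful than the paper, which attributes the uniform $o_p(n^{-1/2})$ remainder to (iv) and the mean value theorem alone. The bound in (iv) only yields $O_p(n^{-1/2})$ for that remainder. The uniform equicontinuity of $\nabla_\vartheta h$ that you extract from joint continuity on $\bar B(\vartheta_0,r)\times\mathcal{Z}$ needs an extra ingredient such as compactness of $\mathcal{Z}$ (Assumption~\ref{ass:complexity}(a)). That ingredient is not among the proposition's hypotheses and should be added, as you propose.
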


\begin{proof}[Proof sketch]
The proof follows the same structure as Proposition~\ref{prop:m-estimation}, but
with uniform control over $z$. The key observation is that the remainder in
the delta method expansion,
\[
  \hat\eta(z) - \eta(z) - \nabla_\vartheta h(\vartheta_0;z)(\hat\vartheta - \vartheta_0),
\]
is $o_p(n^{-1/2})$ uniformly in $z$ by condition (iv) and the mean value
theorem. The Lipschitz condition (v), combined with the $\sqrt{n}$-consistency
of $\hat\vartheta$, ensures the influence function class satisfies
Assumption~\ref{ass:complexity}(b). The uniform $L^2$ convergence in U-R2
follows from uniform consistency of $\hat\vartheta$ and $\hat H$, together
with the uniform smoothness conditions.
\end{proof}

\subsubsection{Uniform coverage theorem}
\label{subsubsec:uniform-theorem}

We now state the main uniform coverage result. Define the uniform confidence
band using a single global critical value:
\begin{align}
  \underline{\mathrm{PNS}}(z)
  &:= \max\Bigl\{
    0,\
    \max_{r \in \{1,2,3\}}
      \bigl[\hat g_r(z) - \hat\kappa_{1-\alpha}\,\hat s_r(z)\bigr]
  \Bigr\}, \label{eq:uniform-lower}\\
  \overline{\mathrm{PNS}}(z)
  &:= \min_{r \in \{4,5,6,7\}}
      \bigl[\hat g_r(z) + \hat\kappa_{1-\alpha}\,\hat s_r(z)\bigr], \label{eq:uniform-upper}
\end{align}
where $\hat\kappa_{1-\alpha}$ is the empirical $(1-\alpha)$-quantile of
$M_n^*$ conditional on the data.

\begin{theorem}[Uniform coverage for PNS bounds]
\label{thm:uniform-pns}
Assume:
\begin{enumerate}
  \item[(i)] The Tian--Pearl inequalities \eqref{eq:tian-pearl-ineq} hold at
  the true distribution for all $z \in \mathcal{Z}$.
  
  \item[(ii)] The estimators satisfy the uniform regularity conditions
  U-R1, U-R2, and U-R3.
  
  \item[(iii)] Assumption~\ref{ass:complexity} holds.
\end{enumerate}
Then,
\begin{equation}
\label{eq:uniform-coverage}
  \Pr\Bigl(
    \forall z \in \mathcal{Z}:\;
    \underline{\mathrm{PNS}}(z)
    \le \mathrm{PNS}(z)
    \le \overline{\mathrm{PNS}}(z)
  \Bigr)
  \;\to\; 1 - \alpha
  \quad \text{as } n_{\mathrm{obs}}, n_{\mathrm{exp}} \to \infty.
\end{equation}
\end{theorem}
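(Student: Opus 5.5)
The plan mirrors the proof sketch of Theorem~\ref{thm:pns-R1R3}, with the fixed-$z$ maximum replaced by the supremum over $z$. Define the simultaneous event
\[
  E_n := \Bigl\{ \sup_{z\in\mathcal{Z}} \max_{r\le 7} |T_{n,r}(z)| \le \hat\kappa_{1-\alpha} \Bigr\} = \{ M_n \le \hat\kappa_{1-\alpha}\},
\]
and the coverage event $C_n := \{\forall z:\ \underline{\mathrm{PNS}}(z)\le \mathrm{PNS}(z)\le \overline{\mathrm{PNS}}(z)\}$. The proof has three steps: (a) $\Pr(E_n)\to 1-\alpha$; (b) $E_n\subseteq C_n$; (c) $\Pr(C_n\setminus E_n)\to 0$. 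Together these give \eqref{eq:uniform-coverage}.

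For (a), I would first use U-R1 and the linearity $g=A\eta$ to write, uniformly in $z$,
\[
  \hat g(z)-g(z)=\tfrac1n\sum_{i}\varphi(W_i;z)+o_p(n^{-1/2}).
\]
Next I would extend Lemma~\ref{lem:SE-consistency} to a uniform statement, $\sup_z|\hat s_r(z)/s_r(z)-1|\to_p 0$. This follows from U-R2, Assumption~\ref{ass:complexity}(c), and a uniform law of large numbers. The ULLN holds because the class $\{\varphi_r(\cdot;z)^2\}$ inherits Lipschitz-in-$z$ structure from (b) and hence has finite bracketing entropy on the compact set $\mathcal{Z}$. Assumption~\ref{ass:complexity}(d) keeps $s_r$ bounded away from zero, so studentization is uniformly stable.

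With this, $\{T_{n,r}(z)\}_{z,r}$ equals a normalized empirical process plus a uniformly $o_p(1)$ term. The entropy bound $\log N(\varepsilon)\lesssim d\log(1/\varepsilon)$ from the remark following Assumption~\ref{ass:complexity}, together with the moment condition, then gives a Gaussian coupling for $M_n$ via \citep{chernozhukov2013gaussian}. Non-degeneracy yields anti-concentration of the Gaussian supremum, so the limit law of $M_n$ is continuous at its $(1-\alpha)$-quantile. U-R3 then gives $\Pr(M_n\le\hat\kappa_{1-\alpha})\to 1-\alpha$; this is a standard quantile-consistency argument using continuity of the limit distribution.

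Step (b) is deterministic. On $E_n$, the display \eqref{eq:coverage-event} holds for every $r$ and every $z$ at once. Taking the maximum over $r\in\{1,2,3\}$ and adjoining $0$ gives $\underline{\mathrm{PNS}}(z)\le\max\{0,g_1,g_2,g_3\}(z)$. Taking the minimum over $r\in\{4,\dots,7\}$ gives $\overline{\mathrm{PNS}}(z)\ge\min_r g_r(z)$. Hypothesis (i) then sandwiches $\mathrm{PNS}(z)$ for all $z$ simultaneously, so $E_n\subseteq C_n$.

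The main obstacle is (c), the reverse direction needed for the exact limit rather than $\liminf\ge 1-\alpha$. On $C_n\setminus E_n$ some $|T_{n,r}(z)|$ exceeds $\hat\kappa_{1-\alpha}$ while $\mathrm{PNS}(z)$ is still covered. This can only happen if either the violated component is non-binding, i.e.\ it is slack relative to $\mathrm{PNS}(z)$, or the deviation goes in the harmless direction. First I would try to localize: replace the index set by the contact set $\{(z,r): g_r(z)=\mathrm{PNS}(z)\}$ and show that, in the least-favourable configuration where every bounding functional is active, $C_n$ and $E_n$ differ only on events where a studentized component lies within $o_p(1)$ of $\pm\hat\kappa_{1-\alpha}$. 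Gaussian anti-concentration from step (a) would then send that probability to zero.

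The delicate points in this step are the one-sided versus two-sided nature of the violations, and the fact that $|T|$ enters $M_n$ symmetrically while coverage is one-sided for each bound. These must be matched through the identification $\mathrm{PNS}(z)=g_r(z)$ on the contact set. I expect this is exactly where the argument needs the active-constraint structure, because away from it one only obtains the inequality $\liminf \Pr(C_n)\ge 1-\alpha$ from (a)–(b).
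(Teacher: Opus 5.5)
Your steps (a) and (b) are, in substance, the paper's entire proof. Its Steps 1--3 (uniform linearization from U-R1, a Gaussian coupling from the entropy bound, and bootstrap quantile consistency giving $\Pr(M_n\le\hat\kappa_{1-\alpha})\to 1-\alpha$) are your (a), and its Step 4 is your inclusion $E_n\subseteq C_n$. The paper then writes ``the result follows'', which only justifies $\liminf_n\Pr(C_n)\ge 1-\alpha$. You are right that the displayed limit would need something like (c). However, (c) is false in general, so this gap cannot be closed by localization or any other argument. What (a)--(b) actually establish, as does the paper's proof, is asymptotic conservativeness, and that is the conclusion you should state.

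The failure of (c) is structural, not a boundary effect; it survives the natural weakening of Assumption~\ref{ass:complexity}(d). That assumption should be read as $\inf_z s_r(z)>0$ for each $r$, which is all that studentization and Gaussian anti-concentration for maxima need. The full-rank version can never hold, since $\varphi=A\phi$ with $A\in\mathbb{R}^{7\times 6}$; indeed $\mu_1-\mu_0=(p_{11}+p_{01}-\mu_0)+(\mu_1-p_{11}-p_{01})$ makes one row of $A$ the sum of two others. Taken literally, the theorem is therefore vacuous. Under the weaker reading, the problem with (c) is that each component enters $C_n$ only one-sidedly:
\begin{itemize}
\item an active lower term requires only $T_{n,r}(z)\le\hat\kappa_{1-\alpha}$;
\item an active upper term requires only $T_{n,r}(z)\ge-\hat\kappa_{1-\alpha}$;
\item a term slack by a fixed margin is satisfied with probability tending to one.
\end{itemize}
By contrast, $E_n$ bounds $|T_{n,r}(z)|$ for every $(z,r)$. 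So $C_n\setminus E_n$ contains whole tail events such as $\{T_{n,r}(z)<-\hat\kappa_{1-\alpha}\}$ for an active lower term. Their limiting probability is generically bounded away from zero, whereas anti-concentration only controls thin shells $\{|M_n-\hat\kappa_{1-\alpha}|\le\epsilon\}$.

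This already holds in your least-favourable configuration, which in any case cannot occur. By the identity above, equal lower terms must all vanish, and then the upper terms $\mu_1$ and $1-\mu_0$ cannot also vanish without forcing $\mu_1-\mu_0=-1$. At the other extreme, suppose $\mathrm{PNS}(z)$ lies inside $[\max\{0,g_1(z),g_2(z),g_3(z)\},\min_{r\ge 4}g_r(z)]$ with a uniform margin $c>0$. This is permitted, because the Tian--Pearl bounds are sharp and the hypotheses do not locate $\mathrm{PNS}(z)$ within them. Since $\hat\kappa_{1-\alpha}\hat s_r(z)\ge 0$, we get $C_n^c\subseteq\{\sup_{z,r}|\hat g_r(z)-g_r(z)|>c\}$. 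Hence $\Pr(C_n)\to 1$ and $\Pr(C_n\setminus E_n)\to\alpha$. Exact $1-\alpha$ coverage would require a different, one-sided, contact-set-localized critical value, and even that fails when $\mathrm{PNS}(z)$ is interior.
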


\begin{proof}
The proof proceeds in four steps.

\emph{Step 1: Uniform asymptotic linearity.}
Under U-R1, the bounding functional estimators satisfy
\[
  \sup_{z \in \mathcal{Z}} \left\|
    \hat g(z) - g(z) - \frac{1}{n}\sum_{i=1}^n \varphi(W_i;z)
  \right\| = o_p(n^{-1/2}),
\]
where $\varphi(W_i;z) = A\phi(W_i;z) \in \mathbb{R}^7$.

\emph{Step 2: Gaussian strong approximation.}
Under Assumption~\ref{ass:complexity}, the function class
$\{\varphi(\cdot;z) : z \in \mathcal{Z}\}$ has bounded entropy. By the
strong approximation theory for maxima of empirical processes
\citep{chernozhukov2013gaussian}, there exists a sequence of Gaussian
processes $\{Z_n(z)\}_{z \in \mathcal{Z}}$ such that
\[
  \sup_{z \in \mathcal{Z}} \max_{r \le 7}
    \left| T_{n,r}(z) - Z_{n,r}(z) \right| = o_p(1).
\]
This strong approximation holds even though the limiting process may be
non-Donsker (i.e., not weakly convergent); see
\citep{chernozhukov2013intersection} for analogous arguments in the
intersection bounds setting.

\emph{Step 3: Multiplier bootstrap validity.}
Under U-R2 and U-R3, the conditional distribution of $M_n^*$ given the data
approximates that of $M_n$:
\[
  \sup_{t \in \mathbb{R}} \left|
    \Pr(M_n^* \le t \mid \text{Data}) - \Pr(M_n \le t)
  \right| \xrightarrow{p} 0.
\]
Consequently, the bootstrap quantile $\hat\kappa_{1-\alpha}$ consistently
estimates the $(1-\alpha)$-quantile of $M_n$:
\[
  \Pr(M_n \le \hat\kappa_{1-\alpha}) \to 1 - \alpha.
\]

\emph{Step 4: Coverage via intersection bounds.}
On the event $\{M_n \le \hat\kappa_{1-\alpha}\}$, we have
$|T_{n,k}(z)| \le \hat\kappa_{1-\alpha}$ for all $k \le 7$ and all
$z \in \mathcal{Z}$. This implies \eqref{eq:coverage-event} holds
simultaneously for all $z$. Combined with the Tian--Pearl inequalities
\eqref{eq:tian-pearl-ineq}, we obtain
\[
  \underline{\mathrm{PNS}}(z) \le \mathrm{PNS}(z) \le \overline{\mathrm{PNS}}(z)
  \quad \text{for all } z \in \mathcal{Z}
\]
on this event. Since $\Pr(M_n \le \hat\kappa_{1-\alpha}) \to 1 - \alpha$,
the result follows.
\end{proof}

\begin{remark}[Relationship to intersection bounds]
\label{rem:intersection}
The uniform inference problem for conditional PNS maps directly to the
intersection bounds framework of \citep{chernozhukov2013intersection}.
In their notation, the parameter of interest $\theta^*$ is bounded by
functions $\theta^l(v)$ and $\theta^u(v)$ over $v \in \mathcal{V}$.
In our setting, the index $v$ corresponds to the pair $(r, z)$ where
$r \in \{1,\ldots,7\}$ indexes the Tian--Pearl bound components and
$z \in \mathcal{Z}$ indexes the covariate value. The uniform max statistic
\eqref{eq:uniform-max} corresponds to taking the supremum over this
joint index space.
\end{remark}

\subsubsection{Comparison: pointwise versus uniform coverage}
\label{subsubsec:comparison}

Table~\ref{tab:comparison} summarizes the key differences between pointwise
and uniform coverage guarantees.

\begin{table}[ht]
\centering
\caption{Comparison of pointwise and uniform coverage guarantees}
\label{tab:comparison}
\begin{tabular}{@{}lll@{}}
\toprule
\textbf{Aspect} & \textbf{Pointwise} & \textbf{Uniform} \\
\midrule
Coverage statement & 
  $\Pr(\mathrm{PNS}(z) \in \mathrm{CI}_z) \to 1-\alpha$ & 
  $\Pr(\forall z: \mathrm{PNS}(z) \in \mathrm{CI}_z) \to 1-\alpha$ \\
Critical value & 
  $\hat\kappa_{1-\alpha}(z)$ depends on $z$ & 
  Single global $\hat\kappa_{1-\alpha}$ \\
Max statistic &
  $\max_{r \le 7} |T_{n,r}(z)|$ &
  $\sup_{z \in \mathcal{Z}} \max_{r \le 7} |T_{n,r}(z)|$ \\
Assumptions & 
  R1--R3 at fixed $z$ & 
  U-R1, U-R2, U-R3 uniformly \\
Complexity control & 
  Not required & 
  Required (Assumption~\ref{ass:complexity}) \\
Interval width & 
  Narrower & 
  Wider \\
\bottomrule
\end{tabular}
\end{table}

\begin{remark}[Width of uniform bands]
The uniform confidence bands are necessarily wider than the pointwise bands
because the critical value $\hat\kappa_{1-\alpha}$ must control the maximum
deviation $\sup_z |T_n(z)|$ across all $z$, which is stochastically larger
than the deviation $|T_n(z_0)|$ at any single point $z_0$. This can be
viewed as a multiple testing penalty: to guarantee $1-\alpha$ coverage
everywhere simultaneously, each individual interval must be wider than a
$(1-\alpha)$ pointwise interval. The uniform bands use the exact joint
distribution via bootstrap, which is less conservative than Bonferroni
correction.
\end{remark}

\begin{remark}[When uniform coverage is needed]
Uniform coverage is essential when:
\begin{enumerate}
  \item Conducting inference on the entire function $z \mapsto \mathrm{PNS}(z)$;
  \item Identifying regions where $\mathrm{PNS}(z)$ exceeds a threshold
  (e.g., $\{z : \mathrm{PNS}(z) > 0.5\}$);
  \item Testing whether treatment effects are heterogeneous across subgroups;
  \item Making simultaneous comparisons across multiple covariate values.
\end{enumerate}
For inference at a single pre-specified $z$, the pointwise result
(Theorem~\ref{thm:pns-R1R3}) suffices and yields tighter intervals.
\end{remark}

\subsubsection{Extension: approximate influence functions}
\label{subsubsec:uniform-approx}

The stability result in Lemma~\ref{lem:approx} extends to the uniform setting.

\begin{lemma}[Stability of uniform coverage to approximate influence functions]
\label{lem:uniform-approx}
Suppose conditions U-R1 and U-R2 hold with exact influence functions, and
the multiplier bootstrap in U-R3 is valid. Let $\tilde\phi(W_i;z)$ denote
approximate influence estimates satisfying
\begin{equation}
\label{eq:uniform-approx-L2}
  \sup_{z \in \mathcal{Z}} \frac{1}{n}\sum_{i=1}^n
    \|\tilde\phi(W_i;z) - \phi(W_i;z)\|^2 \xrightarrow{p} 0.
\end{equation}
Then the uniform confidence band constructed using $\tilde\phi$ has the same
asymptotic coverage $1-\alpha$ as in Theorem~\ref{thm:uniform-pns}.
\end{lemma}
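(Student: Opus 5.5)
The plan is to reduce the claim to Theorem~\ref{thm:uniform-pns}. That theorem only needs the bootstrap critical value to consistently estimate the $(1-\alpha)$-quantile of $M_n$. So it suffices to show that the bootstrap statistic $\tilde M_n^*$ built from $\tilde\phi$ has, conditionally on the data, the same limiting law as $M_n^*$ built from $\hat\phi$. Step 4 of that proof is purely algebraic and is unaffected by the switch: on the event $\{M_n \le \tilde\kappa_{1-\alpha}\}$ the Tian--Pearl inequalities \eqref{eq:tian-pearl-ineq} give coverage for all $z$ simultaneously.

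\textbf{Step 1: transfer the $L^2$ closeness to the bounding functionals.}
Set $\delta_i(z) = \tilde\phi(W_i;z)-\hat\phi(W_i;z)$. By \eqref{eq:uniform-approx-L2}, U-R2 and the triangle inequality,
\[
\sup_z \frac1n\sum_i\|\delta_i(z)\|^2 \xrightarrow{p} 0.
\]
Premultiplying by the fixed matrix $A$ gives the same bound for $\tilde\varphi-\hat\varphi$. It follows that $\sup_z|\tilde s_r(z)-\hat s_r(z)|\to_p 0$, since $|\|a\|_n-\|b\|_n|\le\|a-b\|_n$ for the empirical $L^2$ seminorm. Assumption~\ref{ass:complexity}(d) bounds $\hat s_r(z)$ away from zero uniformly in $z$. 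Hence the ratios $\tilde s_r/\hat s_r$ converge to $1$ uniformly.

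\textbf{Step 2: control the difference of the bootstrap processes.}
Write
\[
D_r(z)=n^{-1/2}\sum_i\xi_i\bigl(\tilde\varphi_r(W_i;z)-\hat\varphi_r(W_i;z)\bigr).
\]
Conditionally on the data, $D_r$ is a centered Gaussian process with variance $\frac1n\sum_i(\tilde\varphi_r-\hat\varphi_r)^2$, which is uniformly $o_p(1)$. I must show that $\sup_{z,r}|D_r(z)|=o_p(1)$ conditionally, which I expect to be the main obstacle: pointwise smallness of the variance does not immediately control a supremum over the compact $\mathcal{Z}$.

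To get the supremum, I would first try Dudley's entropy integral (or a maximal inequality) for this Gaussian process. This requires a metric-entropy bound on $\{\tilde\varphi_r(\cdot;z)-\hat\varphi_r(\cdot;z)\}_z$. I would obtain it by assuming, or deducing from Proposition~\ref{prop:uniform-mest}(v), that $\tilde\phi$ and $\hat\phi$ are Lipschitz in $z$ with empirically square-integrable constants, as in Assumption~\ref{ass:complexity}(b). The class then has log-covering number of order $d\log(1/\varepsilon)$. Combining the uniformly vanishing diameter with this entropy gives the desired $o_p(1)$ supremum.

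If the Lipschitz property of $\tilde\phi$ cannot be secured (for instance when CG iterates are non-smooth in $z$), the fallback is to restrict the supremum to a finite grid of size growing polynomially in $n$. Then a Gaussian maximal bound of the form $\sqrt{\log N}\cdot\sigma_n$ suffices, provided the convergence in \eqref{eq:uniform-approx-L2} is fast enough relative to $\log N$.

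\textbf{Step 3: pass to quantiles.}
Combining Steps 1--2, $|\tilde M_n^*-M_n^*|\le \sup_{z,r}|T^*_r(z)|\cdot o_p(1)+o_p(1)$ conditionally. Here $\sup_{z,r}|T^*_r(z)| = M_n^*$ is $O_p(1)$ by U-R3. So $\tilde M_n^*-M_n^*=o_p(1)$, and U-R3 transfers to $\tilde M_n^*$.

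For convergence of quantiles I need the limiting law of $M_n$ to be continuous at its $(1-\alpha)$-quantile. Since the studentized Gaussian limit is non-degenerate by Assumption~\ref{ass:complexity}(d), the anti-concentration inequality of \citep{chernozhukov2013gaussian} for suprema of Gaussian processes supplies this. Hence $\Pr(M_n\le\tilde\kappa_{1-\alpha})\to1-\alpha$. Rerunning Step~4 of Theorem~\ref{thm:uniform-pns} with the standard errors $\tilde s_r$, and using $\tilde s_r/\hat s_r\to1$ uniformly from Step 1, then yields the stated uniform coverage.
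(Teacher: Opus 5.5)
Your plan follows the paper's route. The paper's proof simply asserts that \eqref{eq:uniform-approx-L2} makes the bootstrap statistics built from $\tilde\phi$ and from $\phi$ differ by $o_p(1)$ uniformly in $z$. It then invokes Slutsky's theorem and continuity of the supremum, as in Lemma~\ref{lem:approx}. Your Steps~1 and~3 are a careful version of this. Comparing with $\hat\phi$ rather than $\phi$ is the right choice, since U-R3 is stated for $\hat\varphi$.

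The real difference is your Step~2, and there you are right and the paper is not. A uniformly vanishing conditional variance does not control $\sup_z|D_r(z)|$, and the conclusion genuinely fails without extra structure on $\tilde\phi$. Here is a counterexample. The lemma constrains $\tilde\phi$ only through \eqref{eq:uniform-approx-L2}, so let $\mathcal Z$ be infinite and put $c_n=n^{-1/4}$. Choose vectors $a(z)\in\mathbb{R}^n$ with $n^{-1}\sum_i a_i(z)^2=c_n^2$ for every $z$, and with directions $a(z)/\|a(z)\|_2$ dense in the unit sphere of $\mathbb{R}^n$. Set $\tilde\phi(W_i;z)=\hat\phi(W_i;z)+a_i(z)\,e$, where $e$ is a fixed unit vector with $(Ae)_r\neq 0$ for some $r$. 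Then \eqref{eq:uniform-approx-L2} holds by U-R2, yet $\sup_z n^{-1/2}\sum_i\xi_i a_i(z)=c_n\|\xi\|_2\asymp n^{1/4}$. Hence $\tilde M_n^*\to\infty$, $\tilde\kappa_{1-\alpha}\to\infty$, and the band's coverage tends to one rather than $1-\alpha$.

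An extra condition such as your Lipschitz/entropy hypothesis is therefore unavoidable, so you are proving a corrected version of the lemma, which is the best available. Write $\|f\|_n^2:=n^{-1}\sum_i f(W_i)^2$ and $\sigma_n:=\sup_z\|\tilde\varphi_r(\cdot;z)-\hat\varphi_r(\cdot;z)\|_n$, and let $\Lambda_n=O_p(1)$ be the empirical Lipschitz constant. Under your hypothesis, Dudley's bound gives $\mathbb{E}[\sup_z|D_r(z)|\mid\text{data}]\lesssim\sigma_n\bigl(1+\sqrt{d\log(1+\Lambda_n/\sigma_n)}\bigr)\to0$, and the rest of your argument goes through.

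For the approximations the paper actually uses (Section~\ref{subsubsec:approx-inf}), there is a cheaper fix that needs no smoothness of $\tilde\phi$ in $z$. Both $\hat\varphi_r$ and the CG-based $\tilde\varphi_r$ factor through the scores $s_i=\nabla_\vartheta\ell(W_i,\hat\vartheta)\in\mathbb{R}^p$: write $\hat\varphi_r(W_i;z)=\hat u_r(z)^\top s_i$ and $\tilde\varphi_r(W_i;z)=\tilde u_r(z)^\top s_i$. Then $D_r(z)=(\tilde u_r(z)-\hat u_r(z))^\top S_n^*$ with $S_n^*=n^{-1/2}\sum_i\xi_i s_i$. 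Cauchy--Schwarz in the geometry of $\hat\Sigma=n^{-1}\sum_i s_is_i^\top$ gives
\[
\sup_{z}|D_r(z)|\le\sup_{z}\|\tilde\varphi_r(\cdot;z)-\hat\varphi_r(\cdot;z)\|_n\cdot\bigl((S_n^*)^\top\hat\Sigma^{+}S_n^*\bigr)^{1/2},
\]
where $\hat\Sigma^{+}$ is the Moore--Penrose inverse. The last factor is conditionally $\chi$-distributed with at most $p$ degrees of freedom. For fixed $p$ this is $o_p(1)$ directly from \eqref{eq:uniform-approx-L2}: the bootstrap process lives in a $p$-dimensional space, which supplies the entropy control for free. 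For growing $p$ it becomes the rate condition $\sqrt{p}\,\sigma_n\to0$.

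Two smaller points.
\begin{itemize}
\item Assumption~\ref{ass:complexity}(d) cannot hold as written: $\varphi=A\phi$ with $A\in\mathbb{R}^{7\times 6}$ forces $\mathrm{Var}(\varphi(W;z))$ to have rank at most $6$. What your Step~1 lower bound on $\hat s_r$ and your anti-concentration step actually need is the diagonal condition $\inf_z\min_r\mathrm{Var}(\varphi_r(W;z))>0$. This transfers to $\hat s_r$ via a uniform law of large numbers for $n^{-1}\sum_i\varphi_r(W_i;z)^2$, available from (a)--(c), together with U-R2.
\item Your grid fallback proves something about a different band. A critical value taken from a supremum over a grid $G_n$ calibrates $\sup_{G_n}$, not $\sup_{\mathcal Z}$. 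Returning to the band in the statement would additionally require asymptotic equicontinuity of the $\hat\phi$-based bootstrap process.
\end{itemize}
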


\begin{proof}
The proof follows the same argument as Lemma~\ref{lem:approx}, but with
the supremum over $z$ included. The uniform $L^2$ consistency
\eqref{eq:uniform-approx-L2} ensures that the difference between the
bootstrap statistics based on $\tilde\phi$ and $\phi$ is $o_p(1)$ uniformly
in $z$, so Slutsky's theorem applies to the uniform max statistic.
\end{proof}

\section{Low Dimensional synthetic data generating process}
\label{app:dgp-low-dim}

This section provides complete details of the synthetic structural causal model used in our experiments, adapted from \citet{li2022probabilities}.

\subsection{Structural Causal Model}
\label{app:dgp:scm}

We consider a structural causal model with binary treatment $X \in \{0,1\}$, binary outcome $Y \in \{0,1\}$, and a $d$-dimensional binary covariate vector $Z = (Z_1, \ldots, Z_d) \in \{0,1\}^d$ with $d = 20$.

\paragraph{Exogenous Variables.}
The model includes exogenous variables $e = (e_X, e_Y, e_{Z_1}, \ldots, e_{Z_d})$, each drawn independently from Bernoulli distributions:
\begin{align}
e_{Z_j} &\sim \text{Bernoulli}(\pi_{Z_j}), \quad j = 1, \ldots, d, \\
e_X &\sim \text{Bernoulli}(\pi_X), \quad e_Y \sim \text{Bernoulli}(\pi_Y),
\end{align}
where $\pi_{Z_j}, \pi_X, \pi_Y \in (0,1)$ are fixed parameters.

\paragraph{Structural Equations.}
The covariates are determined directly by their exogenous counterparts:
\begin{equation}
Z_j = e_{Z_j}, \quad j = 1, \ldots, d.
\end{equation}

Treatment assignment depends on covariates through a linear index:
\begin{equation}
X = \mathbf{1}\left\{ M_X(Z) + e_X > 0.5 \right\},
\end{equation}
where $M_X(Z) = \sum_{j=1}^{d} \alpha_j Z_j$ is a linear combination of covariates with coefficients $\alpha = (\alpha_1, \ldots, \alpha_d)^\top$.

The outcome is generated via:
\begin{equation}
Y = f_Y(X, M_Y(Z), e_Y),
\end{equation}
where $M_Y(Z) = \sum_{j=1}^{d} \beta_j Z_j$ with coefficients $\beta = (\beta_1, \ldots, \beta_d)^\top$, and
\begin{equation}
f_Y(x, m, e) = \mathbf{1}\left\{ 0 < Cx + m + e < 1 \right\} + \mathbf{1}\left\{ 1 < Cx + m + e < 2 \right\},
\end{equation}
for a constant $C \in \mathbb{R}$ that controls treatment effect magnitude.

\subsection{Computing Ground-Truth Quantities}
\label{app:dgp:ground_truth}

Because the structural equations and exogenous distributions are fully specified, all causal quantities can be computed exactly by enumeration over the $2^{d+2}$ configurations of exogenous variables.

\paragraph{Interventional Probabilities.}
The interventional success probabilities conditional on covariates are:
\begin{align}
\mu_1(z) &:= P(Y = 1 \mid \text{do}(X = 1), Z = z) = \sum_{e_Y \in \{0,1\}} P(e_Y) \cdot f_Y(1, M_Y(z), e_Y), \\
\mu_0(z) &:= P(Y = 1 \mid \text{do}(X = 0), Z = z) = \sum_{e_Y \in \{0,1\}} P(e_Y) \cdot f_Y(0, M_Y(z), e_Y).
\end{align}
Note that under intervention, $X$ is set exogenously and does not depend on $(e_X, Z)$.

\paragraph{Observational Joint Probabilities.}
The observational joint probabilities conditional on covariates are:
\begin{equation}
p_{xy}(z) := P(X = x, Y = y \mid Z = z), \quad x, y \in \{0,1\}.
\end{equation}
Since $Z = z$ is observed, we condition on the event $\{e_{Z_j} = z_j, j = 1, \ldots, d\}$ and marginalize over $(e_X, e_Y)$:
\begin{equation}
p_{11}(z) = \sum_{e_X, e_Y} P(e_X) P(e_Y) \cdot \mathbf{1}\{f_X(z, e_X) = 1\} \cdot \mathbf{1}\{f_Y(f_X(z, e_X), M_Y(z), e_Y) = 1\},
\end{equation}
where $f_X(z, e_X) = \mathbf{1}\{M_X(z) + e_X > 0.5\}$.

The remaining joint probabilities follow analogously:
\begin{align}
p_{10}(z) &= \sum_{e_X, e_Y} P(e_X) P(e_Y) \cdot \mathbf{1}\{f_X(z, e_X) = 1\} \cdot \mathbf{1}\{f_Y(f_X(z, e_X), M_Y(z), e_Y) = 0\}, \\
p_{01}(z) &= \sum_{e_X, e_Y} P(e_X) P(e_Y) \cdot \mathbf{1}\{f_X(z, e_X) = 0\} \cdot \mathbf{1}\{f_Y(f_X(z, e_X), M_Y(z), e_Y) = 1\}, \\
p_{00}(z) &= \sum_{e_X, e_Y} P(e_X) P(e_Y) \cdot \mathbf{1}\{f_X(z, e_X) = 0\} \cdot \mathbf{1}\{f_Y(f_X(z, e_X), M_Y(z), e_Y) = 0\}.
\end{align}

\paragraph{True PNS.}
The true conditional PNS is defined as:
\begin{equation}
\text{PNS}(z) := P(Y_1 = 1, Y_0 = 0 \mid Z = z),
\end{equation}
where $Y_x := f_Y(x, M_Y(z), e_Y)$ denotes the potential outcome under intervention $\text{do}(X = x)$.

Since both potential outcomes are deterministic functions of $e_Y$ given $z$, we have:
\begin{equation}
\text{PNS}(z) = \sum_{e_Y \in \{0,1\}} P(e_Y) \cdot \mathbf{1}\{f_Y(1, M_Y(z), e_Y) = 1\} \cdot \mathbf{1}\{f_Y(0, M_Y(z), e_Y) = 0\}.
\end{equation}

\paragraph{True Tian--Pearl Bounds.}
Given $\mu_1(z), \mu_0(z)$ and $p_{xy}(z)$, the sharp lower and upper bounds are:
\begin{align}
\text{PNS}_L(z) &= \max\left\{0,\, \mu_1(z) - \mu_0(z),\, p_{11}(z) + p_{01}(z) - \mu_0(z),\, \mu_1(z) - p_{11}(z) - p_{01}(z)\right\}, \\
\text{PNS}_U(z) &= \min\left\{\mu_1(z),\, 1 - \mu_0(z),\, p_{11}(z) + p_{00}(z),\, \mu_1(z) - \mu_0(z) + p_{10}(z) + p_{01}(z)\right\}.
\end{align}
These bounds satisfy $\text{PNS}_L(z) \leq \text{PNS}(z) \leq \text{PNS}_U(z)$ with equality when the bounds are tight.

\subsection{Hidden Confounders: Marginal Computation}
\label{app:dgp:marginal}

To induce confounding and test robustness to unobserved heterogeneity, we partition the covariate vector as $Z = (Z^{\text{obs}}, Z^{\text{hid}})$, where:
\begin{itemize}
    \item $Z^{\text{obs}} = (Z_1, \ldots, Z_{d-5}) \in \{0,1\}^{d-5}$ are the observed covariates,
    \item $Z^{\text{hid}} = (Z_{d-4}, \ldots, Z_d) \in \{0,1\}^5$ are the hidden confounders.
\end{itemize}
The learner observes only $Z^{\text{obs}}$, while the true causal quantities depend on the full vector $Z$.

\paragraph{Marginal Interventional Probabilities.}
The marginal interventional probabilities given only observed covariates are obtained by averaging over the distribution of hidden confounders:
\begin{equation}
\mu_x(z^{\text{obs}}) := P(Y = 1 \mid \text{do}(X = x), Z^{\text{obs}} = z^{\text{obs}}) = \sum_{z^{\text{hid}} \in \{0,1\}^5} P(Z^{\text{hid}} = z^{\text{hid}}) \cdot \mu_x(z^{\text{obs}}, z^{\text{hid}}),
\end{equation}
where $z = (z^{\text{obs}}, z^{\text{hid}})$ and the marginal distribution of hidden confounders factors as:
\begin{equation}
P(Z^{\text{hid}} = z^{\text{hid}}) = \prod_{j=d-4}^{d} P(Z_j = z_j^{\text{hid}}) = \prod_{j=d-4}^{d} \pi_{Z_j}^{z_j^{\text{hid}}} (1 - \pi_{Z_j})^{1 - z_j^{\text{hid}}}.
\end{equation}

\paragraph{Marginal Observational Joint Probabilities.}
Similarly, the marginal observational probabilities are:
\begin{equation}
p_{xy}(z^{\text{obs}}) := P(X = x, Y = y \mid Z^{\text{obs}} = z^{\text{obs}}) = \sum_{z^{\text{hid}} \in \{0,1\}^5} P(Z^{\text{hid}} = z^{\text{hid}} \mid Z^{\text{obs}} = z^{\text{obs}}) \cdot p_{xy}(z^{\text{obs}}, z^{\text{hid}}).
\end{equation}
Since the components of $Z$ are independent (each $Z_j = e_{Z_j}$ with independent $e_{Z_j}$), conditioning on $Z^{\text{obs}}$ does not affect the distribution of $Z^{\text{hid}}$:
\begin{equation}
P(Z^{\text{hid}} = z^{\text{hid}} \mid Z^{\text{obs}} = z^{\text{obs}}) = P(Z^{\text{hid}} = z^{\text{hid}}) = \prod_{j=d-4}^{d} \pi_{Z_j}^{z_j^{\text{hid}}} (1 - \pi_{Z_j})^{1 - z_j^{\text{hid}}}.
\end{equation}
Thus:
\begin{equation}
p_{xy}(z^{\text{obs}}) = \sum_{z^{\text{hid}} \in \{0,1\}^5} \left(\prod_{j=d-4}^{d} \pi_{Z_j}^{z_j^{\text{hid}}} (1 - \pi_{Z_j})^{1 - z_j^{\text{hid}}}\right) \cdot p_{xy}(z^{\text{obs}}, z^{\text{hid}}).
\end{equation}

\paragraph{Marginal PNS and Bounds.}
The marginal PNS given observed covariates is:
\begin{equation}
\text{PNS}(z^{\text{obs}}) := P(Y_1 = 1, Y_0 = 0 \mid Z^{\text{obs}} = z^{\text{obs}}) = \sum_{z^{\text{hid}} \in \{0,1\}^5} P(Z^{\text{hid}} = z^{\text{hid}}) \cdot \text{PNS}(z^{\text{obs}}, z^{\text{hid}}).
\end{equation}
The marginal Tian--Pearl bounds are computed by substituting the marginal quantities into the bound formulas:
\begin{align}
\text{PNS}_L(z^{\text{obs}}) &= \max\Big\{0,\, \mu_1(z^{\text{obs}}) - \mu_0(z^{\text{obs}}),\, p_{11}(z^{\text{obs}}) + p_{01}(z^{\text{obs}}) - \mu_0(z^{\text{obs}}), \nonumber \\
&\qquad\qquad \mu_1(z^{\text{obs}}) - p_{11}(z^{\text{obs}}) - p_{01}(z^{\text{obs}})\Big\}, \\
\text{PNS}_U(z^{\text{obs}}) &= \min\Big\{\mu_1(z^{\text{obs}}),\, 1 - \mu_0(z^{\text{obs}}),\, p_{11}(z^{\text{obs}}) + p_{00}(z^{\text{obs}}), \nonumber \\
&\qquad\qquad \mu_1(z^{\text{obs}}) - \mu_0(z^{\text{obs}}) + p_{10}(z^{\text{obs}}) + p_{01}(z^{\text{obs}})\Big\}.
\end{align}

\begin{remark}
The marginal bounds $[\text{PNS}_L(z^{\text{obs}}), \text{PNS}_U(z^{\text{obs}})]$ computed from marginalized quantities are generally wider than the average of the conditional bounds $\mathbb{E}_{Z^{\text{hid}}}[\text{PNS}_L(z^{\text{obs}}, Z^{\text{hid}})]$ and $\mathbb{E}_{Z^{\text{hid}}}[\text{PNS}_U(z^{\text{obs}}, Z^{\text{hid}})]$. This widening reflects the loss of identifying information when confounders are unobserved. However, the marginal bounds remain sharp for the marginal estimand $\text{PNS}(z^{\text{obs}})$.
\end{remark}

\subsection{Explicit Computation via Enumeration}
\label{app:dgp:enumeration}

For computational purposes, we provide explicit formulas that enumerate over the $2^2 = 4$ configurations of $(e_X, e_Y)$.

\paragraph{Interventional Probabilities (Fixed $z$).}
\begin{align}
\mu_1(z) &= \pi_Y \cdot f_Y(1, M_Y(z), 1) + (1 - \pi_Y) \cdot f_Y(1, M_Y(z), 0), \\
\mu_0(z) &= \pi_Y \cdot f_Y(0, M_Y(z), 1) + (1 - \pi_Y) \cdot f_Y(0, M_Y(z), 0).
\end{align}

\paragraph{Observational Probabilities (Fixed $z$).}
Let $x(z, e_X) = \mathbf{1}\{M_X(z) + e_X > 0.5\}$ denote the realized treatment under exogenous $e_X$. Then:
\begin{equation}
p_{11}(z) = \sum_{e_X \in \{0,1\}} \sum_{e_Y \in \{0,1\}} \pi_X^{e_X}(1-\pi_X)^{1-e_X} \pi_Y^{e_Y}(1-\pi_Y)^{1-e_Y} \cdot \mathbf{1}\{x(z, e_X) = 1\} \cdot \mathbf{1}\{f_Y(x(z, e_X), M_Y(z), e_Y) = 1\}.
\end{equation}

\paragraph{True PNS (Fixed $z$).}
\begin{equation}
\text{PNS}(z) = \pi_Y \cdot \mathbf{1}\{f_Y(1, M_Y(z), 1) = 1, f_Y(0, M_Y(z), 1) = 0\} + (1-\pi_Y) \cdot \mathbf{1}\{f_Y(1, M_Y(z), 0) = 1, f_Y(0, M_Y(z), 0) = 0\}.
\end{equation}

\paragraph{Marginalization (Hidden last 5 dimensions).}
When $Z^{\text{hid}} = (Z_{16}, \ldots, Z_{20})$ is hidden, each marginal quantity is a weighted sum over $2^5 = 32$ configurations:
\begin{equation}
\mu_x(z^{\text{obs}}) = \sum_{z^{\text{hid}} \in \{0,1\}^5} w(z^{\text{hid}}) \cdot \mu_x(z^{\text{obs}}, z^{\text{hid}}),
\end{equation}
where $w(z^{\text{hid}}) = \prod_{j=16}^{20} \pi_{Z_j}^{z_j} (1-\pi_{Z_j})^{1-z_j}$.
The marginal observational probabilities and PNS follow the same pattern.

\subsection{Model Parameters}
\label{app:dgp:params}

The specific parameter values for the two models from \citet{li2022probabilities} are provided below.

\subsubsection{Model 1 Parameters}

\paragraph{Exogenous distribution parameters.}
\begin{table}[h]
\centering
\begin{tabular}{cc|cc|cc}
\toprule
Variable & $\pi$ & Variable & $\pi$ & Variable & $\pi$ \\
\midrule
$e_{Z_1}$ & 0.3529 & $e_{Z_8}$ & 0.2208 & $e_{Z_{15}}$ & 0.0850 \\
$e_{Z_2}$ & 0.4610 & $e_{Z_9}$ & 0.6177 & $e_{Z_{16}}$ & 0.6454 \\
$e_{Z_3}$ & 0.3317 & $e_{Z_{10}}$ & 0.9820 & $e_{Z_{17}}$ & 0.8638 \\
$e_{Z_4}$ & 0.8855 & $e_{Z_{11}}$ & 0.1420 & $e_{Z_{18}}$ & 0.4605 \\
$e_{Z_5}$ & 0.0170 & $e_{Z_{12}}$ & 0.8336 & $e_{Z_{19}}$ & 0.3140 \\
$e_{Z_6}$ & 0.3808 & $e_{Z_{13}}$ & 0.8829 & $e_{Z_{20}}$ & 0.6859 \\
$e_{Z_7}$ & 0.0281 & $e_{Z_{14}}$ & 0.5421 & & \\
\midrule
$e_X$ & 0.6017 & $e_Y$ & 0.4977 & & \\
\bottomrule
\end{tabular}
\end{table}

\paragraph{Treatment effect constant.} $C = -0.7795$

\vspace{0.5cm}
\noindent
\begin{minipage}[t]{0.48\textwidth}
  \paragraph{Treatment assignment coefficients $\alpha$.}
  \begin{equation*}
  \begin{pmatrix}
  0.2592 \\ -0.6581 \\ -0.7503 \\ 0.1629 \\ 0.6520 \\ -0.0893 \\ 0.4215 \\ -0.4431 \\ 0.8026 \\ -0.2257 \\
  0.7166 \\ 0.0651 \\ -0.2207 \\ 0.1564 \\ -0.5069 \\ -0.7071 \\ 0.4188 \\ -0.0822 \\ 0.7693 \\ -0.5116
  \end{pmatrix}
  \end{equation*}
\end{minipage}
\hfill
\begin{minipage}[t]{0.48\textwidth}
  \paragraph{Outcome coefficients $\beta$.}
  \begin{equation*}
  \begin{pmatrix}
  -0.7929 \\ 0.7600 \\ 0.5544 \\ 0.5040 \\ -0.5272 \\ 0.3786 \\ 0.2693 \\ 0.6716 \\ 0.3960 \\ 0.3252 \\
  0.6578 \\ 0.8017 \\ 0.0908 \\ -0.0714 \\ -0.0691 \\ -0.2226 \\ -0.8484 \\ -0.5843 \\ -0.3249 \\ 0.6256
  \end{pmatrix}
  \end{equation*}
\end{minipage}

\section{High dimensional data generating process}
\label{app:dgp-high-dim}

This section describes the data generating process (DGP) used to construct
the observational and experimental datasets in our experiments.
The DGP is fully parametric, analytically tractable, and guarantees
validity of the Tian--Pearl bounds by construction \citep{tian2000probabilities}.

\subsection{Observed Covariates and Latent Confounders}

Let $\bar{Z} \in \mathbb{R}^d$ denote observed covariates.
In all experiments, the covariates $\bar{Z}$ are taken directly from the
ACIC 2019 benchmark dataset and are treated as fixed and externally provided.
These covariates are the only variables made available to learning algorithms.

We introduce latent confounders
\[
H = (H_1,\dots,H_k) \in \{0,1\}^k,
\]
with independent coordinates
\[
H_j \sim \mathrm{Bernoulli}(\pi_j), \qquad j=1,\dots,k,
\]
for fixed parameters $\pi_j \in (0,1)$.
The latent variables $H$ are used exclusively for data generation and are
never observed or used for learning.
We write $Z = (\bar{Z},H)$ for the full covariate vector.

\subsection{Treatment Assignment}

The treatment variable is denoted by $X \in \{0,1\}$.
Treatment is generated according to
\[
X \mid Z \sim \mathrm{Bernoulli}\!\left(p_X(Z)\right),
\]
where
\begin{equation}
\label{eq:treatment-model}
p_X(Z)
=
\mathrm{clip}\!\left(
p(\bar{Z}) + \gamma \sum_{j=1}^k \alpha_j (H_j - \pi_j),
\;\varepsilon,\;1-\varepsilon
\right).
\end{equation}
Here $p(\bar{Z}) \in (0,1)$ is a fixed function of the observed covariates
$\bar{Z}$,
$\alpha \in \mathbb{R}^k$ satisfies $\sum_{j=1}^k |\alpha_j| \le 1$,
$\gamma > 0$ controls the strength of unobserved confounding,
and $\varepsilon > 0$ prevents degenerate probabilities.

Since $\mathbb{E}[H_j - \pi_j]=0$, the marginal treatment probability satisfies
\[
\mathbb{P}(X=1 \mid \bar{Z}) = p(\bar{Z}),
\]
while treatment remains statistically dependent on $H$ given $\bar{Z}$.
As a result, treatment assignment is confounded when conditioning only on
$\bar{Z}$.

\subsection{Outcome Model}

The outcome variable is denoted by $Y \in \{0,1\}$.
The outcome is generated as
\[
Y \mid (Z,X=x) \sim \mathrm{Bernoulli}\!\left(p_Y(x;Z)\right),
\]
with
\begin{equation}
\label{eq:outcome-model}
p_Y(x;Z)
=
\sigma\!\Big(
a_x
+ \langle \beta_x, \bar{Z} \rangle
+ \sum_{j=1}^k c_{x,j}(H_j - \pi_j)
+ \sum_{j<\ell} d_{x,j\ell}(H_j - \pi_j)(H_\ell - \pi_\ell)
+ \sum_{j=1}^k r_{x,j}(H_j - \pi_j)\langle v_{x,j}, \bar{Z} \rangle
\Big),
\end{equation}
where $\sigma(\cdot)$ denotes the logistic sigmoid.
All coefficients are fixed and bounded.
The outcome thus depends nonlinearly on both observed covariates and latent
confounders, but only $\bar{Z}$ is available to learning algorithms.

\subsection{Observational and Experimental Regimes}

\paragraph{Observational data.}
In the observational regime,
\[
X \mid Z \sim \mathrm{Bernoulli}(p_X(Z)),
\qquad
Y \mid (Z,X) \sim \mathrm{Bernoulli}(p_Y(X;Z)).
\]
Only $(\bar{Z},X,Y)$ are observed.

\paragraph{Experimental data.}
In the experimental regime, treatment is randomized independently of $Z$:
\[
X \sim \mathrm{Bernoulli}(1/2),
\qquad
Y \mid (Z,X) \sim \mathrm{Bernoulli}(p_Y(X;Z)).
\]
Thus,
\[
\mathbb{P}(Y=1 \mid \mathrm{do}(X=x), Z) = p_Y(x;Z),
\]
while only $(\bar{Z},X,Y)$ are observed in the experimental dataset.

\subsection{Analytical Marginalization over Latent Variables}

Because $H$ has finite support, all probabilities conditional on $\bar{Z}$
are computed exactly by marginalization.
For any function $f(Z)$,
\[
\mathbb{E}[f(Z)\mid \bar{Z}]
=
\sum_{h \in \{0,1\}^k}
f(\bar{Z},h)
\prod_{j=1}^k \pi_j^{h_j}(1-\pi_j)^{1-h_j}.
\]

In particular,
\[
\mathbb{P}(Y=1 \mid \mathrm{do}(X=x), \bar{Z})
=
\sum_{h} p_Y(x;\bar{Z},h)\,\mathbb{P}(H=h),
\]
and the observational joint probabilities satisfy
\[
\mathbb{P}(X=x,Y=y \mid \bar{Z})
=
\sum_{h}
\mathbb{P}(X=x \mid \bar{Z},h)\,
\mathbb{P}(Y=y \mid \bar{Z},h,X=x)\,
\mathbb{P}(H=h).
\]

\subsection{Validity of Tian--Pearl Bounds}

For any fixed $Z$,
\[
\mathbb{P}(X=x,Y=y \mid Z)
=
\mathbb{P}(X=x \mid Z)\,
\mathbb{P}(Y=y \mid Z,X=x),
\]
which implies
\[
\mathbb{P}(X=x,Y=1 \mid Z)
\le
\mathbb{P}(Y=1 \mid \mathrm{do}(X=x), Z)
\le
1 - \mathbb{P}(X=x,Y=0 \mid Z).
\]
Marginalizing over $H$ preserves these inequalities, yielding the
Tian--Pearl bounds conditional on $\bar{Z}$.
Thus, all generated data satisfy the required causal constraints by construction.

\subsection{Theoretical Probability of Necessity and Sufficiency}

The DGP identifies the marginal interventional probabilities
$p_Y(x;Z)=\mathbb{P}(Y=1\mid Z,X=x)$.
To define the joint distribution of the potential outcomes $(Y(0),Y(1))$
and compute the theoretical PNS, we introduce a shared latent threshold.

Let $\eta \sim \mathrm{Unif}(0,1)$ be independent of $(\bar{Z},H)$ and define
\[
Y(x) = \mathbbm{1}\{\eta \le p_Y(x;Z)\}, \qquad x \in \{0,1\}.
\]
Under this coupling,
\[
\mathrm{PNS}(Z)
=
\mathbb{P}\big(Y(1)=1,\;Y(0)=0 \mid Z\big)
=
\big[p_Y(1;Z)-p_Y(0;Z)\big]_+,
\]
where $[a]_+ := \max\{a,0\}$.
The conditional PNS given observed covariates is
\[
\mathrm{PNS}(\bar{Z})
=
\sum_{h \in \{0,1\}^k}
\big[p_Y(1;\bar{Z},h)-p_Y(0;\bar{Z},h)\big]_+ \,
\mathbb{P}(H=h).
\]

\subsection{Confounding and Overlap}

The DGP is designed to induce unobserved confounding in the observational regime.
Certain latent components of $Z$ affect both treatment assignment and the outcome,
so that
\[
\{Y(0),Y(1)\} \not\!\perp\!\!\!\perp X \mid \bar{Z},
\qquad
\{Y(0),Y(1)\} \perp\!\!\!\perp X \mid Z.
\]
As a result, observational estimates based only on $\bar{Z}$ are biased, and
experimental data are required to identify interventional quantities.
The treatment assignment mechanism further produces regions of the covariate
space with highly imbalanced treatment probabilities.
While treatment probabilities remain strictly bounded away from zero and one,
these near-violations of overlap create challenging finite-sample behavior,
reflecting difficulties commonly encountered in applied causal inference.

\section{Experimental Details}
\label{app:exp-details}

This appendix provides complete details of the
network architectures, training procedures, and
evaluation protocols used in our experiments.

\subsection{PseudoCode}

 \begin{algorithm2e}[htbp]
    \caption{Precision-Corrected PNS Bounds}
    \label{alg:pns_bounds}
    \small % Retaining your small font size
    
    % Define Input/Output keywords to match your style
    \SetKwInOut{Input}{Require}
    \SetKwInOut{Output}{Ensure}
    
    \Input{$\mathcal{D}_{\text{obs}}$, $\mathcal{D}_{\text{exp}}$, $z$, $M$, $1-\alpha$}
    \Output{$[\hat{L}_\alpha(z), \hat{U}_\alpha(z)]$}
    
    % \nonl suppresses the line number for this specific line (like \Statex)
     \textbf{Train \& Sample:}\;
    Train ENN $f_\theta(z, \zeta)$\;
    \For{$m = 1, \ldots, M$}{
        $\zeta_m \sim P_\zeta$\;
        $(p_{xy}^{(m)}, \mu_x^{(m)}) \gets f_\theta(z, \zeta_m)$\;
        Compute $\boldsymbol{\ell}^{(m)}(z)$, $\mathbf{u}^{(m)}(z)$\;
    }
    
     \textbf{Statistics:}\;
    $\bar{\ell}_j, s_{\ell_j} \gets$ mean, std of $\{\ell_j^{(m)}\}$\;
    $\bar{u}_m, s_{u_m} \gets$ mean, std of $\{u_m^{(m)}\}$\;

     \textbf{Critical Values:}\;
    $\tilde{W}^{L,(m)} \gets \max_j \frac{\ell_j^{(m)} - \bar{\ell}_j}{s_{\ell_j}}$\;
    $\kappa_\alpha^L \gets (1\!-\!\alpha)$-quantile of $\{\tilde{W}^{L,(m)}\}$\;
    $\kappa_\alpha^U \gets$ (analogous)\;

     \textbf{Output:}\;
    $\hat{L}_\alpha \gets \max_j [\bar{\ell}_j - \kappa_\alpha^L s_{\ell_j}]$\;
    $\hat{U}_\alpha \gets \min_m [\bar{u}_m + \kappa_\alpha^U s_{u_m}]$\;
    \KwRet $[\hat{L}_\alpha(z), \hat{U}_\alpha(z)]$\;
    
    \end{algorithm2e}
\subsection{Network Architecture}
\label{app:architecture}

\paragraph{Base model.}
All methods share a common
\texttt{AnchoredPNSModel} consisting of a
backbone MLP and three output heads.  The
backbone is a depth-3 MLP with ReLU activations
and uniform hidden width:
\[
  \phi_\theta(z) \;=\;
    W_4 \, \sigma\!\bigl(
      W_3 \, \sigma\!\bigl(
        W_2 \, \sigma(W_1 z + b_1) + b_2
      \bigr) + b_3
    \bigr) + b_4
    \;\in\; \mathbb{R}^{128},
\]
with layer dimensions
$d_{\mathrm{in}} \to 128 \to 128 \to 128
\to 128$ (three hidden layers of width~128
plus a linear output layer), where $\sigma$
denotes ReLU.

\paragraph{Output heads.}
The joint probability head applies a linear
projection followed by softmax:
\[
  (p_{00}, p_{01}, p_{10}, p_{11})(z) \;=\;
    \mathrm{softmax}\!\bigl(
      W_{\mathrm{joint}} \, \phi_\theta(z)
      + b_{\mathrm{joint}}
    \bigr)
    \;\in\; \Delta^3.
\]
Two scalar auxiliary heads output unconstrained
logits
$\delta_x(z) = w_x^\top \phi_\theta(z) + c_x$
for $x \in \{0,1\}$, which are transformed via
the Tian--Pearl constraint-enforcing
reparameterization:
\begin{align}
  \mu_0(z) &= p_{01}(z)
    + \bigl(1 - p_{00}(z) - p_{01}(z)\bigr)
      \cdot \sigma_{\mathrm{s}}(\delta_0(z)),
    \label{eq:mu0} \\
  \mu_1(z) &= p_{11}(z)
    + \bigl(1 - p_{10}(z) - p_{11}(z)\bigr)
      \cdot \sigma_{\mathrm{s}}(\delta_1(z)),
    \label{eq:mu1}
\end{align}
where $\sigma_{\mathrm{s}}$ denotes the sigmoid
function.  This ensures
$\mu_x(z) = P(Y{=}1 \mid \mathrm{do}(X{=}x),
Z{=}z)$ satisfies the Tian--Pearl bounds by
construction.

\paragraph{ENN hypermodel.}
For the epistemic neural network (ENN), we use a
hypermodel~\citep{osband2023epistemic} that
generates the \emph{full parameter vector}
$\theta$ of the base model from an epistemic
index $\zeta \sim \mathcal{N}(0, I_{16})$:
\[
  \theta(\zeta) = h_\psi(\zeta),
  \qquad
  h_\psi \colon \mathbb{R}^{16}
    \to \mathbb{R}^{|\theta|},
\]
where $h_\psi$ is an MLP with two hidden layers
of width~64 and ReLU activations.
An additive prior network with the same
hypermodel architecture but \emph{frozen}
parameters and hidden width~32 provides a fixed
epistemic prior, scaled by
$\alpha_{\mathrm{prior}} = 1.0$.

\subsection{Training}
\label{app:training}

\paragraph{Loss function.}
The model output is converted to 8 log-logits:
$[\log p_{00}, \log p_{01}, \log p_{10},
  \log p_{11},
  \log(1{-}\mu_0), \log \mu_0,
  \log(1{-}\mu_1), \log \mu_1]$.
The joint probability head is trained on
observational data using cross-entropy:
\[
  \mathcal{L}_{\mathrm{obs}}
  = -\frac{1}{n_{\mathrm{obs}}}
    \sum_{i}
    \sum_{x,y}
    \mathbf{1}[X_i {=} x, Y_i {=} y]
    \,\log p_{xy}(Z_i).
\]
The auxiliary heads are trained on experimental
data using binary cross-entropy:
\[
  \mathcal{L}_{\mathrm{exp}}
  = -\frac{1}{n_{\mathrm{exp}}}
    \sum_{i}
    \bigl[
      Y_i \log \mu_{X_i}(Z_i)
      + (1{-}Y_i) \log(1{-}\mu_{X_i}(Z_i))
    \bigr].
\]
The total loss is
$\mathcal{L} = \mathcal{L}_{\mathrm{obs}}
  + \mathcal{L}_{\mathrm{exp}}$,
with gradient updates from
$\mathcal{L}_{\mathrm{exp}}$ blocked from
propagating through the $p_{xy}$ softmax via
\texttt{stop\_gradient}, so the observational
head is trained only on
$\mathcal{L}_{\mathrm{obs}}$.

\paragraph{ENN training.}
The loss is averaged over epistemic index
samples drawn from the hypermodel's indexer.
We use Adam with learning rate
$3 \times 10^{-3}$ (constant schedule), batch
size $8{,}192$, and train for $800$ epochs.
Input features are standardized (zero mean, unit
variance) using the joint training set
statistics.  The training set is split 80/20
(train/validation), with validation loss
evaluated every 400 epochs.  No early stopping
or weight decay is used.\footnote{%
  The ENN library's \texttt{Experiment.train}
  calls \texttt{optimizer.update(grads,
  opt\_state)} without passing \texttt{params},
  so \texttt{optax.adamw} weight decay is
  silently skipped; regularization must be
  applied through the loss function.}

\paragraph{Multiplier bootstrap training.}
The base \texttt{AnchoredPNSModel} is trained
with Adam (learning rate $10^{-3}$), batch
size~$256$, for $100$ epochs with a 90/10
train/validation split.  Observational and
experimental data are concatenated into a joint
dataset, with per-sample masks determining
which loss terms are active.

\subsection{Hyperparameter Configuration}
\label{app:hyperparams}

Hyperparameters were tuned using only
validation loss and expected calibration error
(ECE) of the predicted probabilities, without
access to ground-truth interventional
probabilities or PNS values.  This reflects the
realistic setting in which the practitioner
observes only the training data and standard
calibration diagnostics.

Tables~\ref{tab:enn-hparams}
and~\ref{tab:mb-hparams} report the final
configurations for each method and dataset.

\begin{table}[h]
\centering
\caption{ENN hypermodel hyperparameters.}
\label{tab:enn-hparams}
\small
\begin{tabular}{lcc}
\toprule
 & \textbf{Synthetic} & \textbf{ACIC} \\
\midrule
\multicolumn{3}{l}{\textit{Architecture}} \\
Model variant
  & \texttt{HyperNet}
  & \texttt{HyperNet} \\
Input dimension $d_{\mathrm{in}}$
  & 15 & 200 \\
Backbone hidden dim
  & 128 & 256 \\
Backbone depth
  & 3 & 3 \\
Epistemic index dim
  & 20 & 18 \\
Hypernetwork hidden
  & $(64, 64)$
  & $(32, 32)$ \\
Prior hypernetwork hidden
  & $(32, 32)$
  & $(32, 32)$ \\
Prior scale $\alpha_{\mathrm{prior}}$
  & 1.0 & 1.25 \\
\midrule
\multicolumn{3}{l}{\textit{Training}} \\
Optimizer
  & Adam & Adam \\
Learning rate
  & $3 \times 10^{-4}$
  & $3 \times 10^{-5}$ \\
LR schedule
  & constant & constant \\
Batch size
  & 8{,}192
  & 8{,}192 \\
Epochs
  & 800 & 4{,}600 \\
Validation split
  & 0.2 & 0.2 \\
L2 regularization
  & 0 & 0 \\
Standardize inputs
  & yes & yes \\
\midrule
\multicolumn{3}{l}{\textit{Inference}} \\
Posterior samples $M$
  & 8{,}000
  & 5{,}000 \\
CLR quantile
  & 0.975 & 0.975 \\
\midrule
\multicolumn{3}{l}{\textit{Data}} \\
$n_{\mathrm{obs}}$
  & 100{,}000
  & 2{,}400{,}000 \\
$n_{\mathrm{exp}}$
  & 50{,}000
  & 1{,}000{,}000 \\
$n_{\mathrm{test}}$
  & 3{,}000 & 2{,}000 \\
Monte Carlo replicates
  & 1000 & 1000 \\
\bottomrule
\end{tabular}
\end{table}

\begin{table}[h]
\centering
\caption{Multiplier bootstrap hyperparameters.}
\label{tab:mb-hparams}
\small
\begin{tabular}{lcc}
\toprule
 & \textbf{Synthetic} & \textbf{ACIC} \\
\midrule
\multicolumn{3}{l}{\textit{Architecture}} \\
Input dimension $d_{\mathrm{in}}$
  & 15 & 200 \\
Backbone hidden dim
  & 128 & 128 \\
Backbone depth
  & 3 & 3 \\
\midrule
\multicolumn{3}{l}{\textit{Training}} \\
Optimizer
  & Adam & Adam \\
Learning rate
  & $10^{-3}$
  & $10^{-3}$ \\
Batch size
  & 256 & 256 \\
Epochs
  & 100 & 100 \\
Validation split
  & 0.1 & 0.1 \\
\midrule
\multicolumn{3}{l}{\textit{Inference}} \\
Bootstrap replicates $B$
  & 1{,}000
  & 1{,}000 \\
Significance level $\alpha$
  & 0.05 & 0.05 \\
Influence mode
  & last-layer
  & last-layer \\
CG iterations
  & 50 & 50 \\
CG damping
  & $10^{-4}$
  & $10^{-4}$ \\
\midrule
\multicolumn{3}{l}{\textit{Data}} \\
$n_{\mathrm{obs}}$
  & 100{,}000
  & 100{,}000 \\
$n_{\mathrm{exp}}$
  & 20{,}000
  & 20{,}000 \\
$n_{\mathrm{test}}$
  & 1{,}000
  & 1{,}000 \\
\bottomrule
\end{tabular}
\end{table}

\subsection{Inference}
\label{app:inference}

\paragraph{ENN posterior inference.}
At test time, for each covariate $z$, we draw
$M = 20$ epistemic indices
$\{\zeta_m\}_{m=1}^{M}$ and compute the model
output for each, obtaining samples of the
observational and interventional probability
estimates.  Standard errors are computed across
posterior samples for each of the four
lower-bound and four upper-bound terms:
\[
  s_{\ell}(j)
  = \sqrt{
    \frac{1}{M-1}
    \sum_{m=1}^{M}
    \bigl(\ell^{(m)}(j) - \bar{\ell}(j)\bigr)^2
  }.
\]

\paragraph{CLR intersection bounds.}
Following
\citet{chernozhukov2013intersection}, we
compute standardized test statistics
\[
  W_z^L = \max_j
    \frac{\ell^{(m)}(j) - \bar{\ell}(j)}
         {s_\ell(j) + \epsilon},
  \qquad
  W_z^U = -\min_j
    \frac{u^{(m)}(j) - \bar{u}(j)}
         {s_u(j) + \epsilon},
\]
with $\epsilon = 10^{-10}$.  Critical values
$\kappa^L$ and $\kappa^U$ are the $97.5\%$
empirical quantiles of $W_z^L$ and $W_z^U$
across posterior samples, respectively.  The
precision-corrected bounds are:
\begin{align}
  \hat{L}(z)
    &= \max_j
      \bigl[\bar{\ell}(j) - \kappa^L
        \cdot s_\ell(j)\bigr],
    \\
  \hat{U}(z)
    &= \min_j
      \bigl[\bar{u}(j) + \kappa^U
        \cdot s_u(j)\bigr].
\end{align}

\paragraph{Multiplier bootstrap inference.}
Influence functions are computed using the
trained model's per-sample gradients and a
conjugate-gradient approximation to the inverse
Hessian (50~CG iterations, damping $10^{-4}$).
At inference, \texttt{stop\_gradient} is
\emph{disabled} so that the full Jacobian of
the bound terms w.r.t.\ all model parameters
is captured.  For each test point,
$B = 1{,}000$ multiplier bootstrap replicates
are drawn:
\[
  T_b^{*,L} = \max_j
    \frac{\xi_b^\top \psi_j^L}
         {\sqrt{n}\,s_j^L},
  \qquad
  \xi_b \sim \mathcal{N}(0, I_n),
\]
where $\psi_j^L$ are per-sample influence
functions for lower-bound term~$j$.  Critical
values $\kappa^L$ and $\kappa^U$ are the
$(1 - \alpha/2)$ percentiles with
$\alpha = 0.05$, applying Bonferroni correction
for joint coverage of the two-sided interval.

\subsection{Evaluation Metrics}
\label{app:metrics}

\paragraph{Coverage.}
Point coverage is
$P(\hat{L}(z) \le \mathrm{PNS}(z)
   \le \hat{U}(z))$,
averaged over test points.  Identified-set
coverage is
$P(\hat{L}(z) \le L^*(z) \;\text{and}\;
   \hat{U}(z) \ge U^*(z))$,
where $[L^*(z), U^*(z)]$ is the true
Tian--Pearl identified set.

\paragraph{Interval score.}
The Winkler interval score is
\[
  \mathrm{IS}_i
  = (u_i - \ell_i)
    + \tfrac{2}{\alpha}(\ell_i - y_i)_+
    + \tfrac{2}{\alpha}(y_i - u_i)_+
\]
with $\alpha = 0.05$.  Lower scores indicate
narrower intervals with better coverage.

\paragraph{Baselines.}
S-learner and T-learner baselines use the same
backbone architecture; S-learner includes
treatment as an additional input feature, while
T-learner trains separate networks on treated
and control subgroups.

\subsection{Computational Resources}
\label{app:compute-res}

Experiments were conducted on NVIDIA A100 GPUs
(40\,GB) using float32 precision.
The implementation uses
JAX~\citep{jax2018github} and
Equinox~\citep{kidger2021equinox}.
% Code is available at \texttt{[URL]}.

\section{Evaluation Metrics}\label{app:metrics-2}

We evaluate the estimated PNS bounds using metrics that capture both calibration
and informativeness.

For each test instance $i$, a method outputs a PNS interval
$[\ell_i, u_i]$, and the true PNS value $y_i$ is known from the underlying SCM.

\paragraph{Constraint violation rate.}
The fraction of test instances for which the estimated observational and
interventional probabilities violate at least one compatibility
constraint.

\paragraph{Coverage.}
The fraction of test instances for which the true PNS value lies within the
estimated interval:
\begin{equation}
    \frac{1}{N}\sum_{i=1}^{N}
    \mathbf{1}\!\left[y_i \in [\ell_i, u_i]\right].
\end{equation}

\paragraph{Mean absolute bound width.}
\begin{equation}
    \frac{1}{N}\sum_{i=1}^{N}(u_i - \ell_i),
\end{equation}
which measures the average sharpness of the bounds.

\paragraph{Interval score.}
A proper scoring rule that jointly penalizes miscoverage and excessive width:
\begin{equation}
    \mathrm{IS}_i = (u_i - \ell_i) + \frac{2}{\alpha}(\ell_i - y_i)_+ + \frac{2}{\alpha}(y_i - u_i)_+
\end{equation}
with $\alpha = 0.05$. Lower values indicate better performance.
Metrics are reported on held-out test data and averaged across dataset
replicates.

\section{Computational Comparison}
\label{app:compute}

The multiplier bootstrap derives standard errors from influence functions, which
require forming and solving a Hessian system of size $O(p^2)$ per test point, where
$p$ is the number of network parameters. This yields per-point inference cost that
scales linearly in $N_\mathrm{test}$ and, for the full-network variant, memory that
grows as $n_\mathrm{train} \times p \times 4$ bytes — a hard wall that makes it
inapplicable at the sample sizes required for high-dimensional settings. EpiNet
replaces this with a single batched forward pass through the epinet, with cost
$O(1)$ in both $n_\mathrm{train}$ and $N_\mathrm{test}$.

\textbf{Experimental protocol.} We benchmark all three uncertainty estimators —
full-network multiplier bootstrap (MB Full), last-layer multiplier bootstrap (MB
Last-Layer), and Causal EpiNet — on the Li et al.\ synthetic DGP at
$n_\mathrm{train} = 25\mathrm{k}$ on a Quadro RTX 8000 (46\,GB VRAM, 2$\times$AMD
EPYC 7502, 377\,GB RAM). We report one-time setup cost, per-point inference time at
varying $N_\mathrm{test}$, and scaling behaviour as $n_\mathrm{train}$ increases.

\textbf{Results.} Table~\ref{tab:compute} reports the full benchmark. At
$n_\mathrm{train} = 25\mathrm{k}$, EpiNet processes all test points in a single
batched call of 612\,ms regardless of $N_\mathrm{test}$, while MB Last-Layer requires
811\,ms \emph{per point}: at 1k test points this reaches $\sim$811\,s, and at 10k
points $\sim$2.3 hours. MB Full costs 6,954\,ms per point ($\sim$19.3 hours at 10k
points). As $n_\mathrm{train}$ grows, MB Last-Layer's per-point cost rises to
1,131\,ms at 200k; MB Full runs out of memory above $n_\mathrm{train} = 50\mathrm{k}$
due to the influence cache requiring simultaneous storage of two copies of the
$n_\mathrm{train} \times p$ matrix. EpiNet's inference cost remains constant at
$\sim$612\,ms across all training sizes. The one-time training cost is higher for
EpiNet (38.4\,s vs.\ 18.0\,s for MB Last-Layer), but this is amortised across all
subsequent test evaluations and is negligible at scale. EpiNet is 1.3$\times$ faster
than MB Last-Layer and 11.4$\times$ faster than MB Full at 25k scale, and is the only
method that remains tractable at the sample sizes used in the ACIC experiments.

\begin{table}[h]
\centering
\caption{Computational benchmark on a Quadro RTX 8000 (46\,GB VRAM) at
$n_\mathrm{train} = 25\mathrm{k}$ unless noted.
$^\dagger$EpiNet processes all test points in a single batched forward pass;
cost is $O(1)$ in $N_\mathrm{test}$.}
\label{tab:compute}
\small
\begin{tabular}{lrrr}
\toprule
& MB Last-Layer & MB Full-Network & Causal EpiNet \\
\midrule
\multicolumn{4}{l}{\textit{One-time setup}} \\
\quad Training (s)          & 13.0   & 12.7          & 36.0  \\
\quad Influence cache (s)   & 1.0    & 6.1           & —     \\
\quad Total (s)             & 18.0   & 22.5          & 38.4  \\
\midrule
\multicolumn{4}{l}{\textit{Inference scaling in $N_\mathrm{test}$}} \\
\quad 1 point               & 811\,ms       & 6{,}954\,ms        & \textbf{612\,ms}$^\dagger$ \\
\quad 1k points             & $\sim$811\,s  & $\sim$6{,}954\,s   & \textbf{612\,ms}$^\dagger$ \\
\quad 10k points            & $\sim$2.3\,hr & $\sim$19.3\,hr     & \textbf{612\,ms}$^\dagger$ \\
\midrule
\multicolumn{4}{l}{\textit{Scaling in $n_\mathrm{train}$}} \\
\quad 50k                   & 811\,ms/pt    & 6{,}954\,ms/pt     & \textbf{612\,ms} \\
\quad 200k                  & 1{,}131\,ms/pt & \textsc{oom}      & \textbf{627\,ms} \\
\quad 1M+                   & min/pt        & \textsc{oom}       & \textbf{$\sim$612\,ms} \\
\bottomrule
\end{tabular}
\end{table}

\section{Sensitivity Analysis to experimental data} \label{app:sensitivity}

The Tian–Pearl framework combines two distinct data sources: $\mu_x(z)$ is identified
from experimental data and $p_{xy}(z)$ from observational data, and both are required
for the sharp PNS bounds. In practice, randomised experimental data is typically
scarcer and more expensive to collect. A practically useful inference procedure must
therefore remain calibrated when experimental data is limited and tighten its
intervals as more becomes available — without sacrificing coverage in either regime.

\textbf{Experimental protocol.} We evaluate Causal EpiNet on the 
\citet{li2022probabilitiescausationadequatesize} synthetic
DGP across two sets of regimes. In the first, we fix $n_\mathrm{obs} = 100\mathrm{k}$
and vary $n_\mathrm{exp} \in \{20\mathrm{k}, 50\mathrm{k}, 80\mathrm{k},
100\mathrm{k}\}$, isolating the effect of experimental sample size while holding
observational precision fixed. In the second, we scale both sources jointly at
$(n_\mathrm{obs}, n_\mathrm{exp}) \in \{(100\mathrm{k}, 100\mathrm{k}),\,
(500\mathrm{k}, 500\mathrm{k}),\, (1\mathrm{M}, 1\mathrm{M})\}$ to assess large-scale
behaviour. All configurations use 1000 Monte Carlo replicates. These sample sizes
exceed the memory limits of the full-network multiplier bootstrap, which runs out of
memory above $n_\mathrm{train} = 50\mathrm{k}$; the sensitivity analysis therefore
operates entirely in a regime where EpiNet is the only tractable inference procedure.

\textbf{Results.} Table~\ref{tab:sensitivity} and Figure report
point coverage, ID set coverage, bound width, and interval score across all regimes.
ID set coverage remains within its Monte Carlo confidence interval of the 0.95 nominal
level across every configuration. As $n_\mathrm{exp}$ increases from 20k to 100k with
$n_\mathrm{obs}$ fixed, bound width decreases monotonically from 0.330 to 0.307 and
interval score from 0.350 to 0.326. This reflects the ENN correctly tracking the
information content of the experimental sample: as $\mu_x(z)$ is estimated more
precisely, epistemic uncertainty over the bound components decreases and the
precision-corrected intervals tighten without losing calibration. A miscalibrated
uncertainty estimator would either widen intervals unnecessarily to maintain coverage
or lose coverage as intervals tighten; neither occurs here. In the jointly scaled
regimes, both metrics stabilise beyond 500k/500k with no detectable improvement at
1M/1M. We make no claim about convergence to the oracle identification region, as the
oracle bound width is not available for this DGP; the stabilisation is an empirical
observation about this estimator at these scales.

\begin{table}[htbp]
\centering
\caption{Sensitivity to experimental and observational sample size on the Li et al.\ synthetic DGP. All results averaged over 700 Monte Carlo replicates. 95\% Monte Carlo confidence intervals in brackets.}
\label{tab:sensitivity}
\small
\begin{tabular}{llcccc}
\toprule
$n_\mathrm{obs}$ & $n_\mathrm{exp}$ & Point Coverage & ID Set Coverage & Bound Width & Interval Score \\
\midrule
\multicolumn{6}{c}{\textit{Fixed $n_\mathrm{obs} = 100\mathrm{k}$, varying $n_\mathrm{exp}$}} \\
100k & 20k  & 0.983 {\scriptsize [0.979, 0.985]} & 0.953 {\scriptsize [0.949, 0.956]} & 0.330 {\scriptsize [0.325, 0.332]} & 0.350 {\scriptsize [0.347, 0.354]} \\
100k & 50k  & 0.982 {\scriptsize [0.979, 0.985]} & 0.956 {\scriptsize [0.953, 0.958]} & 0.318 {\scriptsize [0.316, 0.321]} & 0.336 {\scriptsize [0.333, 0.338]} \\
100k & 80k  & 0.979 {\scriptsize [0.976, 0.983]} & 0.955 {\scriptsize [0.951, 0.959]} & 0.310 {\scriptsize [0.307, 0.313]} & 0.330 {\scriptsize [0.327, 0.334]} \\
100k & 100k & 0.977 {\scriptsize [0.974, 0.981]} & 0.951 {\scriptsize [0.948, 0.955]} & 0.307 {\scriptsize [0.305, 0.310]} & 0.326 {\scriptsize [0.323, 0.329]} \\
\midrule
\multicolumn{6}{c}{\textit{Balanced scaling}} \\
100k  & 100k  & 0.977 {\scriptsize [0.974, 0.981]} & 0.951 {\scriptsize [0.948, 0.955]} & 0.307 {\scriptsize [0.305, 0.310]} & 0.326 {\scriptsize [0.323, 0.329]} \\
500k  & 500k  & 0.977 {\scriptsize [0.974, 0.981]} & 0.954 {\scriptsize [0.950, 0.958]} & 0.307 {\scriptsize [0.305, 0.309]} & 0.326 {\scriptsize [0.322, 0.330]} \\
1M    & 1M    & 0.978 {\scriptsize [0.975, 0.982]} & 0.955 {\scriptsize [0.951, 0.958]} & 0.306 {\scriptsize [0.305, 0.309]} & 0.325 {\scriptsize [0.323, 0.329]} \\
\bottomrule
\end{tabular}
\end{table}

\end{document}